\documentclass[pdflatex,sn-mathphys-num]{sn-jnl}


\usepackage{graphicx}%
\usepackage{multirow}%
\usepackage{amsmath,amssymb,amsfonts}%
\usepackage{amsthm}%
\usepackage{mathrsfs}%
\usepackage[title]{appendix}%
\usepackage{xcolor}%
\usepackage{textcomp}%
\usepackage{manyfoot}%
\usepackage{booktabs}%
\usepackage{algorithm}%
\usepackage{algorithmicx}%
\usepackage{algpseudocode}%
\usepackage{listings}%
\usepackage{tikz}
\usetikzlibrary{arrows}
\usepackage {xfrac}
\usepackage {hyperref}


\theoremstyle{thmstyleone}%
\newtheorem{theorem}{Theorem}
%

\theoremstyle{thmstyletwo}%
\newtheorem{example}{Example}%
\newtheorem{remark}{Remark}%
\newtheorem{corollary}{Corollary}
\theoremstyle{thmstylethree}%
\newtheorem{definition}{Definition}%
\newtheorem{lemma}{Lemma}
\raggedbottom

\begin{document}

\title[Article Title]{Encoding argumentation frameworks to propositional logic systems}


\author*[1]{\fnm{Shuai} \sur{Tang}}\email{TangShuaiMath@outlook.com}

\author*[1]{\fnm{Jiachao} \sur{Wu}}\email{wujiachao@sdnu.edu.cn}

\author[1]{\fnm{Ning} \sur{Zhou}}\email{1679659758@qq.com}

\affil[1]{\orgdiv{School of Mathematics and Statistics}, \orgname{Shandong Normal University}, \orgaddress{\street{Daxue Road}, \city{Jinan}, \postcode{250358}, \state{Shandong}, \country{China}}}



\abstract{This paper generalizes the encoding of argumentation frameworks beyond the classical 2-valued propositional logic system ($PL_2$) to 3-valued propositional logic systems ($PL_3$s) and fuzzy propositional logic systems ($PL_{[0,1]}s$), employing two key encodings: normal encoding ($ec_1$) and regular encoding ($ec_2$). Specifically, via $ec_1$ and $ec_2$, we establish model relationships between Dung's classical semantics (stable and complete semantics) and the encoded semantics associated with Kleene's $PL_3$ and Łukasiewicz's $PL_3$. Through $ec_1$, we also explore connections between Gabbay's real equational semantics and the encoded semantics of $PL_{[0,1]}s$, including showing that Gabbay's $Eq_{\text{max}}^R$ and $Eq_{\text{inverse}}^R$ correspond to the fuzzy encoded semantics of $PL_{[0,1]}^G$ and $PL_{[0,1]}^P$ respectively. Additionally, we propose a new fuzzy encoded semantics ($Eq^L$) associated with Łukasiewicz's $PL_{[0,1]}$ and investigate interactions between complete semantics and fuzzy encoded semantics. This work strengthens the links between argumentation frameworks and propositional logic systems, providing a framework for constructing new argumentation semantics.}

\keywords{argumentation framework, general encoding methodology, propositional logic system, triangular norm}


\pacs[MSC Classification]{68T27, 68T30, 03B70, 03B50, 03B52}

\maketitle
\section{Introduction}
Since Dung came up with argumentation frameworks ($AF$s) \cite{ref 1}, this theory has been a hot topic in the fields of artificial intelligence \cite{ref 2} and law \cite{ref 3}. After Dung-style $AF$s, many variations of $AF$s have been proposed, such as $AF$s with sets of attacking arguments \cite{ref 4, ref 5}, the bipolar $AF$s \cite{ref 6, ref 7, ref 8} with two kinds of relations named the attack relation and the support relation, the higher-level $AF$s \cite{ref 9} with higher-level attack relations, and the bipolar $AF$s with higher-level relations \cite{ref 10}. Furthermore, numerical semantics have been developed in many forms, such as real equational $AF$s \cite{ref 11, ref 12}, fuzzy $AF$s \cite{ref 13, ref 14, ref 15, ref 16}, weighted $AF$s \cite{ref 17, ref 18, ref 19, ref 20, ref 21} and probabilistic $AF$s \cite{ref 22, ref 23, ref 24, ref 25, ref 26}.

A research branch connects $AF$s and logic. Logical $AF$s have been developed in many aspects \cite{ref 27, ref 28}. Besnard and Doutre \cite{ref 29, ref 30} propose the encoding approach which encodes $AF$s to the $2$-valued propositional logic system ($PL_2$). Abstract dialectical frameworks \cite{ref 31, ref 32} express acceptance conditions of each argument as a propositional formula. Cayrol et al. \cite{ref 33} propose extending arguments or attacks as accepted atoms and not accepted atoms and then encoding $AF$s as logical formulas. Arieli and Caminada \cite{ref 34, ref 35} suggest encoding an $AF$ as a quantified Boolean formula whose models coincide with the extensions of the $AF$. The seminal literature \cite{ref 1} first studies connections between $AF$ semantics and logic programming, and the approach of logic programming has been developed in \cite{ref 36, ref 37, ref 38, ref 39}. Moreover, some works \cite{ref 40, ref 41, ref 42} connect $AF$s with modal logic and some works \cite{ref 43, ref 44} translate $AF$s into intuitionistic logic.

Particularly, we emphasize works in \cite{ref 29, ref 12} and claim our motivations. Besnard and Doutre \cite{ref 29} give the approach of model checking which encodes an $AF$ as a propositional formula of the $PL_2$ such that models of the propositional formula are equivalent to extensions of the $AF$, and some other literature \cite{ref 27, ref 28, ref 45} reports this work. However, they only give the encoding approach related to $PL_2$. As a natural expansion, we encode $AF$s to general propositional logic systems ($PLS$s). Among various three-valued propositional logic systems ($PL_3$s), Kleene's $PL_3$ ($PL_3^K$) and Łukasiewicz's $PL_3$ ($PL_3^L$) stand out as particularly prominent due to their significant theoretical importance and widespread practical applications. Thus, we encode $AF$s to $PL_3^K$ and $PL_3^L$, and explore relationships between classical argumentation semantics and encoded semantics associated with $PL_3$s. Motivated by Gabbay's real equational semantics \cite{ref 12}, we encode $AF$s to fuzzy propositional logic systems ($PL_{[0,1]}$s) to derive $[0,1]$-valued semantics. This enables us to systematically analyze relationships between real equational semantics and our fuzzy encoded semantics.

In cases of $3$-valued semantics, we employ the normal encoding $ec_1$ and the regular encoding $ec_2$. It is shown that stable semantics is equivalent to the encoded semantics associated with $PL_3^K$ and $ec_1$, while complete semantics is equivalent to the encoded semantics associated with $PL_3^L$ and $ec_1$. The results are more complex for the regular encoding $ec_2$, but the model relationships are established by the binarization function and the ternarization function. In cases of $[0,1]$-valued semantics, we encode $AF$s to $PL_{[0,1]}$s by $ec_1$. Then we explore the relationship between Gabbay's real equational semantics and fuzzy encoded semantics. We also study relationships between complete semantics and fuzzy encoded semantics by the ternarization function. In addition, we propose a specific fuzzy encoded semantics via $ec_1$ and $PL_{[0,1]}^L$. 

This paper advances the model checking approach through encoding $AF$s to different $PLS$s. Specifically, we:
\begin{itemize}
	\item establish connections between the encoding methodology and Gabbay's real equational approach;
	\item extend the model checking approach associated with $PL_2$ as the general encoding methodology associated with general $PLS$s;
	\item encode $AF$s to $PL_3$s and study the model relationships between Dung's semantics and encoded semantics; 
	\item propose encoded semantics associated with $PL_{[0,1]}$s and investigate their relationships with Gabbay's equational semantics and complete semantics;
	\item provide logical foundations for $AF$ semantics by bridging $AF$s and $PLS$s;
	\item introduce a novel approach for deriving new argumentation semantics.
\end{itemize}

In Section 2, we provide some basic knowledge. In Section 3, we give some essential functions and present the general encoding methodology.  In Section 4 and Section 5, we respectively encode $AF$s to $PL_3$s and $PL_{[0,1]}$s. Section 6 is the conclusion. In the appendices, we also provide formal expressions and proofs of model equivalence between complete semantics and encoded semantics associated with $PL_2$.

\section{Preliminaries}
In this section, we will give some basic knowledge about $AF$s, $PLS$s, the model checking approach and the real equational approach.
\subsection{Argumentation frameworks and argumentation semantics}
\begin{definition}
	An \textit{argumentation framework} $(AF)$ is a pair $(A, R)$ where $A$ is a finite set of arguments and $R \subseteq A \times A$ is the attack relation.
\end{definition}

\begin{definition}
	The set of all $AF$s is denoted by $\mathcal{AF}$, and its \textit{universe} $\mathcal{A}$ is the set of all arguments across all $AF$s.
\end{definition}

\begin{definition}
	$\mathcal{A}$ is called \textit{finitary} if $A\subseteq \mathcal{A}$ is finite for each $AF=(A, R)$.
\end{definition}
We only study the case of finitary $\mathcal{A}$ in this paper. We will give some necessary background for argumentation semantics below.
\begin{definition}
	The \textit{extension-based semantics} of $\mathcal{AF}$ is a function $\mathfrak{ES}: \mathcal{AF} \to 2^{2^\mathcal{A}}$. For an $AF\in \mathcal{AF}$, each member in $\mathfrak{ES}(AF)$ is called an extension of the $AF$ under semantics $\mathfrak{ES}$.
\end{definition}
According to \cite{ref 1}, some classical semantics have been defined. For a given $AF=(A, R)$ and a set $S\subseteq A$, we say that:
\begin{itemize}
	\item $S$ is \textit{conflict-free} iff $\nexists a, b \in S$ such that $(a, b) \in R$;
	\item $a \in A$ is \textit{acceptable w.r.t. $S$} iff $\forall b \in A$ with $(b, a) \in R$, $\exists c \in S$ with $(c, b) \in R$;
	\item $S$ is \textit{admissible} iff it is conflict-free and $\forall a \in S$, $a$ is acceptable w.r.t. $S$;
	\item $S$ is a \textit{complete extension} iff it is admissible and $\forall a \in A$: if $a$ is acceptable w.r.t. $S$ then $a \in S$;
	\item A \textit{preferred extension} is a maximal (w.r.t. $\subseteq$) complete extension of the $AF$;
	\item A \textit{grounded extension} is the least (w.r.t. $\subseteq$) complete extension of the $AF$;
	\item A \textit{stable extension} is a complete extension such that $\forall a \in A \setminus S$, $\exists b \in S$ with $(b, a) \in R$.
\end{itemize}
Complete semantics is fundamental among Dung's classical semantics, because it can induce preferred semantics, grounded semantics and stable semantics.

\begin{definition}
Let $AF=(A, R)$ be an argumentation framework and $L$ be a set of labels. A \textit{labelling} of the $AF$ is a function $lab: A \to L$. The set of all labellings of the $AF$ is denoted by $L^A$.
\end{definition}

\begin{definition}
The \textit{labelling-based semantics} of $\mathcal{AF}$ is a function $\mathfrak{LS}: \mathcal{AF} \to 2^{\mathcal{LAB}}$, where $\mathcal{LAB}$ is the set of all labellings of all $AF$s.
\end{definition}

In the case of $L = \{in, out, und\}$, the authors of \cite{ref 46, ref 47} introduce concepts of legal labels and proves that extension-based semantics and labelling-based semantics are equivalent via Lab2Ext and Ext2Lab. Moreover, adopting the same approach as \cite{ref 48}, i.e., by setting $in = 1$, $out = 0$, and $und = \frac{1}{2}$, we use numerical labels $L = \{1, 0, \frac{1}{2}\}$ instead of $L = \{in, out, und\}$ in this paper. Similar to the complete extension, a complete labelling provides an equivalent characterization of complete semantics as follows.
\begin{definition}\label{defn7}
	For a given $AF = (A, R)$, a \textit{complete labelling} is a function $lab: A \to \{1, 0, \frac{1}{2}\}$ satisfying for all $a \in A$:
	\begin{equation*}
		lab(a) = 
		\begin{cases} 
			1 & \text{iff } \nexists (b,a) \in R \text{ or } \forall b_i((b_i,a) \in R): lab(b_i) = 0; \\ 
			0 & \text{iff } \exists b_j((b_j,a) \in R): lab(b_j) = 1; \\ 
			\frac{1}{2} & \text{otherwise}.
		\end{cases}
	\end{equation*}
\end{definition}

\subsection{Propositional logic systems ($PLS$s)}
For detailed background on $PLS$s, especially $PL_{[0,1]}$s, see \cite{ref 49, ref 50, ref 51, ref 52, ref 53, ref 54}. We present only essential concepts here.
\begin{definition}
	A \textit{propositional logic system} is a pair $PLS=(\mathcal{L}, \mathcal{S})$ of a \textit{language} (\textit{syntax}) $\mathcal{L}$ and a \textit{structure} (\textit{semantics}) $\mathcal{S}$ specified as follows: 
	\begin{itemize}
		\item The language $\mathcal{L}$ is a pair $(\mathcal{F}, \mathcal{C})$, where the \textit{universe} $\mathcal{F}$ is a finite or countably infinite set of atomic formulas and $\mathcal{C}$ is a tuple of connectives. 
		\item The structure $\mathcal{S}$ is a pair $(\mathcal{D}, \mathcal{M})$, where the \textit{domain} $\mathcal{D}$ is the set of truth values, and the tuple $\mathcal{M}$ consists of the interpretations of connectives in $\mathcal{C}$.
	\end{itemize}
\end{definition}

The set $\mathcal{F_{PL}}$ of all well-formed formulas (briefly, formulas) is inductively constructed from atomic formulas and connectives. In this paper, for each propositional logic system $PLS$, we let $\mathcal{F} = \mathcal{A}$, $\mathcal{D} \subseteq [0,1]$, and $\mathcal{C} = \{\neg, \wedge, \vee, \rightarrow, \leftrightarrow\}$. Propositional connectives $\neg, \wedge, \vee, \rightarrow$, and $\leftrightarrow$ represent negation, conjunction, disjunction, implication, and biimplication, respectively. These connectives are not usually independent, i.e., one connective may be defined by others in a particular $PLS$. For example, in this paper $\leftrightarrow$ is always defined as $a \leftrightarrow b = (a \rightarrow b) \wedge (b \rightarrow a)$, where $a, b \in \mathcal{F_{PL}}$. 

\begin{definition}
	For a given $PLS$, an \textit{assignment} on $\mathcal{F}_{\varphi}$ is a function $\|\cdot\|: \mathcal{F}_{\varphi} \to \mathcal{D}$, where $\mathcal{F}_{\varphi}$ is the set of all atomic formulas occurring in $\varphi$.
\end{definition}

\begin{remark}
An assignment $\|\cdot\|$ on $\mathcal{F}_{\varphi}$ is extended uniquely to an evaluation of the set of all formulas on $\mathcal{F}_{\varphi}$ via interpretations of connectives in a given $PLS$. Without causing confusion, we also denote the evaluation by $\|\cdot\|$. Note that $\bot$ and $\top$ belong to $\mathcal{F}$ and we have that $\|\top\| = 1$ and $\|\bot\| = 0$ in any $PLS$. Also note that the empty conjunction is $\top$ and the empty disjunction is $\bot$.
\end{remark}

\begin{definition}
	A model of a formula $\phi$ in a given $PLS$ is an assignment $\|\cdot\|$ such that the evaluation of $\phi$ equals 1, i.e., $\|\phi\|=1$.
\end{definition}


In this paper, we employ $PL_2$, $PL_3$s, and $PL_{[0, 1]}$s.  We assume that readers are familiar with the semantic interpretation of these connectives for the $PL_2$; their interpretations for $PL_3$s and $PL_{[0, 1]}$s will be specified below.
\subsubsection{$3$-valued propositional logic systems ($PL_3$s)}
The domain $\mathcal{D}$ of a $3$-valued propositional logic system is $\{0,1,\frac{1}{2}\}$. In this paper we mainly consider two kinds of $PL_3$s, Kleene's (strong) $3$-valued propositional logic system ($PL_3^K$) and {\L}ukasiewicz's $3$-valued propositional logic system ($PL_3^L$). Connectives $\neg$, $\wedge$, and $\vee$ are interpreted as the same meaning in the two $PL_3$s, i.e., for propositional formulas $a$ and $b$, we have
$\|\neg a\|=1-\|a\|$, $\|a\wedge b\|=\min\{\|a\|, \|b\|\}$, and $\|a\vee b\|=\max\{\|a\|, \|b\|\}$, where $\|\cdot\|$ is an evaluation on the formula set.
However, the interpretation of connective $\rightarrow$ is different in $PL_3^K$ and $PL_3^L$.
For $\|a\rightarrow b\|=\|a\|\Rightarrow^K\|b\|$ in $PL_3^K$, the truth-value table of operation $\Rightarrow^K$ is listed in Table \ref{Kleene}. For $\|a\rightarrow b\|=\|a\|\Rightarrow^L\|b\|$ in $PL_3^L$, the truth-value table of operation $\Rightarrow^L$ is listed in Table \ref{Lukasiewicz}.
\begin{table}[h]
	\caption{Truth-value table of $\Rightarrow^K$ in $PL_3^K$}\label{Kleene}%
	\begin{tabular}{@{}llll@{}}
		\toprule
		$\Rightarrow^K$ & 0 & $\frac{1}{2}$ & 1\\
		\midrule
		0 & 1 & 1 & 1 \\
		$\frac{1}{2}$ & $\frac{1}{2}$ & $\frac{1}{2}$ & 1 \\
		1 & 0 & $\frac{1}{2}$ & 1 \\
		\botrule
	\end{tabular}
\end{table}
\begin{table}[h]
	\caption{Truth-value table of $\Rightarrow^L$ in $PL_3^L$}\label{Lukasiewicz}%
	\begin{tabular}{@{}llll@{}}
		\toprule
		$\Rightarrow^L$ & 0 & $\frac{1}{2}$ & 1 \\
		\midrule
		0 & 1 & 1 & 1 \\
		$\frac{1}{2}$ & $\frac{1}{2}$ & 1 & 1 \\
		1 & 0 & $\frac{1}{2}$ & 1 \\
		\botrule
	\end{tabular}
\end{table}
The interpretation of the connective $\leftrightarrow$ in $PL_3^L$ or $PL_3^K$ can be derived from its definition and is thus omitted here.

\subsubsection{Fuzzy propositional logic systems ($PL_{[0,1]}$s)}
The domain $\mathcal{D}$ of a fuzzy propositional logic system is $[0, 1]$. Let us review essential fuzzy operations in $PL_{[0,1]}$. Note that we use $a,b,c\dots$ to denote formulas and use $u,v,w,x,y,z\dots$ to denote numbers from $[0,1]$.

\begin{definition}
	A \textit{negation} $N: [0, 1] \to [0, 1]$ satisfies $N(0) = 1$, $N(1) = 0$, and is non-increasing ($x \leqslant y \Rightarrow N(x) \geqslant N(y)$). A negation is \textit{strict} if it is strictly decreasing and continuous. The \textit{standard negation} is defined by $N(x) = 1 - x$.
\end{definition}

\begin{definition}\label{defn12}
	A \textit{triangular norm} (\textit{t-norm}) $T: [0, 1]^2 \to [0, 1]$ satisfies:
	\begin{itemize}
		\item $T (x, 1) = x$;
		\item $T (x, y) = T (y, x)$;
		\item $T (x, T (y, z)) = T (T (x, y), z)$;
		\item If $x\leqslant u$ and $y \leqslant v$, then $T (x, y) \leqslant T (u, v)$.
	\end{itemize}
	
	A t-norm $T$ is called \textit{zero-divisor-free} if it satisfies:
 \begin{equation*}
	\forall a, b \in (0,1], \quad T(a, b) > 0.
 \end{equation*}
	Equivalently, $ T $ has no zero divisors, if
		$\nexists \, a, b \in (0,1]$ such that $T(a, b) = 0$.
\end{definition}
	
We also denote $T(x,y)$ by $x \ast y$. Key continuous t-norms include:
\begin{itemize}
	\item G\"{o}del: $T_G(x, y) = \min\{x, y\}$;
	\item {\L}ukasiewicz: $T_L(x, y) = \max\{0, x + y - 1\}$;
	\item Product: $T_P(x, y) = x \cdot y$.
\end{itemize}

\begin{definition}
	A \textit{fuzzy implication} $I: [0, 1]^2 \to [0, 1]$ satisfies:
	\begin{itemize}
		\item $x \leqslant y \Rightarrow I(x, z) \geqslant I(y, z)$;
		\item $y \leqslant z \Rightarrow I(x, y) \leqslant I(x, z)$;
		\item $I(1, 0) = 0$, $I(0, 0) = I(1, 1) = 1$.
	\end{itemize}
\end{definition}

\begin{definition}
	For a t-norm $\ast$, the \textit{residual implication} ($R$-implication) is $I_\ast(x, y) = \sup\{z \mid T(x, z) \leqslant y\}$.
\end{definition}
For the R-implication $I_\ast$, it holds that:
\begin{equation*}
	I_\ast(x, y) = 1 \Longleftrightarrow x \leqslant y
\end{equation*}
where $\Longleftrightarrow$ denotes``if and only if".
Therefore, it also holds that:
\begin{equation*}
	I_\ast(x, y)=I_\ast(y, x)= 1 \Longleftrightarrow  x = y.
\end{equation*}

Residual implications for $T_G$, $T_L$, and $T_P$ are:
\begin{itemize}
	\item $I_{T_G}(x, y) = \begin{cases}
		 1 & x \leqslant y \\ 
		 y & x > y \end{cases}$; 
	\item $I_{T_L}(x, y) = \min\{1 - x + y, 1\}$; 
	\item $I_{T_P}(x, y) = \begin{cases} 1 & x \leqslant y \\ \frac{y}{x} & x > y \end{cases}$.
\end{itemize}

In this paper, we let the interpretations of some connectives in each $PL_{[0,1]}$ be as follows:
\begin{itemize}
	\item $\|\neg a\| = N(\|a\|)$;
	\item $\|a \wedge b\| = T(\|a\|, \|b\|) = \|a\| \ast \|b\|$;
	\item $\|a \rightarrow b\| = I_\ast(\|a\|, \|b\|)$;
	\item $\|a \leftrightarrow b\| = \|(a \rightarrow b) \wedge (b \rightarrow a)\| = I_\ast(\|a\|, \|b\|) \ast I_\ast(\|b\|, \|a\|)$.
\end{itemize}

A $PL_{[0,1]}$ equipped with a continuous negation $N^\star$, a continuous t-norm and an $R$-implication is denoted by $PL_{[0, 1]}^\star$. A $PL_{[0,1]}$ equipped with a strict negation $N^{\star\star}$, a continuous t-norm and an $R$-implication is denoted by $PL_{[0, 1]}^{\star\star}$. A $PL_{[0,1]}$ equipped with a standard negation, a G\"{o}del t-norm $T_G$ and an $R$-implication $I_{T_G}$ is denoted by $PL_{[0, 1]}^G$. A $PL_{[0,1]}$ equipped with a standard negation, a {\L}ukasiewicz t-norm $T_L$ and an $R$-implication $I_{T_L}$ is denoted by $PL_{[0, 1]}^L$. A $PL_{[0,1]}$ equipped with a standard negation, a Product t-norm $T_P$ and an $R$-implication $I_{T_P}$ is denoted by $PL_{[0, 1]}^P$. 
\begin{remark}
The interpretation of the symbol ``$\wedge$'' varies in the literature. For example, Reference \cite{ref 51} uses ``$\&$'' for t-norm conjunction ($\|a \& b\| = T(\|a\|, \|b\|)$) and ``$\wedge$'' for minimum conjunction ($\|a \wedge b\| = \min\{\|a\|, \|b\|\}$). In contrast, Reference \cite{ref 54} employs ``$\wedge$'' to denote t-norm conjunction ($\|a \wedge b\| = T(\|a\|, \|b\|)$). This paper adopts the latter convention, using ``$\wedge$'' to represent t-norm conjunction. $PL_{[0,1]}^G$ and $PL_{[0,1]}^P$ are first introduced in \cite{ref 53}. They are variants of the G\"{o}del logic system and the Product logic system in \cite{ref 51}, respectively. $PL_{[0,1]}^L$ is the same as the {\L}ukasiewicz logic system in \cite{ref 51}.
\end{remark}
\subsection{The model checking approach}\label{Checking}
Besnard and Doutre \cite{ref 29} present a model checking approach that encodes $AF$s as propositional formulas in $PL_2$. The models of each formula correspond bijectively to extensions of an $AF$ under given semantics. For a given $AF$, each model's core—defined as the set of arguments assigned the value 1—constitutes an extension. The forms of formulas encoded from $AF$s vary by semantics. Below we list key results from \cite{ref 29}, where $AF = (A, R)$:

1. \textbf{Conflict-free semantics}: $S \subseteq A$ is conflict-free iff it is the core of a model of:
\begin{equation*}
	\bigwedge_{a\in A}(a \rightarrow \bigwedge_{(b,a)\in R} \neg b)
\end{equation*}

2. \textbf{Stable semantics}: $S \subseteq A$ is a stable extension iff it is the core of a model of:
\begin{equation*}
	\bigwedge_{a\in A}(a \leftrightarrow \bigwedge_{(b,a)\in R} \neg b)
\end{equation*}

3. \textbf{Admissible semantics}: $S \subseteq A$ is admissible iff it is the core of a model of:
\begin{equation*}
	\bigwedge_{a\in A}( (a \rightarrow \bigwedge_{(b,a)\in R} \neg b) \wedge (a \rightarrow \bigwedge_{(b,a)\in R} \bigvee_{(c,b)\in R}c) )
\end{equation*}

4. \textbf{Complete semantics}: $S \subseteq A$ is a complete extension iff it is the core of a model of:
\begin{equation*}
	\bigwedge_{a\in A}( (a \rightarrow \bigwedge_{(b,a)\in R} \neg b) \wedge (a \leftrightarrow \bigwedge_{(b,a)\in R} \bigvee_{(c,b)\in R}c) )
\end{equation*}
Note that the empty conjunction is $\top$ and the empty disjunction is $\bot$. 

The authors only provide an analysis of the above propositions in \cite{ref 29}. Formal expressions and rigorous proofs associated with complete semantics are shown in Appendix \ref{secA1}.
\subsection{The real equational approach}
 Gabbay proposes the real equational approach in \cite{ref 12}. 
\begin{definition}\label{defn15}
	A \textit{real equational function} in $k$ variables $\{x_1, x_2, \dots, x_k\}$ over the real interval $[0,1]$ is a continuous function $h: [0, 1]^k \to [0, 1]$ satisfying:
	\begin{enumerate}
		\item \textbf{Boundary Conditions:}
		\begin{itemize}
			\item[(a)] $h(0, 0, \dots, 0) = 1$,
			\item[(b)] $h(x_1, \dots, x_{i-1}, 1, x_{i+1}, \dots, x_k) = 0$ for all $i \in \{1, \dots, k\}$ and all $x_j \in [0,1]$ ($j \neq i$).
		\end{itemize}
		\item \textbf{Symmetry:}
		\begin{itemize}
			\item[(c)] $h(x_1, \dots, x_k) = h(x_{\sigma(1)}, \dots, x_{\sigma(k)})$ for all $(x_1, \dots, x_k) \in [0,1]^k$ and all permutations $\sigma$ of $\{1, \dots, k\}$.
		\end{itemize}
	\end{enumerate}
\end{definition}
\begin{definition}
	An \textit{equational argumentation network} over $[0, 1]$ has the form $(A, R, h_a)$, where\\
	(a) $(A, R)$ is a Dung-style $AF$.\\
	(b) For each $a \in A, h_a$ is a $k_a$-ary real equational function, where $k_a$ is the number of attackers of $a$.\\
	(c) For some $a \in A$, if $\nexists b((b,a)\in R)$, then $h_a=1$.
\end{definition}
	 In \cite{ref 12}, $A$ is defined as a finite or infinite set; however, we restrict it to a finite set in this paper.
\begin{definition}
	A \textit{real equational extension} is a function $\|\cdot\|: A\to [0, 1]$ such that $\forall a\in A$,
	\begin{equation*}
		\|a\| =\begin{cases}
			1, &\nexists c((c,a)\in R),\\
			h_a(\|b_1\|, \dots , \|b_{k_a}\|), &\text{otherwise},
		\end{cases}
	\end{equation*}
	where $h_a$ is a $k_a$-ary real equational function and $\{b_1, \dots , b_{k_a} \}$ is the set of all attackers of $a$.
\end{definition}
The graphical structure of an equational argumentation network is the same as the structure of a classical $AF$. In this paper, for the sake of clarity and without altering the essence of the equational argumentation network $(A, R, h_a)$, we treat an equational argumentation network $(A, R, h_a)$ as a classical $AF$ associated with a \textit{real equational system} $Eq^R$, where $Eq^R$ is the equational system as defined in the real equational extension.
Each real equational system $Eq^R$ corresponds to a \textit{real equational semantics}, and without causing confusion, we also denote this real equational semantics by $Eq^R$. Therefore, we still denote the equational argumentation network as ``$AF$".
A real equational extension is equivalent to a labelling that satisfies the given real equational semantics. If the continuity condition is omitted from the real equational function, the resulting function is termed an \textit{equational function}, and the resulting semantics—an \textit{equational semantics}—derives from an \textit{equational system}.
	
Gabbay gives four specific equational systems in \cite{ref 12}. For a given $AF=(A, R)$ and $a\in A$, an important one is $Eq_{\text{inverse}}^R$: 
	\begin{equation*}
	\|a\| =\begin{cases}
		1, &\nexists c((c,a)\in R),\\
		\prod_{i=1}^{k_a}(1-\|b_i\|), &\text{otherwise},
	\end{cases}
\end{equation*}
and another important one is $Eq_{\max}^R$: 
\begin{equation*}
	\|a\| =\begin{cases}
		1, &\nexists c((c,a)\in R),\\
		1 - \max_{i=1}^{k_a}\|b_i\|, &\text{otherwise},
	\end{cases}
\end{equation*}
where $\{b_1, \dots , b_{k_a} \}$ is the set of all attackers of $a$. 
\section{The general encoding methodology}\label{sec3}
In this section, we give some definitions that are used in this paper and introduce the general encoding methodology.

\begin{definition}
	Let $lab: A \to L$ be a labelling of an $AF=(A, R)$.
    The labelling $lab$ is called a \textit{numerical labelling} if $L\subseteq[0,1]$.
    The labelling $lab$ is called a \textit{$2$-valued labelling} if $L=\{0, 1\}$.
    The labelling $lab$ is called a \textit{$3$-valued labelling} if $L=\{0, \frac{1}{2}, 1\}$.
    The labelling $lab$ is called a \textit{$[0,1]$-valued labelling} if $L=[0,1]$.
\end{definition}
For an $AF=(A, R)$ and $a\in A$, let $lab_{num}$ be a numerical labelling, $LAB_{num}$ be a set of numerical labellings, $lab_2$ be a $2$-valued labelling, $LAB_2$ be a set of $2$-valued labellings, $lab_3$ be a $3$-valued labelling, and $LAB_3$ be a set of $3$-valued labellings. Then we give the definitions of the binarization and the ternarization of $LAB_{num}$.
\begin{definition}
	The \textit{binarization} of $LAB_{num}$ is a total function $T_2: LAB_{num}\to LAB_2$, $lab_{num} \mapsto lab_2$, such that 
	\begin{equation*}
		T_2(lab_{num})(a)=lab_2(a) =
		\begin{cases}
			1 & lab_{num}(a)=1 
			\\0 & lab_{num}(a)\neq1
		\end{cases}.
	\end{equation*}
	$T_2(lab_{num})$ is called the \textit{binarized labelling} from the labelling $lab_{num}$.
\end{definition}

\begin{definition}\label{defn21}
	A \textit{ternarization} of $LAB_{num}$ is a partial function $T_{\text{com}}: LAB_{num}\to LAB_3$, $lab_{num} \mapsto lab_3$ s.t. 
	\begin{equation*}
		T_{\text{com}}(lab_{num})(a)=
		\begin{cases}
			1 & \text{iff } lab_{num}(a)=1, \\
			0 & \text{iff } \exists b_i((b_i,a)\in R): T_{\text{com}}(lab_{num})(b_i)=lab_{num}(b_i)=1,\\
			\frac{1}{2} & \text{otherwise}.
		\end{cases}
	\end{equation*}
		$T_{\text{com}}(lab_{num})$ is called the \textit{ternarized labelling} from the labelling $lab_{num}$.
\end{definition}
 Note that if for any labelling $lab_{num}$ in $LAB_{num}$ there does not exist $(b, a)\in R$ with $lab_{num}(a)=lab_{num}(b)=1$, then $T_{\text{com}}$ is a total function on $LAB_{num}$. Therefore, for the model set of $ec_1(AF)$ (or $ec_2(AF)$) in $PL_2$ (or $PL_3^K$ or $PL_3^L$ or $PL_{[0,1]}$s), $T_{\text{com}}$ is always a total function. If $\nexists a\in A, lab_{num}(a)=1$, then $\forall b\in A$, $T_{\text{com}}(lab_{num})(b)=\frac{1}{2}$ from Definition \ref{defn21}.

Then we introduce the general encoding methodology by proposing the concepts of the encoding, the encoded semantics, the translating, and the translatable semantics. 
\begin{definition}
	An \textit{encoding} of $\mathcal{AF}$ is a map $ec: \mathcal{AF} \to \mathcal{F_{PL}}$ such that for any $AF = (A, R) \in \mathcal{AF}$, the set of all atomic formulas in $ec(AF)$ coincides with $A$.
\end{definition}
When we say that we encode $AF$s to a $PLS$, it means that we map $\mathcal{AF}$ to $\mathcal{F_{PL}}$ via an encoding. 
Building on existing mappings \cite{ref 29} and assuming that an empty conjunction is $\top$ and an empty disjunction is $\bot$, we formalize and name two key encodings:
\begin{itemize}
	\item The \textit{normal encoding} is $ec_1: \mathcal{AF} \to \mathcal{F_{PL}}$, s.t. 
	\begin{equation*}
		ec_1(AF)= \bigwedge_{a\in A}(a\leftrightarrow\bigwedge_{(b,a)\in R}\neg b).
	\end{equation*}
	
	\item The \textit{regular encoding} is $ec_2: \mathcal{AF} \to \mathcal{F_{PL}}$, s.t. 
	\begin{equation*}
		ec_2(AF)= \bigwedge_{a\in A}((a\rightarrow\bigwedge_{(b,a)\in R}\neg b)\wedge(a\leftrightarrow\bigwedge_{(b,a)\in R}\bigvee_{(c,b)\in R}c)).
	\end{equation*}
\end{itemize}	

\begin{definition}
	For an $AF=(A, R)$, an \textit{assignment} of $A$ is a function $\|\cdot\|: A \to L$, where $L\subseteq[0,1]$. 
\end{definition}
\begin{remark}
	From this definition, an assignment of $A$ is equivalent to a numerical labelling of the $AF=(A, R)$ and is equivalent to an assignment on $\mathcal{F}_{ec(AF)}$. We shall not distinguish between these three notions; instead, we simply refer to them as an ``assignment" and employ the unified symbol $\|\cdot\|$ for them. In addition, for an assignment $\|\cdot\|$, $T_2(\|\cdot\|)$ is denoted by $\|\cdot\|_2$ and $T_{\text{com}}(\|\cdot\|)$ is denoted by $\|\cdot\|_{\text{com}}$ in this paper.
\end{remark}

\begin{definition}
	An assignment $\|\cdot\|$ is called a \textit{model} of an $AF=(A, R)$ under a given semantics $\mathfrak{LS}$, if $\|\cdot\|\in\mathfrak{LS}(AF)$.
\end{definition}
\begin{definition}
For an $AF=(A, R)$, an encoding $ec$ and a $PLS$, the \textit{encoded semantics} of $\mathcal{AF}$ is a function $\mathfrak{LS}_{ec}: \mathcal{AF} \to 2^{\mathcal{LAB}}$ such that $\mathfrak{LS}_{ec}(AF)=\{\|\cdot\|\mid\|ec(AF)\|=1\}$. 
\end{definition}
\begin{definition}
The encoded semantics is a \textit{fuzzy normal encoded semantics} if $ec$ is the normal encoding and the associated $PLS$ is a $PL_{[0,1]}$. The encoded semantics is a \textit{continuous fuzzy normal encoded semantics} if it is a fuzzy normal encoded semantics and the negation and the t-norm in $PL_{[0,1]}$ are continuous. The encoded semantics is a \textit{strict fuzzy normal encoded semantics} if it is a fuzzy normal encoded semantics and the associated $PL_{[0,1]}$ is equipped with a strict negation and a continuous t-norm.
\end{definition}
Note that for a strict fuzzy normal encoded semantics, we require the negation to be strictly monotonic but do not require the t-norm to be so. The reason is that some well-known t-norms—such as Gödel t-norm $T_G$ and Łukasiewicz t-norm $T_L$—are not strictly monotonic, whereas strict negations can capture a characteristic of argumentation semantics: for an argument $a$, the value of $\neg a$ strictly decreases as the value of $a$ increases.

The general encoding methodology is an approach to induce semantics for $\mathcal{AF}$. For a given $PLS$, each encoding can induce a type of semantics for $\mathcal{AF}$. For a given $ec$, each $PLS$ can also induce a type of semantics for $\mathcal{AF}$. However, not every arbitrary pair of $ec$ and $PLS$ is meaningful for deriving the encoded semantics of $\mathcal{AF}$. For a given semantics of $\mathcal{AF}$, the encoding and the $PLS$ that can induce this semantics are of great importance.
\begin{definition}
For a given $PLS$ and a given semantics $\mathfrak{LS}$ of $\mathcal{AF}$, a translating $tr$ of $\mathcal{AF}$ is an encoding such that models of $tr(AF)$ in the $PLS$ are the same as models of the $AF$ under $\mathfrak{LS}$, i.e., $\{\|\cdot\|\mid\|tr(AF)\|=1\}=\mathfrak{LS}(AF)$.
\end{definition}

\begin{definition}
	For a given semantics $\mathfrak{LS}$ of $\mathcal{AF}$, the semantics $\mathfrak{LS}$ is called translatable if there exists a translating of $\mathcal{AF}$.
\end{definition} 

\section{Encoding $AF$s to $PL_3$s}
Since Dung's classical semantics can be expressed by $3$-valued labellings, a natural thought is to encode $AF$s to $3$-valued $PLS$s.
In this section, we study two encodings which are normal encoding $ec_1$ and regular encoding $ec_2$ and we consider two $3$-valued $PLS$s which are $PL_3^K$ and $PL_3^L$. So four cases need to be studied. In each case, we explore the relationship between models of an $AF$ under a given classical semantics and models of the encoded formula in a $PL_3$. Note that the label set of $AF$s is $\{0,1,\frac{1}{2}\}$.

\subsection{Encoding $AF$s to $PL_3$s by the normal encoding}
 When we encode $AF$s to $PL_3^K$ and $PL_3^L$ by the normal encoding $ec_1$, we will get the related encoded semantics. First, we investigate the relationship between stable semantics and the encoded semantics associated with $ec_1$ and $PL_3^K$ by characterizing the model relationship.
\begin{theorem}
	An assignment is a model of an $AF$ under stable semantics iff it is a model of $ec_1(AF)$ in $PL_3^K$.
\end{theorem}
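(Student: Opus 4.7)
The plan is to exploit the rigidity of the biconditional in Kleene's three-valued logic. Since $\leftrightarrow$ is defined via $\rightarrow$ and $\wedge$, we have $\|\alpha \leftrightarrow \beta\| = 1$ in $PL_3^K$ iff both $\|\alpha\|\Rightarrow^K\|\beta\| = 1$ and $\|\beta\|\Rightarrow^K\|\alpha\| = 1$. A quick scan of Table \ref{Kleene} shows $x\Rightarrow^K y = 1$ iff $x=0$ or $y=1$, so the two conditions together force either $\|\alpha\|=\|\beta\|=0$ or $\|\alpha\|=\|\beta\|=1$; in particular the value $\frac{1}{2}$ is incompatible with $\|\alpha\leftrightarrow\beta\|=1$ on either side. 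This lemma-like observation is the engine of the whole proof.

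I would then apply it component-wise to $ec_1(AF) = \bigwedge_{a\in A}(a \leftrightarrow \bigwedge_{(b,a)\in R}\neg b)$. If $\|ec_1(AF)\|=1$, then for every $a\in A$ the biconditional for $a$ evaluates to $1$, so $\|a\|\in\{0,1\}$ and $\|a\|=\|\bigwedge_{(b,a)\in R}\neg b\|$. Hence any $PL_3^K$-model of $ec_1(AF)$ is automatically a $2$-valued assignment, and we can set $S=\{a\in A:\|a\|=1\}$. Unpacking the equality: $\|a\|=1$ iff every attacker $b$ of $a$ satisfies $\|b\|=0$ (using the empty-conjunction convention $\|\top\|=1$ when $a$ has no attackers), i.e.\ no attacker of $a$ lies in $S$; and $\|a\|=0$ iff some attacker $b$ of $a$ has $\|b\|=1$, i.e.\ some attacker of $a$ lies in $S$. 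These two clauses give exactly conflict-freeness of $S$ together with the stable property that every argument outside $S$ is attacked by $S$, so $S$ is a stable extension.

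For the converse, I would start from a stable extension $S$ and define $\|a\|=1$ for $a\in S$, $\|a\|=0$ otherwise. For $a\in S$, conflict-freeness gives $\|b\|=0$ for all attackers $b$ of $a$, hence $\|\bigwedge_{(b,a)\in R}\neg b\|=1=\|a\|$; for $a\notin S$, the stable property supplies an attacker $b\in S$ with $\|b\|=1$, hence $\|\bigwedge_{(b,a)\in R}\neg b\|=0=\|a\|$. In both cases the rigid characterization above yields $\|a\leftrightarrow\bigwedge_{(b,a)\in R}\neg b\|=1$, and conjoining over $a\in A$ gives $\|ec_1(AF)\|=1$.

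The main obstacle is the forcing-to-two-values step: one has to verify carefully from Table \ref{Kleene} that the middle value $\frac{1}{2}$ really is excluded by $\leftrightarrow$ in $PL_3^K$, and one must be attentive to the boundary conventions (empty conjunction equals $\top$ so unattacked arguments are forced to value $1$, and the self-attack case $(a,a)\in R$ is automatically handled by the same truth-table analysis). Once that rigidity is established, translating between $2$-valued assignments and stable extensions is routine.
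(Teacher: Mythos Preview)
Your proposal is correct and follows essentially the same approach as the paper: both hinge on the observation that in $PL_3^K$ the biconditional $\|\alpha\leftrightarrow\beta\|=1$ forces $\|\alpha\|=\|\beta\|\in\{0,1\}$, after which the case analysis on $\|a\|$ is identical. The only cosmetic difference is that the paper concludes via ``$2$-valued complete labelling $=$ stable labelling'' whereas you go directly through the extension-based characterization (conflict-free plus attacks everything outside); these are equivalent and equally routine once the rigidity of $\leftrightarrow$ in $PL_3^K$ is established.
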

\begin{proof}
	An assignment $\|\cdot\|$ is a model of $ec_1(AF)$ in $PL_3^K$
	\\$\Longleftrightarrow$ $\|\bigwedge_{a\in A}(a\leftrightarrow\bigwedge_{(b_i,a)\in R}\neg{b_i})\|=1$
	\\$\Longleftrightarrow$ by Table \ref{Kleene}, $\forall a\in A$:
	\begin{equation*}
		\|a\|=\|\bigwedge_{(b_i,a)\in R}\neg{b_i}\|=1
	\end{equation*}
	or
	\begin{equation*}
		\|a\|=\|\bigwedge_{(b_i,a)\in R}\neg{b_i}\|=0.
	\end{equation*}
	We need to discuss two cases.
	\begin{itemize}
		\item Case 1, $\|a\|=1$.
		\\$\|a\|=1$
		\\$\Longleftrightarrow$ $\|\bigwedge_{(b_i,a)\in R}\neg{b_i}\|=1$
		\\$\Longleftrightarrow$ $\nexists c((c,a)\in R)$ or $\forall b_i((b_i,a)\in R)$: $\|\neg{b_i}\|=1$
		\\$\Longleftrightarrow$ $\nexists c((c,a)\in R)$ or $\forall b_i((b_i,a)\in R)$: $\|b_i\|=0$. 
		\\$\Longleftrightarrow$ $\|a\|=1$ satisfies complete semantics.
		\item Case 2, $\|a\|=0$.
		\\$\|a\|=0$
		\\$\Longleftrightarrow$ $\|\bigwedge_{(b_i,a)\in R}\neg{b_i}\|=0$
		\\$\Longleftrightarrow$ $\exists b_i((b_i,a)\in R)$: $\|\neg{b_i}\|=0$
		\\$\Longleftrightarrow$ $\exists b_i((b_i,a)\in R)$: $\|b_i\|=1$. 
		\\$\Longleftrightarrow$ $\|a\|=0$ satisfies complete semantics.
	\end{itemize}
	Thus, an assignment $\|\cdot\|$ is a model of $ec_1(AF)$ in  $PL_3^K$ 
	\\$\Longleftrightarrow$ $\|\cdot\|$ is a model of the $AF$ under complete semantics and $\forall a\in A$: $\|a\|\in\{0,1\}$ 
	 \\$\Longleftrightarrow$ $\|\cdot\|$ is a model of the $AF$ under stable semantics.
\end{proof}

\begin{corollary}
	Stable semantics is translatable by $ec_1$ and $PL_3^K$.
\end{corollary}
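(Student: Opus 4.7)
The plan is to observe that this corollary follows as an immediate consequence of the preceding theorem by directly unpacking the definition of translatability. First I would confirm that $ec_1$ qualifies as an encoding of $\mathcal{AF}$: by its definition, $ec_1(AF) = \bigwedge_{a \in A}(a \leftrightarrow \bigwedge_{(b,a) \in R} \neg b)$ is a well-formed propositional formula whose set of atomic subformulas coincides precisely with $A$, so the defining condition for being an encoding is satisfied.

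Next, I would invoke the theorem just proved, which asserts that for every $AF \in \mathcal{AF}$, an assignment $\|\cdot\|$ is a model of the $AF$ under stable semantics if and only if $\|\cdot\|$ is a model of $ec_1(AF)$ in $PL_3^K$. Rewriting this biconditional as a set equality yields $\{\|\cdot\| \mid \|ec_1(AF)\| = 1\} = \mathfrak{LS}(AF)$ where $\mathfrak{LS}$ is stable semantics. This is exactly the defining property of a translating. Hence $ec_1$ serves as a translating of $\mathcal{AF}$ for stable semantics with respect to $PL_3^K$, and by the definition of a translatable semantics, stable semantics is translatable.

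There is no substantive obstacle in this argument; the corollary is merely a restatement of the theorem in the terminology of translatability. The only care required is syntactic, namely keeping the choice of $PLS$ fixed as $PL_3^K$ throughout so that the three-valued Kleene truth table for $\rightarrow$ (and the derived interpretation of $\leftrightarrow$) governs the evaluation of $ec_1(AF)$, and thereby matching the set of models of $ec_1(AF)$ with the encoded semantics $\mathfrak{LS}_{ec_1}(AF)$ that appears in the definition of translatability.
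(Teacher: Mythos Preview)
Your proposal is correct and matches the paper's approach: the paper states this corollary immediately after the theorem with no separate proof, treating it as an obvious consequence of the model equivalence just established together with the definition of translatability. Your explicit unpacking of the definitions (verifying that $ec_1$ is an encoding and then reading the theorem's biconditional as the required set equality $\{\|\cdot\|\mid\|ec_1(AF)\|=1\}=\mathfrak{LS}(AF)$) is exactly the intended justification.
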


While stable semantics was previously shown to be translatable via $ec_1$ and $PL_2$ \cite{ref 29}, we demonstrate here that it is also translatable using an alternative approach based on $ec_1$ and $PL_3^K$. Next, we investigate the relationship between complete semantics and the encoded semantics associated with $ec_1$ and $PL_3^L$ by characterizing the model relationship.

\begin{theorem}\label{thm5.3}
	An assignment is a model of an $AF$ under complete semantics iff it is a model of $ec_1(AF)$ in $PL_3^L$.
\end{theorem}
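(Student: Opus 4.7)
The plan is to follow the same template as the proof of the preceding theorem for $PL_3^K$, but to exploit the crucial difference that in $PL_3^L$ the biimplication is ``tight'', i.e.\ $\|a \leftrightarrow b\| = 1$ iff $\|a\| = \|b\|$. First I would verify this fact directly from Table~\ref{Lukasiewicz} together with the definition $\|a \leftrightarrow b\| = \|(a\to b)\wedge(b\to a)\|$; a short case check over the nine pairs in $\{0,\tfrac12,1\}^2$ suffices. As a consequence, $\|ec_1(AF)\| = 1$ in $PL_3^L$ is equivalent to the condition
\begin{equation*}
\forall a \in A: \quad \|a\| \;=\; \bigl\|\bigwedge_{(b,a)\in R}\neg b\bigr\| \;=\; \min_{(b,a)\in R}(1 - \|b\|),
\end{equation*}
where the $\min$ over an empty set is taken to be $1$ (empty conjunction is $\top$).

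Next I would split on the three possible values of $\|a\|$ and match each case with the corresponding clause of the complete labelling in Definition~\ref{defn7}. The cases $\|a\| = 1$ and $\|a\| = 0$ are handled exactly as in the previous theorem's proof: $\|a\| = 1$ forces every $\|\neg b_i\| = 1$, i.e.\ every attacker is labelled $0$ (or there is no attacker), and $\|a\| = 0$ forces some $\|\neg b_j\| = 0$, i.e.\ some attacker is labelled $1$. The only genuinely new case is $\|a\| = \tfrac12$, where I would argue that $\min_i(1 - \|b_i\|) = \tfrac12$ is equivalent to the conjunction of two conditions: every $\|b_i\| \leqslant \tfrac12$ (so no attacker has value $1$) and some $\|b_i\| = \tfrac12$ (so not all attackers have value $0$). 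This is exactly the ``otherwise'' clause of the complete labelling.

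Combining the three cases yields that $\|\cdot\|$ satisfies $\|ec_1(AF)\| = 1$ in $PL_3^L$ iff $\|\cdot\|$ is a complete labelling, which is the required equivalence.

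I do not expect any genuine obstacle; the argument is essentially a refinement of the $PL_3^K$ proof. The mild subtlety is that in $PL_3^K$ the biimplication equals $1$ only when both sides are in $\{0,1\}$, which is why the previous theorem collapses to stable semantics, whereas in $PL_3^L$ the biimplication also equals $1$ on the pair $(\tfrac12,\tfrac12)$. This single extra entry in the truth table is precisely what upgrades the correspondence from stable to complete semantics, and pinpointing it is the conceptual heart of the proof.
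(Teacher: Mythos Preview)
Your proposal is correct and follows essentially the same approach as the paper: reduce $\|ec_1(AF)\|=1$ in $PL_3^L$ to the pointwise equations $\|a\|=\|\bigwedge_{(b_i,a)\in R}\neg b_i\|$ via the tightness of $\leftrightarrow$ in $PL_3^L$, then check the three cases $\|a\|\in\{1,0,\tfrac12\}$ against Definition~\ref{defn7}. The only cosmetic difference is that the paper dispatches the $\tfrac12$ case by negating Cases~1 and~2, whereas you give the direct characterisation $\min_i(1-\|b_i\|)=\tfrac12$; both are equally valid.
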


\begin{proof}
		An assignment $\|\cdot\|$ is a model of $ec_1(AF)$  in $PL_3^L$
	\\$\Longleftrightarrow$ $\|\bigwedge_{a\in A}(a\leftrightarrow\bigwedge_{(b_i,a)\in R}\neg{b_i})\|=1$
	\\$\Longleftrightarrow$ by Table \ref{Lukasiewicz}, $\forall a\in A$:
	\begin{equation*}
		\|a\|=\|\bigwedge_{(b_i,a)\in R}\neg{b_i}\|=1
	\end{equation*}
	or
	\begin{equation*}
		\|a\|=\|\bigwedge_{(b_i,a)\in R}\neg{b_i}\|=0
	\end{equation*}
	or
	\begin{equation*}
		\|a\|=\|\bigwedge_{(b_i,a)\in R}\neg{b_i}\|=\frac{1}{2}.
	\end{equation*}
	We need to discuss three cases.
	\begin{itemize}
		\item Case 1, $\|a\|=1$.
		\\$\|a\|=1$
		\\$\Longleftrightarrow$ $\|\bigwedge_{(b_i,a)\in R}\neg{b_i}\|=1$
		\\$\Longleftrightarrow$ $\nexists c((c,a)\in R)$ or $\forall b_i((b_i,a)\in R)$: $\|\neg{b_i}\|=1$
		\\$\Longleftrightarrow$ $\nexists c((c,a)\in R)$ or $\forall b_i((b_i,a)\in R)$: $\|b_i\|=0$
		\\$\Longleftrightarrow$ $\|a\|=1$ satisfies complete semantics.
		\item Case 2, $\|a\|=0$.
		\\$\|a\|=0$
		\\$\Longleftrightarrow$ $\|\bigwedge_{(b_i,a)\in R}\neg{b_i}\|=0$
		\\$\Longleftrightarrow$ $\exists b_i((b_i,a)\in R)$: $\|\neg{b_i}\|=0$
		\\$\Longleftrightarrow$ $\exists b_i((b_i,a)\in R)$: $\|b_i\|=1$
		\\$\Longleftrightarrow$ $\|a\|=0$ satisfies complete semantics.
		\item Case 3, $\|a\|=\frac{1}{2}$.
		\\$\|a\|=\frac{1}{2}$
		\\$\Longleftrightarrow$ $\|a\|\neq 1$ and $\|a\|\neq 0$
		\\$\Longleftrightarrow$ $\|a\|=1$ does not satisfy complete semantics and $\|a\|=0$ does not satisfy complete semantics
		\\$\Longleftrightarrow$ $\|a\|=\frac{1}{2}$ satisfies complete semantics.
	\end{itemize}
		Thus, an assignment $\|\cdot\|$ is a model of $ec_1(AF)$ in  $PL_3^L$ iff $\|\cdot\|$ is a model of the $AF$ under complete semantics.
\end{proof}

\begin{corollary}\label{cor5.4}
	Complete semantics is translatable by $ec_1$ and $PL_3^L$.
\end{corollary}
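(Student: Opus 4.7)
The plan is to derive the corollary as an essentially immediate consequence of Theorem~\ref{thm5.3}, by unfolding the definitions of \emph{translating} and \emph{translatable} introduced earlier in Section~\ref{sec3}. I would first recall that an encoding $tr$ qualifies as a translating for a semantics $\mathfrak{LS}$ in a given $PLS$ precisely when $\{\|\cdot\|\mid\|tr(AF)\|=1\}=\mathfrak{LS}(AF)$ holds for every $AF\in\mathcal{AF}$, and that $\mathfrak{LS}$ is called translatable whenever at least one such translating exists. So to prove the corollary it suffices to exhibit a concrete candidate for $tr$, interpreted in $PL_3^L$, and to verify the required model-set equality against complete semantics.

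The natural candidate is $tr := ec_1$, interpreted in $PL_3^L$. Theorem~\ref{thm5.3} asserts, for every $AF$ and every assignment $\|\cdot\|$, that $\|\cdot\|$ is a model of $ec_1(AF)$ in $PL_3^L$ if and only if $\|\cdot\|$ is a model of the $AF$ under complete semantics. Reading this biconditional at the level of sets yields the equality of $\{\|\cdot\|\mid\|ec_1(AF)\|=1\}$ with the set of complete models of the $AF$, which is exactly the clause demanded of a translating. Hence $ec_1$ is a translating for complete semantics in $PL_3^L$, and the existence of such a translating immediately delivers translatability, which is the content of the corollary.

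There is no substantive obstacle here; the entire logical content has already been supplied by Theorem~\ref{thm5.3}, so the proof amounts to little more than a one-line dereferencing of the terminology from Section~\ref{sec3}. The only minor care needed is to confirm that the notion of assignment appearing in the theorem coincides with the one appearing in the definition of translating; this is guaranteed by the remark identifying assignments of $A$ with assignments on $\mathcal{F}_{ec(AF)}$, so once this identification is made explicit the corollary follows with no further computation.
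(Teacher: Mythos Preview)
Your proposal is correct and matches the paper's approach: the corollary is stated without proof immediately after Theorem~\ref{thm5.3}, as it is a direct consequence of that theorem together with the definitions of translating and translatable from Section~\ref{sec3}. Your unfolding of those definitions is exactly the intended reading.
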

In \cite{ref 29}, stable semantics has been shown to be translatable via $ec_1$ and $PL_2$, while complete semantics needs to be translated via $ec_2$ and $PL_2$. Here we prove that complete semantics is translatable based on $ec_1$ and $PL_3^L$, i.e., complete semantics is an encoded semantics w.r.t. $ec_1$ and $PL_3^L$. This indicates that to obtain complete semantics, we need not resort to the more formally complex $ec_2$, but can instead utilize the more concise formalism $ec_1$ while seeking an appropriate logic system. Consequently, this naturally suggests that employing $ec_1$ with proper $PLS$s yields $[0,1]$-valued semantics. We will pursue this approach in Section 5, where we demonstrate that this $[0,1]$-valued encoded semantics is closely related to, yet distinct from, Gabbay's real equational semantics. 

According to the truth tables of $PL_3$s \cite{ref 52}, 
Heyting's $PL_3$ and Reichenbach's $PL_3$ yield identical conclusions for complete semantics to those established in Theorem \ref{thm5.3} and Corollary \ref{cor5.4}.

\subsection{Encoding $AF$s to $PL_3$s by the regular encoding}
If we encode $AF$s to $PL_3^K$ and $PL_3^L$ by the regular encoding $ec_2$, we will obtain the related encoded semantics. First, we investigate the relationship between complete semantics and the encoded semantics associated with $ec_2$ and $PL_3^K$ by characterizing the model relationships via $T_2$ and $T_{\text{com}}$.
\begin{theorem}
	If an assignment $\|\cdot\|$ is a model of an $AF$ under complete semantics, then $T_2(\|\cdot\|)$ is a model of $ec_2(AF)$ in $PL_3^K$. 
\end{theorem}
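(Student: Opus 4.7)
The plan is to reduce the claim to the known $PL_2$ translatability of complete semantics. First I would observe that the binarized assignment $\|\cdot\|_2 = T_2(\|\cdot\|)$ takes values only in $\{0,1\}$, and that on such two-valued inputs the $PL_3^K$ interpretations of $\neg, \wedge, \vee$, and $\Rightarrow^K$ coincide with their $PL_2$ counterparts: from the formulas $\|\neg x\| = 1-x$, $\min$, $\max$, and Table \ref{Kleene} restricted to $\{0,1\}\times\{0,1\}$, one checks that Kleene's connectives reproduce classical two-valued logic on two-valued inputs. Consequently, the evaluation of $ec_2(AF)$ under $\|\cdot\|_2$ in $PL_3^K$ returns the same truth value as its evaluation under $\|\cdot\|_2$ in $PL_2$, so it suffices to show $\|ec_2(AF)\|_2 = 1$ in $PL_2$.

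Next I would connect $\|\cdot\|_2$ to a complete extension. Since $\|\cdot\|$ is a complete labelling in the sense of Definition \ref{defn7}, the set $S = \{a \in A : \|a\| = 1\}$ is a complete extension via the standard Lab2Ext correspondence recalled in the preliminaries. The Besnard--Doutre result (item 4 in Section \ref{Checking}, with its rigorous proof deferred to Appendix \ref{secA1}) guarantees that any complete extension is the core of a $PL_2$ model of $ec_2(AF)$, where the core is exactly the characteristic function $\chi_S$. But $\chi_S$ coincides with $\|\cdot\|_2$. Hence $\|ec_2(AF)\|_2 = 1$ in $PL_2$, and by the first paragraph also in $PL_3^K$.

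The main obstacle is mostly bookkeeping: one needs to be confident both that Kleene and classical evaluations agree on two-valued inputs and that the appendix result is genuinely applicable here. If a self-contained argument is preferred, the alternative is a direct case split on $\|a\| \in \{0, \tfrac{1}{2}, 1\}$, verifying under $\|\cdot\|_2$ the two conjuncts $a \rightarrow \bigwedge_{(b,a)\in R} \neg b$ and $a \leftrightarrow \bigwedge_{(b,a)\in R}\bigvee_{(c,b)\in R} c$. The only slightly delicate subcase is $\|a\| = \tfrac{1}{2}$: the biimplication forces the outer conjunction to evaluate to $0$, which I would secure by picking an attacker $b_i$ of $a$ with $\|b_i\| = \tfrac{1}{2}$ (such a $b_i$ must exist, since complete semantics would otherwise force $\|a\| \in \{0,1\}$) and noting that no attacker $c$ of $b_i$ can satisfy $\|c\| = 1$, so $\|\bigvee_{(c,b_i)\in R} c\|_2 = 0$ and hence the outer conjunction evaluates to $0$.
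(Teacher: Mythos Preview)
Your proposal is correct and follows essentially the same route as the paper: reduce to the $PL_2$ case via Theorem~\ref{thm4.2} (the appendix result you invoke through the Besnard--Doutre core language), then observe that $PL_3^K$ restricted to $\{0,1\}$-valued inputs coincides with $PL_2$. The paper presents these two steps in the opposite order and in slightly terser form, but the argument is the same; your additional direct case-split sketch is sound but not needed.
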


\begin{proof}
	According to Theorem \ref{thm4.2}, if $\|\cdot\|$ is a model of an $AF$ under complete semantics, then $T_2(\|\cdot\|)$ is a model of $ec_2(AF)$ in $PL_2$. Since $PL_3^K$ is equivalent to $PL_2$ when the truth values are restricted to \{0, 1\}, $T_2(\|\cdot\|)$ is a model of $ec_2(AF)$ in $PL_3^K$.
\end{proof}

\begin{theorem}
If an assignment $\|\cdot\|$ is a model of $ec_2(AF)$ in $PL_3^K$, then $T_{\text{com}}(\|\cdot\|)$ is a model of the $AF$ under complete semantics.
\end{theorem}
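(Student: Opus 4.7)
The plan is to unpack the hypothesis $\|ec_2(AF)\|=1$ in $PL_3^K$ conjunct by conjunct, exploit Kleene's truth tables to show that the assignment is effectively $2$-valued on all atoms of $A$, and then verify the three clauses of Definition \ref{defn7} for $T_{\text{com}}(\|\cdot\|)$ directly.

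First, since $\wedge$ is interpreted as $\min$ in $PL_3^K$, the hypothesis forces every conjunct of $ec_2(AF)$ to evaluate to $1$. Writing $\alpha_a := \bigwedge_{(b,a)\in R}\neg b$ and $\beta_a := \bigwedge_{(b,a)\in R}\bigvee_{(c,b)\in R} c$, this means that for every $a\in A$ we have $\|a\to\alpha_a\|=1$ and $\|a\leftrightarrow\beta_a\|=1$. The key initial observation is that, according to Table \ref{Kleene}, the induced biconditional $x\leftrightarrow y = (x\Rightarrow^K y)\wedge(y\Rightarrow^K x)$ equals $1$ only when $x=y\in\{0,1\}$; in particular $\tfrac12\leftrightarrow\tfrac12 = \tfrac12$, not $1$. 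Hence $\|a\leftrightarrow\beta_a\|=1$ forces $\|a\|=\|\beta_a\|\in\{0,1\}$ for every $a\in A$, so $\|\cdot\|$ is in fact a $2$-valued labelling on $A$. Consequently $\|a\to\alpha_a\|=1$ collapses to: if $\|a\|=1$ then $\|\alpha_a\|=1$, i.e.\ either $a$ has no attackers or every attacker $b$ of $a$ satisfies $\|b\|=0$.

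Next, I would case-split on the value of $T_{\text{com}}(\|\cdot\|)(a)$ and verify that each case matches the corresponding clause of the complete labelling. If $\|a\|=1$, so $T_{\text{com}}(\|\cdot\|)(a)=1$, then $\|\beta_a\|=1$ yields that for every attacker $b$ of $a$ some attacker $c$ of $b$ has $\|c\|=1$, hence $T_{\text{com}}(\|\cdot\|)(b)=0$, matching the $1$-clause. If $\|a\|=0$ and some attacker $b$ has $\|b\|=1$, then $T_{\text{com}}(\|\cdot\|)(a)=0$ and that attacker has $T_{\text{com}}$-value $1$, matching the $0$-clause. In the remaining case $\|a\|=0$ with no attacker of $a$ having value $1$, so $T_{\text{com}}(\|\cdot\|)(a)=\tfrac12$; the condition $\|\beta_a\|=0$ supplies an attacker $b^\ast$ of $a$ with $\|\bigvee_{(c,b^\ast)\in R}c\|=0$, meaning $b^\ast$ has no attackers or all attackers of $b^\ast$ have value $0$, and in either case $T_{\text{com}}(\|\cdot\|)(b^\ast)=\tfrac12$. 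Thus not every attacker of $a$ has $T_{\text{com}}$-value $0$, while none has value $1$, ruling out the $1$- and $0$-clauses and confirming the $\tfrac12$-clause.

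The main obstacle is the first step: making precise the rigidity of Kleene's biconditional, namely that $\tfrac12$ is not a fixed point of $\leftrightarrow$, so the regular encoding $ec_2$ is simply unsatisfiable by any genuinely $3$-valued assignment in $PL_3^K$. Once this rigidity is secured, the result reduces to a careful case analysis mirroring Definition \ref{defn7}, in which the two conjuncts of $ec_2$ play complementary roles: $\alpha_a$ enforces the conflict-freeness/acceptability side while $\beta_a$ enforces the reinstatement side of complete labellings.
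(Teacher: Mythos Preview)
Your proof is correct, and it shares the decisive first step with the paper: the observation that in $PL_3^K$ the biconditional $x\leftrightarrow y$ evaluates to $1$ only when $x=y\in\{0,1\}$, so the clause $\|a\leftrightarrow\beta_a\|=1$ forces every $\|a\|$ to lie in $\{0,1\}$ and the assignment is genuinely $2$-valued. From that point the paper and you diverge. The paper simply notes that a $2$-valued model of $ec_2(AF)$ in $PL_3^K$ is automatically a model of $ec_2(AF)$ in $PL_2$ (since $PL_3^K$ restricted to $\{0,1\}$ coincides with $PL_2$) and then invokes Theorem~\ref{thm4.3} from the appendix to conclude. You instead bypass that reduction and verify the three clauses of Definition~\ref{defn7} for $T_{\text{com}}(\|\cdot\|)$ by a direct case analysis on $\|a\|$, using $\|\beta_a\|\in\{0,1\}$ to extract the needed witnesses. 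The paper's route is shorter and more modular once Theorem~\ref{thm4.3} is in hand; your route is self-contained and in effect re-derives the relevant portion of that appendix theorem inline. A small remark: your preparatory sentence about $\alpha_a$ (``if $\|a\|=1$ then $\|\alpha_a\|=1$'') is never actually used in your case analysis---you rely only on $\|\beta_a\|$---so it could be dropped without loss.
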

	\begin{proof}	
	An assignment $\|\cdot\|$ is a model of $ec_2(AF)$ in $PL_3^K$
	\\$\Longleftrightarrow$
	\begin{equation*}
		\|\bigwedge_{a\in A}((a\rightarrow\bigwedge_{(b,a)\in R}\neg b)\wedge(a\leftrightarrow\bigwedge_{(b,a)\in R}\bigvee_{(c,b)\in R}c))\|=1
	\end{equation*}
	\\$\Longleftrightarrow$ by $PL_3^K$, $\forall a\in A$:
	\begin{equation*}
		\|a\rightarrow\bigwedge_{(b,a)\in R}\neg b\|=1
	\end{equation*}
	and 
	\begin{equation}\label{equ2}
		\|a\leftrightarrow\bigwedge_{(b,a)\in R}\bigvee_{(c,b)\in R}c\|=1.
	\end{equation}
	
	By Equation \ref{equ2}, we have $\forall a\in A$:
		\begin{equation*}
		\|a\|=\|\bigwedge_{(b,a)\in R}\bigvee_{(c,b)\in R}c\|=1
	\end{equation*}
	or
		\begin{equation*}
		\|a\|=\|\bigwedge_{(b,a)\in R}\bigvee_{(c,b)\in R}c\|=0.
	\end{equation*}
	Thus, $\forall a\in A$, $a\neq \frac{1}{2}$. Hence, $\|\cdot\|=T_2(\|\cdot\|)$ and a model $\|\cdot\|$ of $ec_2(AF)$ in $PL_3^K$ is also a model of $ec_2(AF)$ in $PL_2$. By Theorem \ref{thm4.3}, if $\|\cdot\|$ is a model of $ec_2(AF)$ in $PL_2$, then $T_{\text{com}}(\|\cdot\|)$ is a model of the $AF$ under complete semantics.
\end{proof}

Next, we investigate the relationship between complete semantics and the encoded semantics associated with $ec_2$ and $PL_3^L$ by characterizing the model relationships.

\begin{theorem}
	Each model of an $AF$ under complete semantics is a model of $ec_2(AF)$ in $PL_3^L$.
\end{theorem}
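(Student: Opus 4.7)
The plan is to show that every complete labelling $\|\cdot\|$ (with values in $\{0,1,\tfrac12\}$) forces both top-level conjuncts inside $ec_2(AF)$ to evaluate to $1$ in $PL_3^L$, so the big conjunction over $a\in A$ equals $1$. Fix an arbitrary $a\in A$ and perform a case split on $\|a\|$, using Definition \ref{defn7} of the complete labelling throughout. The two conjuncts to verify are $\|a\rightarrow\bigwedge_{(b,a)\in R}\neg b\|=1$ and $\|a\leftrightarrow\bigwedge_{(b,a)\in R}\bigvee_{(c,b)\in R}c\|=1$.

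For the case $\|a\|=1$, completeness says either $a$ has no attackers or all attackers $b$ have $\|b\|=0$; hence $\|\bigwedge_{(b,a)\in R}\neg b\|=1$ (with the empty conjunction equalling $\top$), and the first conjunct is $\|1\rightarrow 1\|=1$. For the biimplication, each attacker $b$ of $a$ has $\|b\|=0$, so by the complete labelling condition some attacker $c$ of $b$ has $\|c\|=1$, giving $\|\bigvee_{(c,b)\in R}c\|=1$ for every such $b$; so the right side is $1$ and $\|1\leftrightarrow 1\|=1$. For the case $\|a\|=0$, completeness gives an attacker $b$ of $a$ with $\|b\|=1$, so $\|\bigwedge_{(b,a)\in R}\neg b\|=0$; the first conjunct is $\|0\rightarrow 0\|=1$ by Table \ref{Lukasiewicz}. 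For that same $b$, $\|b\|=1$ forces all attackers $c$ of $b$ to satisfy $\|c\|=0$, so $\|\bigvee_{(c,b)\in R}c\|=0$ for this $b$, driving the outer min to $0$; then $\|0\leftrightarrow 0\|=1$.

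The main obstacle is the case $\|a\|=\tfrac12$, because here one must read both negations of the complete-labelling clauses simultaneously. From $\|a\|\neq 1$ we get that $a$ has at least one attacker $b^\ast$ with $\|b^\ast\|\neq 0$; from $\|a\|\neq 0$ we get that no attacker $b$ of $a$ has $\|b\|=1$. Together these force $\|b^\ast\|=\tfrac12$ and every attacker of $a$ to lie in $\{0,\tfrac12\}$. Then $\|\neg b\|\in\{\tfrac12,1\}$ for every attacker $b$, with at least one value $\tfrac12$, so $\|\bigwedge_{(b,a)\in R}\neg b\|=\tfrac12$ and the first conjunct is $\|\tfrac12\rightarrow\tfrac12\|=1$. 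For the inner disjunctions: each attacker $b$ with $\|b\|=0$ contributes $\bigvee_c c$ of value $1$ (as in the second case), while each attacker $b$ with $\|b\|=\tfrac12$ has, by the complete labelling, no attacker of value $1$ but at least one attacker of value $\neq 0$, i.e.\ value $\tfrac12$, so its $\bigvee_c c$ equals $\tfrac12$. Taking the min over all attackers of $a$, the presence of $b^\ast$ pins the value at $\tfrac12$, and $\|\tfrac12\leftrightarrow\tfrac12\|=1$.

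Since $a\in A$ was arbitrary, every conjunct of $ec_2(AF)$ evaluates to $1$, so $\|\cdot\|$ is a model of $ec_2(AF)$ in $PL_3^L$. The only nontrivial bookkeeping is a careful tracking of which attackers of $a$ can be undecided versus rejected in the $\tfrac12$ case, and confirming via Table \ref{Lukasiewicz} that $\tfrac12\rightarrow\tfrac12=1$ (which is exactly what distinguishes $PL_3^L$ from $PL_3^K$ here and makes the theorem go through); everything else is a direct unfolding of $\min$, $\max$, and $1-x$.
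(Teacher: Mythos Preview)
Your proof is correct and follows essentially the same approach as the paper: set up the two conjuncts that must evaluate to $1$, fix $a\in A$, and do a three-way case split on $\|a\|$, using the complete-labelling clauses to compute the inner $\min$/$\max$ values and then Table~\ref{Lukasiewicz} to conclude. Your handling of the $\|a\|=\tfrac12$ case is in fact somewhat more explicit than the paper's, since you separately track attackers $b$ with $\|b\|=0$ versus $\|b\|=\tfrac12$ when evaluating the inner disjunctions.
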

\begin{proof}
	First, an assignment $\|\cdot\|'$ is a model of $ec_2(AF)$ in $PL_3^L$
	\\$\Longleftrightarrow$
	\begin{equation*}
		\|\bigwedge_{a\in A}((a\rightarrow\bigwedge_{(b,a)\in R}\neg b)\wedge(a\leftrightarrow\bigwedge_{(b,a)\in R}\bigvee_{(c,b)\in R}c))\|'=1
	\end{equation*}
	\\$\Longleftrightarrow$ by $PL_3^L$, $\forall a\in A$:
	\begin{equation}\label{L1}
		\|a\rightarrow\bigwedge_{(b,a)\in R}\neg b\|'=1
	\end{equation}
	and 
	\begin{equation}\label{L2}
		\|a\leftrightarrow\bigwedge_{(b,a)\in R}\bigvee_{(c,b)\in R}c\|'=1.
	\end{equation}
	
	Second, let $\|\cdot\|$ be any model of the $AF$ under complete semantics. Then we need to discuss three cases.
	\begin{itemize}
		\item Case 1, $\|a\|=1$.
		\\$\|a\|=1$
		\\$\Longleftrightarrow$ by complete semantics, $\nexists b((b,a)\in R)$ or $\forall b_i ((b_i, a)\in R): \|b_i\|=0$ 
		\\$\Longleftrightarrow$ by complete semantics, $\nexists b((b,a)\in R)$ or $\forall b_i ((b_i, a)\in R): [\|b_i\|=0$ and $\exists c_j ((c_j, b_i)\in R): \|c_j\|=1$] 
		\\$\Longleftrightarrow$ Equations \ref{L1} and \ref{L2} hold under $\|\cdot\|$.		
		\item Case 2, $\|a\|=0$.
		\\$\|a\|=0$
		\\$\Longleftrightarrow$ by complete semantics, $\exists b_i ((b_i, a)\in R): \|b_i\|=1$
		\\$\Longleftrightarrow$ by complete semantics, $\exists b_i ((b_i, a)\in R): [\|b_i\|=1$] and [$\nexists c((c, b_i)\in R)$ or $\forall c_j((c_j, b_i)\in R): \|c_j\|=0$]
		\\$\Longrightarrow$ Equations \ref{L1} and \ref{L2} hold under $\|\cdot\|$.
		\item Case 3, $\|a\|=\frac{1}{2}$.
		\\$\|a\|=\frac{1}{2}$
		\\$\Longleftrightarrow$ by complete semantics, [$\forall b_i ((b_i, a)\in R): \|b_i\|\neq1$] and [$\exists b_i ((b_i, a)\in R): \|b_i\|=\frac{1}{2}$]
		\\$\Longrightarrow$ by complete semantics, [$\forall b_i ((b_i, a)\in R)\exists c((c,b_i)\in R)$: $\|b_i\|\neq1$ and $\|c\|\neq 0$] and [$\exists b_i ((b_i, a)\in R)\exists c_i((c_i,b_i)\in R)\nexists c_j((c_j,b_i)\in R): \|b_i\|=\frac{1}{2}$ and $\|c_i\|= \frac{1}{2}$ and $\|c_j\|= 1$]
		\\$\Longrightarrow$ Equations \ref{L1} and \ref{L2} hold under $\|\cdot\|$.
	\end{itemize}
	From the three cases, since for each $a\in A$ Equations \ref{L1} and \ref{L2} hold under $\|\cdot\|$, a model $\|\cdot\|$ of the $AF$ under complete semantics is a model of $ec_2(AF)$ in $PL_3^L$.
\end{proof}
In fact, we have a proposition that a model of $ec_2(AF)$ in $PL_3^L$ may not be a model of the $AF$ under complete semantics. The following example demonstrates this proposition:
\begin{example}
	Let $AF=(A, R), A=\{a, b\}, R=\{(a, b), (b, a)\}$. The assignment $\|a\|=\frac{1}{2}$ and $\|b\|=0$ satisfies $ec_2(AF)$ in $PL_3^L$ but does not satisfy complete semantics. 
\end{example}

By function $T_{\text{com}}$, each model of $ec_2(AF)$ in $PL_3^L$ is turned into a model of the $AF$ under complete semantics. The lemma and theorem are presented below.
\begin{lemma}\label{lem5.7}
	For a given $AF=(A,R)$, if an assignment $\|\cdot\|$ is a model of $ec_2(AF)$ in $PL_3^L$, then $T_2(\|\cdot\|)$ is a model of $ec_2(AF)$ in $PL_2$.
\end{lemma}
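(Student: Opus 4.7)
The plan is to start from the hypothesis that $\|\cdot\|$ satisfies $ec_2(AF)$ in $PL_3^L$, unpack this into the two clause-conditions for every $a\in A$, and then verify that after applying $T_2$ (which sends $1$ to $1$ and both $\tfrac12$ and $0$ to $0$) each clause evaluates to $1$ in $PL_2$. Recall from Table \ref{Lukasiewicz} that in $PL_3^L$ the Łukasiewicz implication satisfies $\|\varphi\to\psi\|=1\Longleftrightarrow\|\varphi\|\leqslant\|\psi\|$, and consequently $\|\varphi\leftrightarrow\psi\|=1\Longleftrightarrow\|\varphi\|=\|\psi\|$. So the hypothesis becomes: for every $a\in A$,
\begin{equation*}
\|a\|\leqslant\Big\|\bigwedge_{(b,a)\in R}\neg b\Big\|\quad\text{and}\quad\|a\|=\Big\|\bigwedge_{(b,a)\in R}\bigvee_{(c,b)\in R}c\Big\|.
\end{equation*}

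Next I would fix an arbitrary $a\in A$ and split on the value $\|a\|\in\{0,\tfrac12,1\}$. When $\|a\|=1$, both inner formulas must also evaluate to $1$ in $PL_3^L$, which forces every attacker $b$ of $a$ to satisfy $\|b\|=0$ and forces each such $b$ to have some attacker $c$ with $\|c\|=1$; binarization preserves these $0$'s and $1$'s, so both clauses take value $1$ in $PL_2$ under $\|\cdot\|_2$. When $\|a\|=0$, $\|a\|_2=0$ so the implication clause is trivially true, and the equational clause forces some attacker $b$ of $a$ to have every $c$ with $(c,b)\in R$ satisfying $\|c\|=0$, hence $\|c\|_2=0$, so the right-hand side binarizes to $0$, matching $\|a\|_2=0$.

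The subtle case is $\|a\|=\tfrac12$, where $\|a\|_2=0$. Here I would use the first hypothesis to conclude $\|\bigwedge_{(b,a)\in R}\neg b\|\geqslant\tfrac12$, meaning no attacker $b$ has $\|b\|=1$, so every attacker binarizes to $0$ and the conjunction $\bigwedge_{(b,a)\in R}\neg b$ evaluates to $1$ in $PL_2$ under $\|\cdot\|_2$; thus the implication clause holds since its consequent is $1$. For the equational clause I need $\|\bigwedge_{(b,a)\in R}\bigvee_{(c,b)\in R}c\|_2=0$, i.e., some attacker $b$ of $a$ has all of its attackers $c$ satisfying $\|c\|_2=0$. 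Because the $PL_3^L$-value of the double conjunction-disjunction equals $\tfrac12$, at least one attacker $b$ must realize the minimum $\tfrac12$, which means $\bigvee_{(c,b)\in R}c$ has value exactly $\tfrac12$, and in turn every $c$ attacking this $b$ has $\|c\|\leqslant\tfrac12$, so $\|c\|_2=0$; this witness $b$ gives what we need.

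The main obstacle is precisely this third case: one must carefully exploit that in $PL_3^L$ the biconditional enforces strict equality (not just inequality) between the two sides, otherwise the value $\tfrac12$ of the right-hand side could be obtained through a disjunction containing a $c$ with $\|c\|=1$, which would break binarization. Once the strict-equality reading of $\leftrightarrow$ in $PL_3^L$ is used, the argument propagates correctly; combining the three cases over all $a\in A$ yields $\|ec_2(AF)\|_2=1$ in $PL_2$, completing the proof.
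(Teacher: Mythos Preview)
Your proposal is correct and follows essentially the same three-case analysis on $\|a\|\in\{0,\tfrac12,1\}$ as the paper's proof, unpacking the $PL_3^L$ model conditions for $ec_2(AF)$ and verifying both clauses under $\|\cdot\|_2$ in each case. Your use of the characterization $\|\varphi\to\psi\|=1\Longleftrightarrow\|\varphi\|\leqslant\|\psi\|$ in $PL_3^L$ is a slightly cleaner way to phrase the hypothesis than the paper's version, and in Case~3 you verify the implication clause by showing its consequent is $1$ whereas the paper simply notes the antecedent $\|a\|_2=0$ makes it trivial; but these are cosmetic differences, and the substantive argument---especially identifying in Case~3 an attacker $b$ whose inner disjunction equals $\tfrac12$ so that all its attackers binarize to $0$---is the same.
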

\begin{proof}
	An assignment $\|\cdot\|$ is a model of $ec_2(AF)$ in $PL_3^L$
	\\$\Longleftrightarrow$ 
	\begin{equation*}
		\|\bigwedge_{a\in A}((a\rightarrow\bigwedge_{(b,a)\in R}\neg b)\wedge(a\leftrightarrow\bigwedge_{(b,a)\in R}\bigvee_{(c,b)\in R}c))\|=1,
	\end{equation*}
	\\$\Longleftrightarrow$  $\forall a\in A$,
	\begin{equation}\label{eq13}
		\|a\rightarrow\bigwedge_{(b,a)\in R}\neg b\|=1
	\end{equation}
	and
	\begin{equation}\label{eq14}
		\|a\leftrightarrow\bigwedge_{(b,a)\in R}\bigvee_{(c,b)\in R}c\|=1.
	\end{equation}
	 Similarly, an assignment $\|\cdot\|'$ is a model of $ec_2(AF)$ in $PL_2$
	\\$\Longleftrightarrow$ 
	\begin{equation*}
		\|\bigwedge_{a\in A}((a\rightarrow\bigwedge_{(b,a)\in R}\neg b)\wedge(a\leftrightarrow\bigwedge_{(b,a)\in R}\bigvee_{(c,b)\in R}c))\|'=1,
	\end{equation*}
	\\$\Longleftrightarrow$  $\forall a\in A$,
	\begin{equation}\label{equ15}
		\|a\rightarrow\bigwedge_{(b,a)\in R}\neg b\|'=1
	\end{equation}
	and
	\begin{equation}\label{equ16}
		\|a\leftrightarrow\bigwedge_{(b,a)\in R}\bigvee_{(c,b)\in R}c\|'=1.
	\end{equation}
	Then, for $\forall a\in A$, we discuss three cases.
	\begin{itemize}
		\item Case 1, $\|a\|=1$. 
		\\$\|a\|=\|a\|_2=1$
		\\$\Longleftrightarrow$ $\nexists b((b, a)\in R)$ or $\forall b_i ((b_i, a)\in R)\exists c_j ((c_j, b_i)\in R)$: [$\|b_i\|=0$ (from Equation \ref{eq13})] and [$\|c_j\|=1$ (from Equation \ref{eq14})]
		\\$\Longrightarrow$ $\nexists b((b, a)\in R)$ or $\forall b_i ((b_i, a)\in R)\exists c_j ((c_j, b_i)\in R)$: [$\|b_i\|_2=0$] and [$\|c_j\|_2=1$]
		\\$\Longleftrightarrow$ Equations \ref{equ15} and \ref{equ16} hold under $\|\cdot\|_2$ in Case 1.
		\item Case 2, $\|a\|=0$. 
		\\$\|a\|=\|a\|_2=0$
		\\$\Longleftrightarrow$ $\exists b_i ((b_i, a)\in R)\nexists c ((c, b_i)\in R)$ or $\exists b_i ((b_i, a)\in R)\forall c_j ((c_j, b_i)\in R): \|c_j\|=0$ (from Equations \ref{eq13} and \ref{eq14})
		\\$\Longrightarrow$ $\exists b_i ((b_i, a)\in R)\nexists c ((c, b_i)\in R)$ or $\exists b_i ((b_i, a)\in R)\forall c_j ((c_j, b_i)\in R): \|c_j\|_2=0$ 
		\\$\Longleftrightarrow$ Equations \ref{equ15} and \ref{equ16} hold under $\|\cdot\|_2$ in Case 2.
		\item 	Case 3, $\|a\|=\frac{1}{2}$. 
		We have $\|a\|_2=0$ from $\|a\|=\frac{1}{2}$ and $T_2$. Thus Equation \ref{equ15} holds under $\|\cdot\|_2$ in Case 3. Next we need to prove Equation \ref{equ16} holds under $\|\cdot\|_2$ in this case.
		
		$\|a\|=\frac{1}{2}$ (and then $\|a\|_2=0$)
		\\$\Longrightarrow$ $\exists b_i ((b_i, a)\in R)\exists c_j ((c_j, b_i)\in R)\nexists c_k ((c_k, b_i)\in R)$: $\|c_j\|=\frac{1}{2}$ and $\|c_k\|=1$ (from Equation \ref{eq14})]
		\\$\Longrightarrow$ $\exists b_i ((b_i, a)\in R)\exists c_j ((c_j, b_i)\in R)\nexists c_k ((c_k, b_i)\in R)$: $\|c_j\|_2=0$ and $\|c_k\|_2=1$
		\\$\Longrightarrow$ $\exists b_i ((b_i, a)\in R)\forall c_j ((c_j, b_i)\in R): \|c_j\|_2=0$
		\\$\Longleftrightarrow$ Equation \ref{equ16} holds under $\|\cdot\|_2$ in Case 3.
		\end{itemize}
	From the three cases, for all $a\in A$, Equations \ref{equ15} and \ref{equ16} hold under $\|\cdot\|_2$, i.e., $\|\cdot\|_2$ is a model of $ec_2(AF)$ in $PL_2$.
\end{proof}

\begin{theorem}
	For a given $AF=(A,R)$, if $\|\cdot\|$ is a model of $ec_2(AF)$ in $PL_3^L$, then $T_{\text{com}}(\|\cdot\|)$ is a model of the $AF$ under complete semantics.
\end{theorem}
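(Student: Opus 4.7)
The plan is to reduce this claim to the $PL_2$ case already handled in the appendix, using Lemma \ref{lem5.7} as the bridge. By Lemma \ref{lem5.7}, the binarized assignment $\|\cdot\|_2 = T_2(\|\cdot\|)$ is a model of $ec_2(AF)$ in $PL_2$. Applying Theorem \ref{thm4.3}, $T_{\text{com}}(\|\cdot\|_2)$ is then a model of the $AF$ under complete semantics. Hence it suffices to verify the invariance identity $T_{\text{com}}(\|\cdot\|) = T_{\text{com}}(T_2(\|\cdot\|))$.

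To establish this identity I would inspect Definition \ref{defn21} clause by clause. The first clause asks whether $\|a\| = 1$, and by definition of $T_2$ this condition is equivalent to $\|a\|_2 = 1$. The second clause asks whether some attacker $b_i$ satisfies $T_{\text{com}}(lab)(b_i) = lab(b_i) = 1$; using the first clause this composite condition collapses to $lab(b_i) = 1$, which is again preserved under $T_2$. Since the two specified clauses agree, the ``otherwise'' clause agrees automatically, and therefore $T_{\text{com}}(\|\cdot\|)(a) = T_{\text{com}}(\|\cdot\|_2)(a)$ for every $a \in A$.

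The main delicacy will be recognizing that the apparent self-reference in the second clause of Definition \ref{defn21} is harmless—once one observes that $T_{\text{com}}(lab)(b_i) = 1$ is equivalent to $lab(b_i) = 1$ by the first clause, no genuine induction is needed, and the invariance $T_{\text{com}} \circ T_2 = T_{\text{com}}$ follows directly. Combining this invariance with Lemma \ref{lem5.7} and Theorem \ref{thm4.3} then yields the theorem in a single line, so the proof should be quite short compared with the more intricate clause analysis already carried out for Lemma \ref{lem5.7}.
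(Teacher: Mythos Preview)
Your proposal is correct and follows essentially the same route as the paper's proof: apply Lemma \ref{lem5.7} to pass to $PL_2$, invoke Theorem \ref{thm4.3}, and then use the identity $T_{\text{com}}(T_2(\|\cdot\|))=T_{\text{com}}(\|\cdot\|)$. In fact you go slightly further than the paper, which simply asserts this last identity without the clause-by-clause justification you outline.
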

\begin{proof}
	For the $AF=(A,R)$, if $\|\cdot\|$ is a model of $ec_2(AF)$ in $PL_3^L$, then according to Lemma \ref{lem5.7}, $T_2(\|\cdot\|)$ is a model of $ec_2(AF)$ in $PL_2$.
	According to Theorem \ref{thm4.3}, $T_{\text{com}}(T_2(\|\cdot\|))$ is a model of the $AF$ under complete semantics. Since $T_{\text{com}}(T_2(\|\cdot\|))=T_{\text{com}}(\|\cdot\|)$, $T_{\text{com}}(\|\cdot\|)$ is a model of the $AF$ under complete semantics.
\end{proof}

Although there may not exist a one-to-one relationship between the set of models of an $AF$ under complete semantics and the set of models of $ec_2(AF)$ in $PL_3^L$, there exists a one-to-one relationship between the set of extensions of an $AF$ under complete semantics and the set of cores under the models of $ec_2(AF)$ in $PL_3^L$.

\section{Encoding $AF$s to $PL_{[0, 1]}$s}
From \cite{ref 29}, the normal encoding $ec_1$ combined with $PL_2$ constructs the stable semantics which is a ``$2$-valued complete semantics". Theorem \ref{thm5.3} establishes that the encoded semantics associated with $ec_1$ and $PL_3^L$ is equivalent to $3$-valued complete semantics. Therefore, $ec_1$ is a powerful encoding to construct the complete semantics w.r.t. different label sets. If we want to generalize to ``$[0,1]$-valued complete semantics" of $\mathcal{A}$, a natural thought is to encode $AF$s to $[0,1]$-valued logic systems by $ec_1$.

In this section, we encode $AF$s to fuzzy logic systems and explore the relationship between encoded semantics, real equational semantics and complete semantics by model relationships. We also obtain a specific encoded semantics associated with $ec_1$ and $PL_{[0,1]}^L$. Note that the label set of $AF$s is $[0,1]$.

\subsection{Encoding $AF$s to general $PL_{[0, 1]}$s}\label{sub5.1}
In this subsection, by encoding $AF$s to general $PL_{[0, 1]}$s, we obtain related encoded semantics, encoded equational systems and encoded equational functions. Then we explore the relationships between encoded semantics and the real equational semantics.
\subsubsection{Fuzzy normal encoded semantics}\label{5.1.1}
We first present the equational systems derived from fuzzy normal encoded semantics. For a $PL_{[0,1]}$ equipped with connectives $\neg$, $\wedge$ and $\rightarrow$, the three connectives are interpreted as operations negation $N$, t-norm $\ast$ and R-implication $I_\ast$, respectively.
  
\begin{theorem}\label{thm7}
 For an $AF=(A, R)$, an assignment $\|\cdot\|$ is a model of $ec_1(AF)$ in the $PL_{[0,1]}$ iff it is a solution of the equational system $Eq^{ec_1}$:  $\forall a\in A,$
 \begin{equation*}
 	\|a\| =\begin{cases}
 		1, &\nexists c((c,a)\in R),\\
 			N(\|b_1\|)\ast N(\|b_2\|)\ast\dots\ast N(\|b_k\|), &\text{otherwise},
 	\end{cases}
 \end{equation*}
 where $\{b_1, \dots , b_k \}$ is the set of all attackers of $a$. 
\end{theorem}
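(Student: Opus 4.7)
The plan is to unfold $ec_1(AF) = \bigwedge_{a\in A}(a\leftrightarrow\bigwedge_{(b,a)\in R}\neg b)$ and reduce the condition $\|ec_1(AF)\|=1$ to the equational system $Eq^{ec_1}$ through a chain of equivalences driven by two structural facts about any t-norm $\ast$: (i) $x \ast y = 1$ iff $x = y = 1$, and (ii) $\|a \leftrightarrow c\| = 1$ iff $\|a\| = \|c\|$. Both facts already follow from material stated in the preliminaries, so the proof is essentially a careful unfolding.

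First I would establish fact (i) from Definition \ref{defn12}: since $x \ast y \leqslant x \ast 1 = x$ and symmetrically $x \ast y \leqslant y$, we get $x \ast y \leqslant \min\{x,y\}$, so $x \ast y = 1$ forces $x = y = 1$. Iterating, $\|\bigwedge_{a \in A}(\cdots)\| = 1$ iff for every $a \in A$ the conjunct $\|a \leftrightarrow \bigwedge_{(b,a)\in R}\neg b\|$ equals $1$. Next I would invoke fact (ii): by the definition $\|a \leftrightarrow c\| = I_\ast(\|a\|,\|c\|) \ast I_\ast(\|c\|,\|a\|)$ combined with (i), this equals $1$ iff both residuals equal $1$, which by the stated R-implication property $I_\ast(x,y)=I_\ast(y,x)=1 \Longleftrightarrow x=y$ is equivalent to $\|a\| = \|\bigwedge_{(b,a)\in R}\neg b\|$.

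Then I would split on whether $a$ has attackers. If $\nexists c((c,a)\in R)$, the right-hand side is an empty conjunction, which by the convention in the preliminaries evaluates to $\|\top\| = 1$, giving $\|a\| = 1$ as in the first branch of $Eq^{ec_1}$. Otherwise, letting $\{b_1,\dots,b_k\}$ enumerate the attackers, associativity and commutativity of $\ast$ yield $\|\bigwedge_{(b,a)\in R}\neg b\| = N(\|b_1\|) \ast \cdots \ast N(\|b_k\|)$, matching the second branch of $Eq^{ec_1}$. Since every step is an iff, the theorem follows in both directions simultaneously.

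The main subtlety, rather than an obstacle, is ensuring that fact (i) holds for every admissible t-norm in this paper's generality (not merely continuous ones); fortunately it follows purely from the monotonicity and the boundary condition $T(x,1)=x$ in Definition \ref{defn12}, so no extra assumption on $\ast$ or on $N$ is needed. The empty-attacker case must also be kept explicit because the recursive expression $N(\|b_1\|)\ast\cdots\ast N(\|b_k\|)$ is undefined when $k=0$; the theorem statement itself handles this with a case split, matching the $\|\top\|=1$ convention.
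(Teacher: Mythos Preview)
Your proposal is correct and follows essentially the same route as the paper's own proof: unfold $ec_1(AF)$, split the outer conjunction into per-argument biimplications, reduce each biimplication to an equality via the R-implication property $I_\ast(x,y)=I_\ast(y,x)=1 \Longleftrightarrow x=y$, and then evaluate $\|\bigwedge_{(b,a)\in R}\neg b\|$ as either $\|\top\|=1$ or $N(\|b_1\|)\ast\cdots\ast N(\|b_k\|)$. You are slightly more explicit than the paper in justifying fact~(i) from the t-norm axioms and in separating out the empty-attacker case, but these are presentational refinements rather than a different argument.
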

\begin{proof}
	An assignment $\|\cdot\|$ is a model of $ec_1(AF)$ in the $PL_{[0,1]}$\\
	 $\Longleftrightarrow$
	\begin{equation*}
		\|\bigwedge_{a\in A}(a\leftrightarrow\bigwedge_{(b_i,a)\in R}\neg{b_i})\|=1,
	\end{equation*}
	$\Longleftrightarrow$ for each $a \in A$, 
	\begin{equation*}
		\|(a\leftrightarrow\bigwedge_{(b_i,a)\in R}\neg{b_i})\|=1,
	\end{equation*}
	$\Longleftrightarrow$ for each $a \in A$, 
	\begin{equation*}
		\|(a\rightarrow\bigwedge_{(b_i,a)\in R}\neg{b_i})\wedge(\bigwedge_{(b_i,a)\in R}\neg{b_i})\rightarrow a\|=1,
	\end{equation*}
	$\Longleftrightarrow$ for each $a \in A$,
	\begin{equation*}
		\|a\rightarrow\bigwedge_{(b_i,a)\in R}\neg{b_i}\|=\|\bigwedge_{(b_i,a)\in R}\neg{b_i})\rightarrow a\|=1,
	\end{equation*}
	$\Longleftrightarrow$ for each $a \in A$, 
	\begin{equation*}
		I_\ast(\|a\|,\|\bigwedge_{(b_i,a)\in R}\neg{b_i}\|)=I_\ast(\|\bigwedge_{(b_i,a)\in R}\neg{b_i}\|, \|a\|)=1,
	\end{equation*}
	$\Longleftrightarrow$ for each $a \in A$, 
	\begin{equation*}
		\|a\|=\|\bigwedge_{(b_i,a)\in R}\neg{b_i}\|,
	\end{equation*}
	i.e.,
	 \begin{equation*}
		\|a\| =\begin{cases}
			1, &\nexists c((c,a)\in R),\\
			N(\|b_1\|)\ast N(\|b_2\|)\ast\dots\ast N(\|b_k\|), &\text{otherwise},
		\end{cases}
	\end{equation*}
	where $\{b_1, \dots , b_k \}$ is the set of all attackers of $a$.
\end{proof}

\begin{definition}\label{defn29}
The equational system $Eq^{ec_1}$ associated with a fuzzy normal encoded semantics is called the \textit{fuzzy normal encoded equational system} (abbreviated as the \textit{encoded equational system}).
The equational function associated with $Eq^{ec_1}$ is called the \textit{encoded equational function} and it is denoted by $h^{ec_1}: [0, 1]^k \to [0, 1]$, $h^{ec_1}(x_1, x_2, \dots, x_k )=N(x_1)\ast N(x_2)\ast\dots\ast N(x_k)$. 
\end{definition}
We also denote a fuzzy normal encoded semantics by $Eq^{ec_1}$.
Obviously, $h^{ec_1}$ may not be continuous since $N$ and $\ast$ may not be continuous. Therefore an encoded equational system $Eq^{ec_1}$ may not be a real equational system. In addition, we claim that a real equational system may not be an encoded equational system $Eq^{ec_1}$. On one hand, to clarify this claim we will introduce the property of decreasing monotonicity. On the other hand, to construct a proper semantics that aligns with the characteristics of argumentation, the decreasing monotonicity is essential but it is ignored in the definition of the real equational function. Intuitively, the value of an argument should decrease as the value of each of its attackers increases. Motivated by these considerations, we propose this property below.
\begin{definition}
	An equational function $h: [0, 1]^k \to [0, 1]$ satisfies the \textit{decreasing monotonicity} if:
	\begin{equation*}
		y \leqslant z \Longrightarrow h(x_1, \dots, x_{i-1}, y, x_{i+1}, \dots, x_k ) \geqslant h(x_1, \dots, x_{i-1}, z, x_{i+1}, \dots, x_k ). 
	\end{equation*} 
	In other words, $h$ is non-increasing in each variable.
\end{definition}

Then we have a theorem about the decreasing monotonicity below.  
\begin{theorem} \label{thm13}
	The encoded equational function $h^{ec_1}$ satisfies the decreasing monotonicity.
\end{theorem}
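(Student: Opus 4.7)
The plan is to reduce the statement to two basic monotonicity facts that are already in hand: the negation $N$ is non-increasing by definition, and the t-norm $\ast$ is non-decreasing in each of its two arguments (the fourth clause of Definition \ref{defn12}). Composing a non-increasing map with a non-decreasing one yields a non-increasing map, so after fixing all but one coordinate, $h^{ec_1}$ should inherit the desired non-increasing behaviour in that coordinate.

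More concretely, I would fix an index $i \in \{1, \dots, k\}$ and fix arbitrary values $x_j \in [0,1]$ for $j \neq i$. Let $y, z \in [0, 1]$ with $y \leqslant z$. First I would invoke the non-increasing property of $N$ to get $N(y) \geqslant N(z)$. Then I would use commutativity and associativity of $\ast$ (the second and third clauses of Definition \ref{defn12}) to rewrite
\begin{equation*}
h^{ec_1}(x_1, \dots, x_{i-1}, y, x_{i+1}, \dots, x_k) = N(y) \ast \bigl( N(x_1) \ast \dots \ast N(x_{i-1}) \ast N(x_{i+1}) \ast \dots \ast N(x_k) \bigr),
\end{equation*}
and analogously with $y$ replaced by $z$. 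Denoting the bracketed factor by $M$, it remains to show $N(y) \ast M \geqslant N(z) \ast M$, which is precisely the monotonicity clause of the t-norm applied to the pairs $(N(y), M)$ and $(N(z), M)$, together with $M \leqslant M$ and $N(z) \leqslant N(y)$.

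Since $i$ was arbitrary, this establishes that $h^{ec_1}$ is non-increasing in every coordinate, i.e., satisfies decreasing monotonicity. There is no real obstacle here; the only thing to be careful about is that the rewriting via associativity and commutativity is well-defined for an arbitrary number of factors, which follows by a routine induction on $k$ using the two-variable associativity and commutativity axioms. Note also that the claim holds even when $N$ or $\ast$ fails to be continuous, since only monotonicity of the two operations is used.
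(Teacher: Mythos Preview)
Your proposal is correct and essentially identical to the paper's proof: the paper assumes without loss of generality that the varying coordinate is the first one (which amounts to your use of commutativity), then uses associativity to factor off $N(x_1)$ from the remaining product and applies the monotonicity clause of the t-norm. The only cosmetic difference is that you keep the index $i$ general and explicitly invoke commutativity, whereas the paper absorbs this into the WLOG step.
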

\begin{proof}
	Without loss of generality, let us assume that $x_1 \leqslant x_1'$. Then $N(x_1) \geqslant N(x_1')$. According to the associativity of t-norms, we have
	\begin{align*}
		h^{ec_1}(x_1, x_2, \dots, x_k)=&N(x_1)\ast N(x_2)\ast\dots\ast N(x_k)\\
		=&N(x_1)\ast( N(x_2)\ast\dots\ast N(x_k)).
	\end{align*}
	Because of the monotonicity of t-norms, we have 
	\begin{equation*}
		N(x_1)\ast( N(x_2)\ast\dots\ast N(x_k))\geqslant N(x_1')\ast( N(x_2)\ast\dots\ast N(x_k)).
	\end{equation*}
	Thus, we have
	\begin{align*}
		h^{ec_1}(x_1, x_2, \dots, x_k)&\geqslant N(x_1')\ast( N(x_2)\ast\dots\ast N(x_k))\\
		&= N(x_1')\ast N(x_2)\ast\dots\ast N(x_k)\\
		&=h^{ec_1}(x_1', x_2, \dots, x_k).
	\end{align*}
	Hence, $h^{ec_1}$ satisfies the decreasing monotonicity.
\end{proof}
\begin{remark}\label{rem3}
A real equational system $Eq^R$ may not satisfy the property of decreasing monotonicity, whereas an encoded equational system $Eq^{ec_1}$ necessarily exhibits this property from Theorem \ref{thm13}. Consequently, a real equational system may not be a fuzzy normal encoded equational system.
\end{remark}

Next we demonstrate that each encoded equational function $h^{ec_1}$ satisfies the boundary conditions and the symmetry.
\begin{theorem}\label{thm14}
	The encoded equational function $h^{ec_1}$ satisfies the boundary conditions and the symmetry:\\
	(a) $h^{ec_1}(0, \dots, 0) = 1$.\\
	(b) $h^{ec_1}(x_1, \dots , 1, \dots , x_k ) = 0$.\\
	(c) $h^{ec_1}(x_1, \dots, x_k) = h^{ec_1}(x_{\sigma(1)}, \dots, x_{\sigma(k)})$ for all $(x_1, \dots, x_k) \in [0,1]^k$ and all permutations $\sigma$ of $\{1, \dots, k\}$.
\end{theorem}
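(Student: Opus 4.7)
The plan is to verify all three conditions directly from the definition $h^{ec_1}(x_1, \dots, x_k) = N(x_1) \ast N(x_2) \ast \cdots \ast N(x_k)$, using only the axioms of a negation (Definition of negation) and the four axioms of a t-norm (Definition \ref{defn12}). The three items are essentially routine unfoldings, so the proof will be short; the only tool beyond the definitions that I need is the derived identity $x \ast 0 = 0$ for all $x \in [0,1]$, which follows at once from monotonicity together with the boundary case $1 \ast 0 = 0$ that itself is obtained from $T(x,1) = x$ (with $x=0$) and commutativity.

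For condition (a), I will evaluate $h^{ec_1}(0,\dots,0)$ by first substituting $N(0)=1$ in each slot to get $1 \ast 1 \ast \cdots \ast 1$, and then collapse this expression to $1$ by an easy induction on $k$ using $T(x,1) = x$ with $x=1$. For condition (b), suppose $x_i = 1$ for some index $i$. Then $N(x_i) = N(1) = 0$, so the product contains a factor $0$. Using commutativity (and associativity), I can bring that $0$ to one end and then apply $x \ast 0 = 0$ repeatedly (again by a trivial induction on the number of remaining factors) to conclude that the whole product is $0$.

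For condition (c), I will argue that every permutation $\sigma$ of $\{1, \dots, k\}$ is a composition of transpositions of adjacent indices, so it suffices to check invariance under swapping two neighbouring arguments. This is immediate from associativity together with commutativity of $\ast$, which let me reorder any two consecutive factors in the product $N(x_1) \ast \cdots \ast N(x_k)$ without changing its value.

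I do not expect any real obstacle here: the statement is essentially a bookkeeping consequence of the algebraic axioms satisfied by $N$ and $\ast$, and the mildly non-obvious step $x \ast 0 = 0$ is standard. The only thing worth stating carefully in the write-up is that the equalities in (a) and (b) should be justified by an explicit (one-line) induction on $k$, so that the associative/commutative reordering used to reduce the $k$-ary product to a binary step is transparent.
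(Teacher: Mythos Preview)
Your proposal is correct and follows essentially the same direct-verification approach as the paper: substitute the definition $h^{ec_1}(x_1,\dots,x_k)=N(x_1)\ast\cdots\ast N(x_k)$, use $N(0)=1$ and $N(1)=0$ for the boundary conditions, and invoke commutativity and associativity of the t-norm for symmetry. If anything, your write-up is more careful than the paper's (you explicitly justify $x\ast 0=0$ and the inductive collapse of the $k$-ary product, which the paper leaves implicit).
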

\begin{proof}
	Since $h^{ec_1}(x_1, x_2, \dots, x_k )=N(x_1)\ast N(x_2)\ast\dots\ast N(x_k)$, we have:\\
	(a) $h^{ec_1}(0, ... , 0) =N(0)\ast N(0)\ast\dots\ast N(0)=1\ast 1\ast\dots\ast 1= 1$,\\
	(b) $h^{ec_1}(x_1, \dots , 1, \dots , x_k ) =N(x_1)\ast\dots\ast N(1)\ast\dots\ast N(x_k)=N(x_1)\ast\dots\ast 0\ast\dots\ast N(x_k)= 0$,\\
	(c) We have 
	\begin{equation*}
		h^{ec_1}(x_1, \dots , x_k ) =N(x_1)\ast\dots\ast N(x_k)
	\end{equation*}
	 and 
		\begin{equation*}
		h^{ec_1}(x_{\sigma(1)}, \dots, x_{\sigma(k)} )=N(x_{\sigma(1)})\ast\dots\ast N(x_{\sigma(k)}).
	\end{equation*}
	 Because of the commutativity and the associativity of the t-norm, we have 
	 \begin{equation*}
	 	N(x_1)\ast\dots\ast N(x_k)=N(x_{\sigma(1)})\ast\dots\ast N(x_{\sigma(k)}). 
	 \end{equation*}
	 Thus 
	  \begin{equation*}
	 	h^{ec_1}(x_1, \dots , x_k ) = h^{ec_1}(x_{\sigma(1)}, \dots, x_{\sigma(k)}).
	 \end{equation*}	 
\end{proof}

\subsubsection{Continuous fuzzy normal encoded semantics}
In Subsubsection \ref{5.1.1}, we obtained the encoded equational system $Eq^{ec_1}$ and function $h^{ec_1}$, and also explored relationships between fuzzy normal encoded semantics and real equational semantics. In this part, we will study relationships between continuous (and strict) fuzzy normal encoded argumentation semantics and real equational semantics. Let $PL_{[0, 1]}^\star$ be a $PL_{[0,1]}$ equipped with a continuous negation $N^\star$, a continuous t-norm $\circledast$ and an $R$-implication $I_\circledast$. Let $PL_{[0, 1]}^{\star\star}$ be a $PL_{[0,1]}$ equipped with a strict negation $N^{\star\star}$, a continuous t-norm $\circledast$ and an $R$-implication $I_\circledast$.

\begin{theorem}\label{thm10}
	For an $AF=(A, R)$, an assignment $\|\cdot\|$ is a model of $ec_1(AF)$ in the $PL_{[0,1]}^\star$ iff it is a solution of the equational system $Eq_{ec_1}^\star$: $\forall a\in A$,
	\begin{equation*}
		\|a\| =\begin{cases}
			1, &\nexists c((c,a)\in R),\\
			N^\star(\|b_1\|)\circledast N^\star(\|b_2\|)\circledast\dots\circledast N^\star(\|b_k\|),  & \text{otherwise},
		\end{cases}
	\end{equation*}
	where $\{b_1, \dots , b_k \}$ is the set of all attackers of $a$. 
\end{theorem}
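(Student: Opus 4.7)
The plan is to observe that Theorem \ref{thm10} is essentially a direct specialization of Theorem \ref{thm7}. Theorem \ref{thm7} was stated for a generic $PL_{[0,1]}$ whose connectives $\neg$, $\wedge$, $\rightarrow$ are interpreted by an arbitrary negation $N$, an arbitrary t-norm $\ast$, and the associated R-implication $I_\ast$. Since the only extra hypothesis on $PL_{[0,1]}^\star$ is that $N$ is taken to be the continuous negation $N^\star$ and $\ast$ is taken to be the continuous t-norm $\circledast$ (with $I_\ast$ specialized to $I_\circledast$), there is nothing about continuity that enters the argument of Theorem \ref{thm7}, and the result transfers verbatim.

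Concretely, I would first unfold $\|ec_1(AF)\|=1$ in $PL_{[0,1]}^\star$ as a big conjunction over $a \in A$, and use that for a t-norm-conjunction the whole conjunction evaluates to $1$ iff every conjunct does (this follows from the t-norm axioms $T(x,1)=x$ and monotonicity combined with the fact that $T(x,y)=1$ forces $x=y=1$). Next I would rewrite each biconditional $a \leftrightarrow \bigwedge_{(b_i,a)\in R}\neg b_i$ via its definition as the conjunction of two implications, and then invoke the characterizing property of R-implications recalled in the preliminaries, namely $I_\circledast(x,y) = I_\circledast(y,x) = 1 \Longleftrightarrow x = y$, to conclude that $\|a\| = \|\bigwedge_{(b_i,a)\in R}\neg b_i\|$ for every $a \in A$.

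Finally, I would evaluate the right-hand side: under the interpretation of $PL_{[0,1]}^\star$, $\|\neg b_i\| = N^\star(\|b_i\|)$ and the big $\wedge$ is interpreted as the iterated t-norm $\circledast$. Splitting on whether $a$ has attackers (in which case the empty conjunction gives $\|a\|=\|\top\|=1$) or not then yields exactly the piecewise definition of $Eq_{ec_1}^\star$.

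I do not expect a genuine obstacle here; the proof is really a copy of Theorem \ref{thm7}'s proof with $N$, $\ast$, $I_\ast$ relabelled as $N^\star$, $\circledast$, $I_\circledast$. The only point worth flagging in the write-up is to make explicit that continuity of $N^\star$ and $\circledast$ plays no role in establishing the model characterization itself; it only matters downstream, when $h^{ec_1}$ is to be identified with a \emph{real} equational function (in the sense of Definition \ref{defn15}). Accordingly, the proof can be kept very short, essentially citing Theorem \ref{thm7} and noting the substitution of symbols.
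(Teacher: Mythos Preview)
Your proposal is correct and matches the paper's approach exactly: the paper's proof of Theorem~\ref{thm10} consists of the single line ``Obviously from Theorem~\ref{thm7},'' and your plan to cite Theorem~\ref{thm7} with the symbol substitutions $N\mapsto N^\star$, $\ast\mapsto\circledast$, $I_\ast\mapsto I_\circledast$ is precisely this. Your additional remark that continuity plays no role in the model characterization itself (only downstream in Theorem~\ref{thm6.11}) is a helpful clarification that goes slightly beyond what the paper writes, but it is entirely accurate.
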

\begin{proof}
	Obviously from Theorem \ref{thm7}.
\end{proof}
\begin{definition}
	The equational system $Eq_{ec_1}^\star$ associated with a continuous fuzzy normal encoded semantics is called the \textit{continuous fuzzy normal encoded equational system} (abbreviated as the \textit{continuous encoded equational system}).
	The equational function associated with $Eq_{ec_1}^{\star}$ is called the \textit{continuous encoded equational function} and it is denoted by $h_{ec_1}^\star: [0, 1]^k \to [0, 1]$, $h_{ec_1}^\star(x_1, x_2, \dots, x_k )=N^\star(x_1)\circledast N^\star(x_2)\circledast\dots\circledast N^\star(x_k)$. 
\end{definition}
\begin{theorem}\label{thm11}
	For an $AF=(A, R)$, an assignment $\|\cdot\|$ is a model of $ec_1(AF)$ in the $PL_{[0,1]}^{\star\star}$ iff it is a solution of the equational system $Eq_{ec_1}^{\star\star}$: $\forall a\in A$,
	 \begin{equation*}
		\|a\| =\begin{cases}
			1, &\nexists c((c,a)\in R),\\
			N^{\star\star}(\|b_1\|)\circledast N^{\star\star}(\|b_2\|)\circledast\dots\circledast N^{\star\star}(\|b_k\|), &\text{otherwise},
		\end{cases}
	\end{equation*}
	where $\{b_1, \dots , b_k \}$ is the set of all attackers of $a$. 
\end{theorem}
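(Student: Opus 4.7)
The plan is to treat Theorem 11 as an immediate specialization of Theorem 7, exactly in the same spirit as Theorem 10. Theorem 7 already establishes the desired equivalence for an arbitrary $PL_{[0,1]}$ equipped with any negation $N$, any t-norm $\ast$, and its induced R-implication $I_\ast$. Since $PL_{[0,1]}^{\star\star}$ is by definition a $PL_{[0,1]}$ in which $N$ is instantiated as a strict negation $N^{\star\star}$ and $\ast$ is instantiated as a continuous t-norm $\circledast$ (with R-implication $I_\circledast$), the conclusion follows by direct substitution.

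Concretely, I would begin by invoking Theorem 7 to rewrite the condition that $\|\cdot\|$ is a model of $ec_1(AF)$ in $PL_{[0,1]}^{\star\star}$ as the pointwise identity $\|a\| = \|\bigwedge_{(b_i,a) \in R} \neg b_i\|$ for every $a \in A$. I would then plug in the interpretations specific to $PL_{[0,1]}^{\star\star}$, namely $\|\neg b_i\| = N^{\star\star}(\|b_i\|)$ and $\|x \wedge y\| = x \circledast y$. In the case where $a$ has no attackers, the empty conjunction convention $\|\top\| = 1$ recorded in the preliminaries yields $\|a\| = 1$, matching the first branch of $Eq_{ec_1}^{\star\star}$. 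In the case where $a$ has attackers $\{b_1,\dots,b_k\}$, unfolding the conjunction via associativity and commutativity of $\circledast$ gives $\|a\| = N^{\star\star}(\|b_1\|) \circledast \cdots \circledast N^{\star\star}(\|b_k\|)$, matching the second branch.

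There is effectively no obstacle to overcome: the result is a one-line corollary of Theorem 7, and moreover a strict negation is in particular continuous, so $PL_{[0,1]}^{\star\star}$ is even a sub-case of the $PL_{[0,1]}^{\star}$ situation already handled by Theorem 10. The only thing worth mentioning for completeness is that strictness of $N^{\star\star}$ plays no role in the equivalence itself; it is the t-norm properties (in particular the characterization $I_\ast(x,y) = I_\ast(y,x) = 1 \iff x = y$ recalled in the preliminaries) that drive the reduction of the biimplication in $ec_1(AF)$ to the equality $\|a\| = \|\bigwedge_{(b_i,a)\in R} \neg b_i\|$. Accordingly, the proof I would write simply reads: the statement follows immediately from Theorem 7 by taking $N = N^{\star\star}$ and $\ast = \circledast$.
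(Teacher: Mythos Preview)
Your proposal is correct and mirrors the paper's own proof exactly: the paper simply writes ``Obviously from Theorem \ref{thm7}.'' Your additional remarks about the empty-conjunction convention and the irrelevance of strictness of $N^{\star\star}$ are accurate but go beyond what the paper records.
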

\begin{proof}
	Obviously from Theorem \ref{thm7}.
\end{proof}
 \begin{definition}
 	The equational system $Eq_{ec_1}^{\star\star}$ associated with a strict fuzzy normal encoded semantics is called the \textit{strict fuzzy normal encoded equational system} (abbreviated as the \textit{strict encoded equational system}).
 	The equational function associated with $Eq_{ec_1}^{\star\star}$ is called the \textit{strict encoded equational function} and it is denoted by $h_{ec_1}^{\star\star}: [0, 1]^k \to [0, 1]$, $h_{ec_1}^{\star\star}(x_1, x_2, \dots, x_k )=N^{\star\star}(x_1)\circledast N^{\star\star}(x_2)\circledast\dots\circledast N^{\star\star}(x_k)$. 
 \end{definition}
 
\begin{theorem}\label{thm6.11}
	Each continuous encoded equational system $Eq_{ec_1}^\star$ is a Gabbay real equational system.
\end{theorem}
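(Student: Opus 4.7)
The plan is to verify directly, from Definition 15, that the continuous encoded equational function $h_{ec_1}^\star$ qualifies as a real equational function, since a real equational system is by definition a system whose associated functions $h_a$ are real equational. There are three properties to check: continuity, the two boundary conditions, and symmetry.

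First I would dispose of the boundary conditions (a), (b) and the symmetry condition (c). These are purely algebraic consequences of the t-norm axioms in Definition \ref{defn12} (commutativity, associativity, boundary $T(x,1)=x$) together with $N^\star(0)=1$ and $N^\star(1)=0$ from the negation axioms. These were already established for the general encoded equational function $h^{ec_1}$ in Theorem \ref{thm14}, and since $h_{ec_1}^\star$ is just the specialization of $h^{ec_1}$ to the continuous case, the same proof applies verbatim — I would simply cite Theorem \ref{thm14}.

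The only genuinely new point is continuity. Here I would argue as follows: by hypothesis, $N^\star \colon [0,1] \to [0,1]$ is continuous and $\circledast \colon [0,1]^2 \to [0,1]$ is a continuous t-norm. Each coordinate map $(x_1,\dots,x_k) \mapsto N^\star(x_i)$ is continuous as a composition of the continuous projection with $N^\star$. Iterating the continuous binary operation $\circledast$ to form the $k$-fold product $N^\star(x_1) \circledast N^\star(x_2) \circledast \cdots \circledast N^\star(x_k)$ yields a continuous function, since compositions and finite iterations of continuous functions on compact metric spaces are continuous. Hence $h_{ec_1}^\star \colon [0,1]^k \to [0,1]$ is continuous.

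Combining these three facts, $h_{ec_1}^\star$ satisfies all the clauses of Definition \ref{defn15}, so it is a real equational function. Consequently, by the definition of a real equational system in Definition 16, the system $Eq_{ec_1}^\star$ is a Gabbay real equational system. I do not foresee a major obstacle: the argument is essentially bookkeeping, with the only substantive input being the elementary fact that compositions of continuous functions are continuous. The most one must be careful about is distinguishing $h^{ec_1}$ from $h_{ec_1}^\star$ and explicitly invoking the continuity of $N^\star$ and $\circledast$ — properties that were optional in the general setting of Subsubsection \ref{5.1.1} but are now built into the hypothesis of ``continuous'' fuzzy normal encoded semantics.
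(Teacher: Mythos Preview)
Your proposal is correct and follows essentially the same approach as the paper: the paper also cites Theorem \ref{thm14} for the boundary conditions and symmetry, and argues continuity of $h_{ec_1}^\star$ from the continuity of $N^\star$ and $\circledast$. Your write-up is slightly more detailed on the continuity step, but the structure and ingredients are identical.
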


\begin{proof}
	We need to prove that each continuous encoded equational function $h_{ec_1}^{\star}$ is a Gabbay real equational function, where $h_{ec_1}^\star(x_1, x_2, \dots, x_k )=N^\star(x_1)\circledast N^\star(x_2)\circledast\dots\circledast N^\star(x_k)$.
	Since $N^\star$ and $\circledast$ are continuous, $h_{ec_1}^{\star}$ is continuous. From Theorem \ref{thm14}, $h_{ec_1}^{\star}$ satisfies the boundary conditions and the symmetry. Thus from Definition \ref{defn15}, each continuous encoded equational function $h_{ec_1}^\star$ is a Gabbay real equational function. Therefore, each continuous encoded equational system $Eq_{ec_1}^\star$ is a Gabbay real equational system.
\end{proof}

Since each strict encoded equational system is a continuous encoded equational system, from Theorem \ref{thm6.11}, we have the corollary below.
\begin{corollary}\label{cor.new}
	Each strict encoded equational system $Eq_{ec_1}^{\star\star}$ is a real equational system.
\end{corollary}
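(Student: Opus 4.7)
The plan is to derive this as a direct consequence of Theorem \ref{thm6.11}, by showing that every strict fuzzy normal encoded semantics is in particular a continuous fuzzy normal encoded semantics, so that Theorem \ref{thm6.11} applies verbatim. The key observation is purely definitional: a strict negation $N^{\star\star}$ is, by Definition of strict negation given earlier in the paper, continuous (and strictly decreasing). Meanwhile, in the definition of $PL_{[0,1]}^{\star\star}$ the t-norm $\circledast$ is already required to be continuous.

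Thus, if we are given a strict encoded equational system $Eq_{ec_1}^{\star\star}$ with encoded equational function $h_{ec_1}^{\star\star}(x_1,\dots,x_k)=N^{\star\star}(x_1)\circledast\dots\circledast N^{\star\star}(x_k)$, the first step is to note that both $N^{\star\star}$ and $\circledast$ are continuous, hence the underlying logic $PL_{[0,1]}^{\star\star}$ satisfies the defining hypotheses of $PL_{[0,1]}^\star$. Consequently, $Eq_{ec_1}^{\star\star}$ is (a particular instance of) a continuous fuzzy normal encoded equational system $Eq_{ec_1}^{\star}$ in the sense of Theorem \ref{thm10} and the subsequent definition.

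The second step is simply to invoke Theorem \ref{thm6.11}, which asserts that every continuous encoded equational system is a Gabbay real equational system. Applying this to $Eq_{ec_1}^{\star\star}$ viewed as a continuous encoded system immediately yields the conclusion that $Eq_{ec_1}^{\star\star}$ is a real equational system.

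There is essentially no obstacle here, since the only nontrivial content, namely continuity of $h_{ec_1}^{\star}$ together with the boundary conditions and symmetry (verified in Theorem \ref{thm14}), has already been handled inside the proof of Theorem \ref{thm6.11}. The only thing one must be careful about is to explicitly record that strictness of the negation implies continuity, so that the hypotheses of Theorem \ref{thm6.11} are genuinely met; the rest is a one-line reduction.
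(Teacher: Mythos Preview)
Your proposal is correct and matches the paper's own argument essentially verbatim: the paper simply notes that every strict encoded equational system is a continuous encoded equational system (because a strict negation is continuous by definition and the t-norm is already assumed continuous), and then invokes Theorem~\ref{thm6.11}. There is nothing to add or change.
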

The relationships shown by Theorem \ref{thm6.11} and Corollary \ref{cor.new} can be expressed equivalently as follows.

\begin{corollary}\label{cor6.12}
	A model of $ec_1(AF)$ in a $PL_{[0, 1]}^\star$ is a real equational extension of the $AF$.
\end{corollary}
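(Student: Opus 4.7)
The plan is to chain together the two results that immediately precede this corollary. Starting from a model $\|\cdot\|$ of $ec_1(AF)$ in $PL_{[0,1]}^\star$, I would first invoke Theorem \ref{thm10}, which gives that $\|\cdot\|$ is precisely a solution of the continuous encoded equational system $Eq_{ec_1}^\star$ associated with the $AF$. This step is essentially a restatement and needs no further argument beyond what Theorem \ref{thm10} already provides.

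Next I would apply Theorem \ref{thm6.11}, which asserts that $Eq_{ec_1}^\star$ is itself a Gabbay real equational system. Combining these two facts, any solution of $Eq_{ec_1}^\star$ qualifies, by the definition of a real equational extension given in the preliminaries, as a real equational extension of the $AF$. Thus $\|\cdot\|$ is a real equational extension, which is the desired conclusion.

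I do not anticipate a genuine obstacle here, since the substantive work, namely verifying continuity, boundary conditions, and symmetry of $h_{ec_1}^\star$, has already been carried out in Theorems \ref{thm14} and \ref{thm6.11}. The only point requiring a little care is to make the logical reading of ``a solution of a real equational system is a real equational extension'' explicit, so that the reader sees the corollary as an immediate reformulation of Theorem \ref{thm6.11} at the level of assignments rather than at the level of equational systems. The proof can therefore be written in just two or three lines, cross-referencing Theorem \ref{thm10} and Theorem \ref{thm6.11}.
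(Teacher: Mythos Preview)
Your proposal is correct and matches the paper's own proof, which simply states that the corollary follows from Theorem \ref{thm10} and Theorem \ref{thm6.11}. Your explanation of how these two results chain together is exactly the intended argument.
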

\begin{proof}
	Follows from Theorem\ref{thm10} and Theorem \ref{thm6.11}.
\end{proof}
\begin{corollary}
	A model of $ec_1(AF)$ in a $PL_{[0, 1]}^{\star\star}$ is a real equational extension of the $AF$.
\end{corollary}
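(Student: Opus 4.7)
The plan is to observe that this corollary is an immediate consequence of two results already established in the excerpt: Theorem \ref{thm11}, which characterizes models of $ec_1(AF)$ in $PL_{[0,1]}^{\star\star}$ as exactly the solutions of the strict encoded equational system $Eq_{ec_1}^{\star\star}$, and Corollary \ref{cor.new}, which asserts that every such $Eq_{ec_1}^{\star\star}$ is in fact a Gabbay real equational system. Chaining these two facts directly yields the claim, in perfect analogy with how Corollary \ref{cor6.12} was derived from Theorem \ref{thm10} and Theorem \ref{thm6.11}.

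Concretely, the steps I would carry out are the following. First, I would take an arbitrary assignment $\|\cdot\|$ that is a model of $ec_1(AF)$ in $PL_{[0,1]}^{\star\star}$. Next, invoking Theorem \ref{thm11}, I translate this model-theoretic condition into the statement that $\|\cdot\|$ satisfies, for every $a \in A$, the fixed-point equation of the strict encoded equational system $Eq_{ec_1}^{\star\star}$, namely $\|a\| = N^{\star\star}(\|b_1\|) \circledast \cdots \circledast N^{\star\star}(\|b_k\|)$ when $a$ has attackers $\{b_1,\dots,b_k\}$, and $\|a\| = 1$ otherwise. Finally, by Corollary \ref{cor.new} the equational system $Eq_{ec_1}^{\star\star}$ is a real equational system, so $\|\cdot\|$ is by definition a real equational extension of the $AF$.

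There is essentially no obstacle left: the genuine work, which was verifying continuity (from the continuity of $N^{\star\star}$ and $\circledast$, noting that strictness of $N^{\star\star}$ implies its continuity), the boundary conditions, and the symmetry of the underlying equational function $h_{ec_1}^{\star\star}$, has already been done in Theorem \ref{thm14} and Theorem \ref{thm6.11} with its corollary. The only thing to emphasize in the write-up is that Corollary \ref{cor.new} is precisely the statement needed to upgrade a solution of the encoded equational system to a real equational extension, so the proof reduces to citing Theorem \ref{thm11} and Corollary \ref{cor.new} in sequence.
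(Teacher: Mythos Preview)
Your proposal is correct and matches the paper's own proof exactly: the paper simply states that the corollary follows from Theorem~\ref{thm11} and Corollary~\ref{cor.new}, which is precisely the two-step chain you describe.
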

\begin{proof}
	Follows from Theorem\ref{thm11} and Corollary \ref{cor.new}.
\end{proof}

By Theorem \ref{thm6.11} or Corollary \ref{cor6.12}, we can construct many specific real equational semantics via encoding $\mathcal{AF}$ to different $PL_{[0, 1]}^\star$s.

We state that the converse of Theorem \ref{thm6.11} is not true, i.e., a real equational system may not be a continuous encoded equational system $Eq_{ec_1}^{\star}$. It is obvious from Remark \ref{rem3}. However, we ask whether every real equational system whose real equational function satisfies decreasing monotonicity is a continuous encoded equational system. Below, we provide an example to address this question.
\begin{example}
	The equational system $Eq_{geometrical}^R$ \cite{ref 12} is associated with a real equational function $h_g(x_1, \dots , x_k ) =\frac{\prod_i(1-x_i)}{\prod_i(1-x_i)+\prod_ix_i}$. Using the knowledge of calculus, we know that $h_g$ satisfies the decreasing monotonicity. 
		
    Suppose $Eq_{geometrical}^R$ is derivable from the $ec_1(AF)$ model in some $PL_{[0, 1]}^\star$ with a continuous negation $N^\star$, a continuous t-norm $\circledast$ and an R-implication $I_\circledast$. 
	Consider $m \in A$ with a single attacker $n$. By $Eq_{geometrical}^R$,
	\begin{equation*}
	\|m\| = \frac{(1 - \|n\|)}{(1 - \|n\|) + \|n\|} = 1 - \|n\|.
	\end{equation*}
	Under the $ec_1(AF)$ model in $PL_{[0, 1]}^\star$, we have $\|m\| = N^\star(\|n\|)$. Thus,
	\begin{equation*}
		N^\star(\|n\|) = 1 - \|n\|.
	\end{equation*}
	Therefore, the negation is standard negation.
			
	Next, consider $a \in A$ with two attackers $b_1$ and $b_2$. By $Eq_{geometrical}^R$,
	\begin{equation*}
		\|a\| = \frac{(1 - \|b_1\|)(1 - \|b_2\|)}{(1 - \|b_1\|)(1 - \|b_2\|) + \|b_1\|\|b_2\|}. 
	\end{equation*}
   Via the $ec_1(AF)$ model in $PL_{[0, 1]}^\star$ (or by Theorem \ref{thm10}),
	\begin{equation*}
		\|a\| = N^\star(\|b_1\|) \circledast N^\star(\|b_2\|) = (1 - \|b_1\|) \circledast (1 - \|b_2\|).
	\end{equation*}
	Thus
	\begin{equation*}
		(1 - \|b_1\|) \circledast (1 - \|b_2\|) = \frac{(1 - \|b_1\|)(1 - \|b_2\|)}{(1 - \|b_1\|)(1 - \|b_2\|) + \|b_1\|\|b_2\|}.
	\end{equation*}
	Hence the t-norm $\circledast$ must satisfy
	\begin{equation*}
		x \circledast y = \frac{xy}{xy + (1-x)(1-y)}, \quad \forall x,y \in [0,1]. 
	\end{equation*}
			
	However, for $x \in (0,1)$:
	\begin{equation*}
		x \circledast 1 = \frac{x \cdot 1}{x \cdot 1 + (1-x)(1-1)} = \frac{x}{x + 0} = 1 \neq x.
	\end{equation*}
	This violates the $T_1$ in Definition \ref{defn12} ($x \circledast 1 = x$). Consequently, $\circledast$ is not a t-norm, contradicting the requirement that $PL_{[0, 1]}^\star$ operates under a t-norm $\circledast$.
			
	Thus, $Eq_{geometrical}^R$ cannot be derived from the $ec_1(AF)$ model in any $PL_{[0, 1]}^\star$.		
\end{example}

\begin{remark}
A real equational system, whose real equational function satisfies decreasing monotonicity, may not be a continuous encoded equational system.
\end{remark}

\subsection{Encoding $AF$s to specific $PL_{[0, 1]}$s}
In this subsection, we will encode $AF$s to three specific $PL_{[0, 1]}s$. The fuzzy normal encoded semantics associated with $PL_{[0, 1]}^G$ and $PL_{[0, 1]}^P$ are equivalent to real equational semantics $Eq_{\text{max}}^R$ and $Eq_{\text{inverse}}^R$, respectively. This paper proposes, for the first time, the fuzzy normal encoded semantics associated with $PL_{[0, 1]}^L$.

\subsubsection{Encoding $AF$s to $PL_{[0, 1]}^G$ and $PL_{[0, 1]}^P$}

First, we present the theorem of model equivalence between the real equational semantics $Eq_{\text{max}}^R$ and the fuzzy normal encoded semantics associated with $PL_{[0, 1]}^G$.

\begin{theorem}
An assignment is a model of an $AF$ under real equational semantics $Eq_{\text{max}}^R$ iff it is a model of $ec_1(AF)$ in $PL_{[0, 1]}^G$. 
\end{theorem}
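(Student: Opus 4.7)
The plan is to reduce the claim directly to Theorem \ref{thm7} (or its specialization Theorem \ref{thm10}), which already characterizes models of $ec_1(AF)$ in any $PL_{[0,1]}$ as solutions of the equational system built from the negation $N$ and t-norm $\ast$ of that logic. Since $PL_{[0,1]}^G$ is by definition equipped with the standard negation $N(x)=1-x$ and the Gödel t-norm $\ast = T_G = \min$, the characterization theorem says that $\|\cdot\|$ is a model of $ec_1(AF)$ in $PL_{[0,1]}^G$ iff for each $a \in A$ either $a$ has no attackers and $\|a\| = 1$, or $\|a\| = \min\{1-\|b_1\|, \dots, 1-\|b_k\|\}$ where $\{b_1, \dots, b_k\}$ is the set of attackers of $a$.

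Next I would carry out the one algebraic simplification that makes the two semantics coincide. Using the order-reversing identity
\begin{equation*}
\min_{1 \leqslant i \leqslant k}\bigl(1-\|b_i\|\bigr) \;=\; 1 - \max_{1 \leqslant i \leqslant k}\|b_i\|,
\end{equation*}
the right-hand side of the encoded equation above is exactly the recursive clause that defines $Eq_{\text{max}}^R$. The boundary clause (assigning $1$ to unattacked arguments) already matches verbatim. Hence the two equational systems describe the very same set of assignments, so the equivalence of models follows.

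Since every step other than the min/max flip is simply quoting Theorem \ref{thm7} (or \ref{thm10}) for the specific pair $(N, \ast) = (1-\cdot,\, T_G)$, there is no serious obstacle. The only thing to be careful about is that the proof must proceed through the equational-system characterization rather than attempting to unfold the biimplication and Gödel R-implication $I_{T_G}$ by hand; going through Theorem \ref{thm7} avoids any case split on whether $\|a\| \leqslant \|\bigwedge_i \neg b_i\|$ or not. I would present the argument as a two-line derivation: first cite the theorem to obtain the encoded equational system for $PL_{[0,1]}^G$, then apply the identity $\min_i(1 - x_i) = 1 - \max_i x_i$ to recognize $Eq_{\text{max}}^R$.
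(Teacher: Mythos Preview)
Your proposal is correct and matches the paper's own proof essentially line for line: the paper also specializes Theorem~\ref{thm7} to $PL_{[0,1]}^G$ (standard negation, G\"odel t-norm) to obtain $\|a\| = \min_i(1-\|b_i\|)$ for attacked arguments, and then applies the identity $\min_i(1-\|b_i\|) = 1 - \max_i \|b_i\|$ to recognize $Eq_{\text{max}}^R$.
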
 
\begin{proof}
Let the $PL_{[0, 1]}$ in Theorem \ref{thm7} be $PL_{[0, 1]}^G$ with the standard negation, the G\"{o}del t-norm $\ast$ and the R-implication $I_\ast$. Then from Theorem \ref{thm7} for an $AF=(A, R)$:\\
an assignment $\|\cdot\|$ is a model of $ec_1(AF)$ in $PL_{[0, 1]}^G$ 
\\$\Longleftrightarrow$ it is a solution of the equational system: $\forall a\in A$,
\begin{equation*}
	\|a\| =\begin{cases}
		1, &\nexists c((c,a)\in R),\\
		(1-\|b_1\|)\ast (1-\|b_2\|)\ast\dots\ast (1-\|b_k\|), &\text{otherwise},
	\end{cases}
\end{equation*}
i.e.,
\begin{equation*}
	\|a\| =\begin{cases}
		1, &\nexists c((c,a)\in R),\\
		\min_{i=1}^k(1-\|b_i\|), &\text{otherwise},
	\end{cases}
\end{equation*}
i.e.,
\begin{equation*}
	\|a\| =\begin{cases}
		1, &\nexists c((c,a)\in R),\\
		1-\max_{i=1}^k\|b_i\|, &\text{otherwise},
	\end{cases}
\end{equation*}
where $\{b_1, \dots , b_k \}$ is the set of all attackers of $a$
\\$\Longleftrightarrow$ it is a model of the $AF$ under real equational semantics $Eq_{\text{max}}^R$.
\end{proof}

From this theorem, we have the corollary below. 
\begin{corollary}
The real equational semantics $Eq_{\text{max}}^R$ is translatable by $ec_1$ and $PL_{[0, 1]}^G$, i.e., it is equivalent to a fuzzy normal encoded semantics associated with $PL_{[0, 1]}^G$.
\end{corollary}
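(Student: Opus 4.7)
The plan is to derive this corollary as an almost immediate consequence of the preceding theorem together with the definitions of \emph{translating} and \emph{translatable} given in Section \ref{sec3}. Since the theorem already establishes a biconditional between being a model of the $AF$ under $Eq_{\text{max}}^R$ and being a model of $ec_1(AF)$ in $PL_{[0,1]}^G$, essentially all that is left is a bookkeeping step matching this biconditional against the translatability definition.

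First I would recall that, by definition, a semantics $\mathfrak{LS}$ is translatable by a given $PLS$ if there exists an encoding $tr : \mathcal{AF} \to \mathcal{F_{PL}}$ such that $\{\|\cdot\| \mid \|tr(AF)\| = 1\} = \mathfrak{LS}(AF)$ for every $AF$. Second, I would verify that $ec_1$ is indeed an encoding in the sense of the definition, i.e., that the set of atomic formulas appearing in $ec_1(AF)$ coincides with $A$; this is immediate from the formula $ec_1(AF) = \bigwedge_{a \in A}(a \leftrightarrow \bigwedge_{(b,a) \in R} \neg b)$. Third, I would invoke the preceding theorem to identify the set $\{\|\cdot\| \mid \|ec_1(AF)\| = 1\}$ (computed in $PL_{[0,1]}^G$) with $Eq_{\text{max}}^R(AF)$, thereby exhibiting $ec_1$ as a translating of $\mathcal{AF}$ for $Eq_{\text{max}}^R$ with respect to $PL_{[0,1]}^G$.

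There is no genuine obstacle here: the corollary is a direct reformulation of the theorem under the translatability terminology. The only thing to be careful about is that the proposal actually addresses the universal quantification over $AF \in \mathcal{AF}$ hidden in the notion of translatability, since the preceding theorem is stated for an arbitrary $AF$. I would therefore phrase the conclusion as ``since the equivalence holds for every $AF \in \mathcal{AF}$, the encoding $ec_1$ is a translating for $Eq_{\text{max}}^R$ in $PL_{[0,1]}^G$, and hence $Eq_{\text{max}}^R$ is translatable.''
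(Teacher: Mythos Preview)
Your proposal is correct and matches the paper's approach: the paper treats this corollary as an immediate consequence of the preceding theorem (it simply writes ``From this theorem, we have the corollary below'' with no further argument), and your bookkeeping step of matching the theorem's biconditional against the definitions of \emph{translating} and \emph{translatable} is exactly the justification implicit in that line.
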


Note that, to emphasize the fuzzy normal encoded semantics associated with $PL_{[0, 1]}^G$, we also denote the real equational system $Eq_{\text{max}}^R$ as $Eq^G$. 

Then, we present the theorem of model equivalence between the real equational semantics $Eq_{\text{inverse}}^R$ and the fuzzy normal encoded semantics associated with $PL_{[0, 1]}^P$.

\begin{theorem}
	An assignment is a model of an $AF$ under real equational semantics $Eq_{\text{inverse}}^R$ iff it is a model of $ec_1(AF)$ in $PL_{[0, 1]}^P$. 
\end{theorem}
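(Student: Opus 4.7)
The plan is to mirror the structure of the proof for the preceding theorem (the $Eq_{\text{max}}^R \leftrightarrow PL_{[0,1]}^G$ equivalence), since the argument is essentially a direct instantiation of Theorem \ref{thm7} with the specific operations of $PL_{[0, 1]}^P$. First I would invoke Theorem \ref{thm7}: an assignment $\|\cdot\|$ is a model of $ec_1(AF)$ in $PL_{[0, 1]}^P$ iff for every $a \in A$ either $a$ has no attackers (in which case $\|a\| = 1$) or
\begin{equation*}
\|a\| = N(\|b_1\|) \ast N(\|b_2\|) \ast \dots \ast N(\|b_k\|),
\end{equation*}
where $\{b_1,\dots,b_k\}$ is the set of all attackers of $a$, $N$ is the negation, and $\ast$ is the t-norm of the chosen $PL_{[0, 1]}$.

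Next I would specialize to $PL_{[0, 1]}^P$: substitute the standard negation $N(x) = 1-x$ and the product t-norm $T_P(x,y) = x \cdot y$. Using associativity of the product t-norm iteratively, the right-hand side becomes $\prod_{i=1}^{k}(1 - \|b_i\|)$, so the defining equation reduces to
\begin{equation*}
\|a\| = \begin{cases} 1, & \nexists c((c,a) \in R), \\ \prod_{i=1}^{k}(1 - \|b_i\|), & \text{otherwise}, \end{cases}
\end{equation*}
which is precisely the defining equational system of $Eq_{\text{inverse}}^R$ as given in Section 2.

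Hence an assignment satisfies $ec_1(AF)$ in $PL_{[0, 1]}^P$ iff it satisfies the equational system of $Eq_{\text{inverse}}^R$, i.e., iff it is a model of the $AF$ under $Eq_{\text{inverse}}^R$. Both directions of the biconditional are covered by the same chain of equivalences, so no case analysis is required. I do not anticipate any substantive obstacle: the entire heavy lifting (unfolding of the biimplication and reduction to an equational characterization for arbitrary $PL_{[0, 1]}$) was already carried out in Theorem \ref{thm7}, and what remains is only the syntactic substitution of $N$ and $\ast$ by their product-logic interpretations together with the observation that iterated products coincide with the explicit $\prod_{i=1}^{k}(1-\|b_i\|)$ in $Eq_{\text{inverse}}^R$.
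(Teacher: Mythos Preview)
Your proposal is correct and follows essentially the same approach as the paper: both invoke Theorem~\ref{thm7}, specialize to $PL_{[0,1]}^P$ by substituting the standard negation and product t-norm, and observe that the resulting equational system coincides with the definition of $Eq_{\text{inverse}}^R$.
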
 
\begin{proof}
	Let the $PL_{[0, 1]}$ in Theorem \ref{thm7} be $PL_{[0, 1]}^P$ with the standard negation, the Product t-norm $\ast$ and the R-implication $I_\ast$. Then from Theorem \ref{thm7} for an $AF=(A, R)$:\\
	an assignment $\|\cdot\|$ is a model of $ec_1(AF)$ in $PL_{[0, 1]}^P$ 
	\\$\Longleftrightarrow$ it is a solution of the equational system: $\forall a\in A$,
	\begin{equation*}
		\|a\| =\begin{cases}
			1, &\nexists c((c,a)\in R),\\
			(1-\|b_1\|)\ast (1-\|b_2\|)\ast\dots\ast (1-\|b_k\|), &\text{otherwise},
		\end{cases}
	\end{equation*}
	i.e.,
   \begin{equation*}
		\|a\| =\begin{cases}
			1, &\nexists c((c,a)\in R),\\
			\prod_{i=1}^{k}(1-\|b_i\|), &\text{otherwise},
		\end{cases}
	\end{equation*}
	where $\{b_1, \dots , b_k \}$ is the set of all attackers of $a$
	\\$\Longleftrightarrow$ it is a model of the $AF$ under real equational semantics $Eq_{\text{inverse}}^R$.
\end{proof}

From this theorem, we have the corollary below. 
\begin{corollary}
	The real equational semantics $Eq_{\text{inverse}}^R$ is translatable by $ec_1$ and $PL_{[0, 1]}^P$, i.e., it is equivalent to a fuzzy normal encoded semantics associated with $PL_{[0, 1]}^P$.
\end{corollary}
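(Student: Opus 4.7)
The plan is to invoke the definition of translatability (given earlier: a semantics $\mathfrak{LS}$ is translatable if there exists an encoding $tr$ whose models in the given $PLS$ coincide exactly with the models of the $AF$ under $\mathfrak{LS}$) and apply the immediately preceding theorem as the witness. Concretely, I would take $tr = ec_1$ and the propositional logic system to be $PL_{[0,1]}^P$. The preceding theorem states precisely that an assignment $\|\cdot\|$ is a model of $ec_1(AF)$ in $PL_{[0,1]}^P$ if and only if $\|\cdot\|$ satisfies the defining equational system of $Eq_{\text{inverse}}^R$, so the set $\{\|\cdot\|\mid \|ec_1(AF)\|=1\}$ equals $Eq_{\text{inverse}}^R(AF)$ for every $AF\in\mathcal{AF}$.

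First I would verify that $ec_1$ qualifies as an encoding in the sense of the earlier definition: for any $AF=(A,R)$, the set of atomic formulas appearing in $ec_1(AF)=\bigwedge_{a\in A}(a\leftrightarrow\bigwedge_{(b,a)\in R}\neg b)$ is exactly $A$, since every argument $a\in A$ appears on the left-hand side of its own biimplication. Second, I would cite the preceding theorem to obtain the required model equivalence. Third, I would conclude by the definition of translatability that $Eq_{\text{inverse}}^R$ is translatable with translating $ec_1$ and associated logic system $PL_{[0,1]}^P$, which is exactly what the statement asserts.

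There is no real obstacle here; the corollary is a direct rephrasing of the theorem in the vocabulary of Section 3. The only minor subtlety worth noting is that the definition of the fuzzy normal encoded semantics explicitly names the encoding $ec_1$ together with a $PL_{[0,1]}$, so the "i.e." clause of the corollary is automatic once the first part is established. Hence the proof will be a one-line reference to the previous theorem.
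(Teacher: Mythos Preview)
Your proposal is correct and matches the paper's approach: the corollary is stated immediately after the theorem with the indication that it follows directly from it, and your plan to invoke the definition of translatability with $tr=ec_1$ and the preceding theorem as witness is exactly the intended one-line argument. The extra verification that $ec_1$ is an encoding is more detail than the paper spells out here, but it is harmless and correct.
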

Note that to emphasize the fuzzy normal encoded semantics associated with $PL_{[0, 1]}^P$, we also denote the real equational system $Eq_{\text{inverse}}^R$ as $Eq^P$.

\subsubsection{Encoding $AF$s to $PL_{[0, 1]}^L$}
$PL_{[0, 1]}^L$ is one of the most important fuzzy propositional logic systems in \cite{ref 51}. It is necessary for us to propose a fuzzy normal encoded semantics associated with $PL_{[0, 1]}^L$. This encoded semantics exemplifies the power of the general encoding methodology.

We first present an equational system $Eq^L$ and then prove that the equational system $Eq^L$ is a fuzzy normal encoded equational system. For a given $AF=(A,R)$, we propose an equational system $Eq^L$: $\forall a\in A$
\begin{equation*}
	\|a\|=\begin{cases}
		1, &\nexists c((c,a)\in R),\\
		0, & \sum_{i=1}^{k} \| b_i \| \geqslant 1,\\
		1-\sum_{i=1}^{k} \| b_i \|, &\sum_{i=1}^{k} \| b_i \| < 1,
	\end{cases}
\end{equation*}
where $\|\cdot \|$ is a labelling function and $\{b_1, \dots , b_k \}$ is the set of all attackers of $a$. This equational system $Eq^L$ shows that $a$ is totally defeated, i.e., $\|a\|=0$, when the sum of values of all attackers of $a$ is not less than 1. 
Then we have the following theorem:
\begin{theorem}
For an $AF=(A, R)$, an assignment $\|\cdot\|$ is a model of $ec_1(AF)$ in $PL_{[0,1]}^L$ iff it is a solution of the equational system $Eq^L$.
\end{theorem}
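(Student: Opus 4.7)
The plan is to invoke Theorem \ref{thm7} specialized to $PL_{[0,1]}^L$, reducing the statement to the computation of an iterated Łukasiewicz t-norm. By Theorem \ref{thm7}, an assignment $\|\cdot\|$ is a model of $ec_1(AF)$ in $PL_{[0,1]}^L$ iff for every $a\in A$ with attackers $\{b_1,\dots,b_k\}$ we have
\[
\|a\| = N(\|b_1\|) \ast_L N(\|b_2\|) \ast_L \cdots \ast_L N(\|b_k\|),
\]
where $N(x)=1-x$ is the standard negation and $\ast_L$ is the Łukasiewicz t-norm $T_L(x,y)=\max\{0,x+y-1\}$; if $a$ has no attackers, the empty conjunction evaluates to $1$, which already matches the first branch of $Eq^L$.

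The heart of the argument is then the identity
\[
(1-\|b_1\|) \ast_L (1-\|b_2\|) \ast_L \cdots \ast_L (1-\|b_k\|) \;=\; \max\Bigl\{0,\; 1 - \sum_{i=1}^{k}\|b_i\|\Bigr\},
\]
which I would establish by induction on $k\geq 1$. The base case $k=1$ is immediate. For the inductive step, by associativity I would rewrite the $(k{+}1)$-fold product as $\bigl(\max\{0,\,1-\sum_{i=1}^{k}\|b_i\|\}\bigr)\ast_L(1-\|b_{k+1}\|)$ and perform a short case split on whether $\sum_{i=1}^{k}\|b_i\|\geq 1$ (in which case the left operand is $0$ and both sides vanish) or $\sum_{i=1}^{k}\|b_i\|<1$ (in which case unfolding $T_L$ gives $\max\{0,\,1-\sum_{i=1}^{k+1}\|b_i\|\}$).

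To finish, I would observe that the piecewise function $\max\{0,\,1-\sum_{i=1}^{k}\|b_i\|\}$ equals $0$ precisely when $\sum_{i=1}^{k}\|b_i\|\geq 1$ and equals $1-\sum_{i=1}^{k}\|b_i\|$ precisely when $\sum_{i=1}^{k}\|b_i\|<1$, thereby matching the remaining two branches of $Eq^L$. Combining these with the no-attacker branch from Theorem \ref{thm7} yields the desired equivalence.

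The main obstacle is really only the inductive case split: one must check that the truncation at $0$ in the Łukasiewicz t-norm propagates correctly, so that once the partial sum crosses $1$ the value stays at $0$ regardless of additional attackers. This is a routine but slightly delicate calculation; beyond it, the proof is a direct instantiation of Theorem \ref{thm7}.
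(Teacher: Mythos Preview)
Your proposal is correct and follows essentially the same approach as the paper: both invoke Theorem~\ref{thm7} to reduce the claim to computing an iterated {\L}ukasiewicz t-norm, and both establish the closed form by induction with a case split on whether the running sum has crossed the threshold. The only cosmetic difference is that the paper first proves the general identity $x_1\ast\cdots\ast x_n=\max\{0,\sum x_i-(n-1)\}$ and then substitutes $x_i=1-\|b_i\|$, whereas you carry out the induction directly on the substituted values; the computations are the same.
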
 
\begin{proof}
		First, for $\forall j\in\{1,2,\dots,n\}$, $x_j\in[0,1]$, we prove the equation
	\begin{equation}\label{eq6.1}
		x_1\ast x_2\ast\dots\ast x_n=\begin{cases}
			0 &\sum_{i=1}^{n} x_i  \leqslant n-1\\
			\sum_{i=1}^{n} x_i-n+1 &\sum_{i=1}^{n} x_i > n-1
		\end{cases}
	\end{equation}
	by induction, where $\ast$ is {\L}ukasiewicz t-norm. For brevity we denote $x_1\ast x_2\ast\dots\ast x_n$ by $T_n^\ast$. 
	
	Since $x\ast y= \max\{0, x +y-1\}$, Equation \ref{eq6.1} holds for $n=2$, i.e.,
	\begin{equation*}
		x_1\ast x_2=
		\begin{cases}
			0 & x_1+x_2 \leqslant 1 \\
			x_1+x_2-1 & x_1+x_2 > 1
		\end{cases}.
	\end{equation*}
	
	If we assume that for $n=k$ Equation \ref{eq6.1} holds, i.e.,
	\begin{equation*}\label{eq6.3}
		T_k^\ast=\begin{cases}
			0 &\sum_{i=1}^{k} x_i \leqslant k-1\\
			\sum_{i=1}^{k} x_i-k+1 &\sum_{i=1}^{k} x_i > k-1
		\end{cases},
	\end{equation*}
	then for $n=k+1$ we have
	\begin{equation}\label{eq6.4}
		T_{k+1}^\ast=T_k^\ast \ast x_{k+1}=\begin{cases}
			0 & T_k+x_{k+1}\leqslant 1 \\
			T_k+x_{k+1}-1 & T_k+x_{k+1}>1
		\end{cases}.
	\end{equation}
	Then we discuss three cases below.
	\begin{itemize}
		\item Case 1: $\sum_{i=1}^{k} x_i \leqslant k-1$\\
		$\Longleftrightarrow T_k=0$\\
		$\Longrightarrow T_{k}+x_{k+1}= x_{k+1}\leqslant1$\\
		$\Longrightarrow T_{k+1}=0.$ 
		\item Case 2: $k-1<\sum_{i=1}^{k} x_i\leqslant k-x_{k+1}$\\
		$\Longrightarrow T_k=\sum_{i=1}^{k} x_i-k+1$\\
		$\Longrightarrow T_k+x_{k+1}=\sum_{i=1}^{k} x_i+x_{k+1}-k+1\leqslant k-x_{k+1}+x_{k+1}-k+1=1$\\
		$\Longrightarrow T_{k+1}=0.$ 
		\item Case 3: $\sum_{i=1}^{k} x_i> k-x_{k+1}$\\
		$\Longrightarrow x_1+x_2+\dots+x_k> k-1$\\
		$\Longrightarrow T_k=\sum_{i=1}^{k} x_i-k+1$\\
		$\Longrightarrow T_k+x_{k+1}=\sum_{i=1}^{k} x_i+x_{k+1}-k+1>k-x_{k+1}+x_{k+1}-k+1=1.$
	\end{itemize}
	From the three cases and Equation \ref{eq6.4}, we have 
	\begin{equation*}
		T_{k+1}^\ast=
		\begin{cases}
			0 &\sum_{i=1}^{k+1} x_i \leqslant k\\
			\sum_{i=1}^{k+1} x_i-k &\sum_{i=1}^{k+1} x_i>k
		\end{cases},
	\end{equation*}
	i.e.,
	\begin{equation*}
		T_{k+1}^\ast=\begin{cases}
			0 &\sum_{i=1}^{k+1} x_i \leqslant (k+1)-1\\
			\sum_{i=1}^{k+1} x_i-(k+1)+1 &\sum_{i=1}^{k+1} x_i > (k+1)-1
		\end{cases}.
	\end{equation*}
	Then Equation \ref{eq6.1} holds for $n=k+1$. Thus Equation \ref{eq6.1} holds for $\forall n\in \mathbb{N}^+$, where $\mathbb{N}^+$ is the set of all positive integers.
	
	Second, we prove the final result.
	Let the $PL_{[0, 1]}$ in Theorem \ref{thm7} be $PL_{[0, 1]}^L$ with the standard negation, the {\L}ukasiewicz t-norm $\ast$ and the R-implication $I_\ast$. Then from Theorem \ref{thm7} for an $AF=(A, R)$:\\
	an assignment $\|\cdot\|$ is a model of $ec_1(AF)$ in $PL_{[0, 1]}^L$ 
	\\$\Longleftrightarrow$ it is a solution of the equational system: $\forall a\in A$,
	\begin{equation*}
		\|a\|=\begin{cases}
			1, &\nexists c((c,a)\in R),\\
			(1-\|b_1\|)\ast (1-\|b_2\|)\ast\dots\ast (1-\|b_k\|), & \text{otherwise},
		\end{cases}
	\end{equation*}
	i.e., 	(by Equation \ref{eq6.1})
	\begin{equation*}
		\|a\|=\begin{cases}
			1, &\nexists c((c,a)\in R),\\
			0, & \sum_{i=1}^{k} \| b_i \| \geqslant 1,\\
			1-\sum_{i=1}^{k} \| b_i \|, &\sum_{i=1}^{k} \| b_i \| < 1,
		\end{cases}
	\end{equation*}
	where $\{b_1, \dots , b_k \}$ is the set of all attackers of $a$
	\\$\Longleftrightarrow$ it is a solution of equational system $Eq^L$.
\end{proof}
From this theorem and Definition \ref{defn29}, we have the corollary below. 
\begin{corollary}
The equational system $Eq^L$ is a fuzzy normal encoded equational system associated with $PL_{[0, 1]}^L$.
\end{corollary}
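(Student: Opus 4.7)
The plan is to derive this corollary as a direct consequence of the preceding theorem together with Definition \ref{defn29}. Since the corollary does not introduce any genuinely new content beyond a change of nomenclature, the proof will be essentially a citation argument; the substantive work has already been carried out in establishing the equivalence between models of $ec_1(AF)$ in $PL_{[0,1]}^L$ and solutions of $Eq^L$.

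First I would recall that the fuzzy normal encoded semantics associated with $PL_{[0,1]}^L$ is, by definition, the encoded semantics obtained from the pair $(ec_1, PL_{[0,1]}^L)$, and that $PL_{[0,1]}^L$ is indeed a $PL_{[0,1]}$ (it is equipped with the standard negation, the {\L}ukasiewicz t-norm $T_L$, and the R-implication $I_{T_L}$). Next I would invoke the theorem immediately preceding the corollary: for every $AF=(A,R)$, an assignment $\|\cdot\|$ is a model of $ec_1(AF)$ in $PL_{[0,1]}^L$ if and only if it is a solution of $Eq^L$. This tells me that the set of models produced by the fuzzy normal encoded semantics associated with $PL_{[0,1]}^L$ coincides, on every $AF$, with the set of solutions of $Eq^L$.

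Finally I would close the argument by appealing to Definition \ref{defn29}: the equational system $Eq^{ec_1}$ associated with a fuzzy normal encoded semantics is, by definition, called the fuzzy normal encoded equational system. Since the fuzzy normal encoded semantics associated with $PL_{[0,1]}^L$ is governed exactly by $Eq^L$, the system $Eq^L$ fills the role of $Eq^{ec_1}$ in this specific instantiation, and is therefore a fuzzy normal encoded equational system associated with $PL_{[0,1]}^L$.

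There is no real obstacle in this proof; the only care needed is to make explicit that $PL_{[0,1]}^L$ qualifies as a $PL_{[0,1]}$ so that Definition \ref{defn29} is genuinely applicable, and to note that the identification $Eq^{ec_1}=Eq^L$ (in the case of {\L}ukasiewicz connectives) is precisely the content of the preceding theorem. All the combinatorial work—the induction showing $x_1\ast\cdots\ast x_n=\max\{0,\sum_i x_i-n+1\}$ for the {\L}ukasiewicz t-norm—was already performed in the proof of that theorem, so the corollary reduces to matching definitions.
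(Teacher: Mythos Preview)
Your proposal is correct and mirrors the paper's own justification: the paper simply states that the corollary follows from the preceding theorem together with Definition~\ref{defn29}, which is exactly the citation argument you outline. There is nothing to add—the substantive content lies entirely in the preceding theorem, as you note.
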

From Theorem \ref{thm6.11} and the continuity of the fuzzy normal encoded equational system $Eq^L$, we have the corollary below. 
\begin{corollary}
	The fuzzy normal encoded equational system $Eq^L$ is a real equational system.
\end{corollary}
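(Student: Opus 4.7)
The plan is to invoke Theorem \ref{thm6.11} directly, after verifying that $Eq^L$ fits the hypotheses of that theorem, namely that it is a continuous fuzzy normal encoded equational system. The previous corollary has already established that $Eq^L$ is the fuzzy normal encoded equational system associated with $PL_{[0,1]}^L$, so what remains is to check the continuity condition.

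To this end, I would first recall that $PL_{[0,1]}^L$ is equipped with the standard negation $N(x) = 1-x$ and the Łukasiewicz t-norm $T_L(x, y) = \max\{0, x+y-1\}$. Both of these are continuous functions on $[0,1]$ and $[0,1]^2$ respectively, the negation being affine and the t-norm being the maximum of two continuous functions. Hence $PL_{[0,1]}^L$ qualifies as a $PL_{[0,1]}^\star$ in the sense of Subsubsection 5.1.1, with $N^\star = N$ and $\circledast = T_L$.

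Consequently, the associated encoded equational function
\begin{equation*}
h^{ec_1}(x_1, \dots, x_k) = N(x_1) \ast N(x_2) \ast \dots \ast N(x_k)
\end{equation*}
(with $\ast = T_L$) is the composition of continuous maps and is therefore continuous, so $Eq^L$ is a continuous fuzzy normal encoded equational system, i.e., a $Eq_{ec_1}^\star$. Applying Theorem \ref{thm6.11} then immediately yields that $Eq^L$ is a Gabbay real equational system, which is the desired conclusion.

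Since all the nontrivial work (boundary conditions, symmetry, continuity) has already been carried out in Theorems \ref{thm14} and \ref{thm6.11}, there is no substantive obstacle here; the proof is essentially an instantiation argument. The only thing to be a little careful about is flagging explicitly that the standard negation and the Łukasiewicz t-norm are continuous, since that is the one hypothesis the corollary actually relies on beyond what the previous corollary already delivers.
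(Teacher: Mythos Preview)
Your proposal is correct and matches the paper's approach exactly: the paper simply states that the corollary follows from Theorem~\ref{thm6.11} together with the continuity of $Eq^L$, and your argument spells out precisely this, including the explicit verification that the standard negation and the \L ukasiewicz t-norm are continuous so that $PL_{[0,1]}^L$ is a $PL_{[0,1]}^\star$.
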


\subsection{Relationships between complete semantics and fuzzy normal encoded semantics}
In Subsection \ref{sub5.1}, we have explored the relationships between encoded semantics and real equational semantics. In this subsection, we examine the relationships between complete semantics and encoded semantics. We will consider encoded semantics associated with two important t-norm classes.

Suppose that the $PL_{[0,1]}$ in Theorem \ref{thm7} is equipped with a negation $N$, a $\frac{1}{2}$-idempotent t-norm $\circledcirc$ and an R-implication $I_\circledcirc$. We denote the corresponding encoded semantics by $Eq_\circledcirc^{ec_1}$:  $\forall a\in A,$
\begin{equation*}
	\|a\| =\begin{cases}
		1, &\nexists c((c,a)\in R),\\
		N(\|b_1\|)\circledcirc N(\|b_2\|)\circledcirc\dots\circledcirc N(\|b_k\|), &\text{otherwise},
	\end{cases}
\end{equation*}
where $\{b_1, \dots , b_k \}$ is the set of all attackers of $a$.  And  we denote an encoded equational function associated with $Eq_\circledcirc^{ec_1}$ by $h_\circledcirc^{ec_1}: [0, 1]^k \to [0, 1]$, $h_\circledcirc^{ec_1}(x_1, x_2, \dots, x_k )=N(x_1)\circledcirc N(x_2)\circledcirc\dots\circledcirc N(x_k)$.
\begin{theorem}\label{thm16}
	For a given $AF$ and an encoded semantics $Eq_\circledcirc^{ec_1}$, if an assignment $\|\cdot\|$ is a model of the $AF$ under the $Eq_\circledcirc^{ec_1}$, then $T_{\text{com}}(\|\cdot\|)$ is a model of the $AF$ under the complete semantics.
\end{theorem}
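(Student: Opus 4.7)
The plan is to set $lab := T_{\text{com}}(\|\cdot\|)$ and verify that $lab$ satisfies each clause of Definition \ref{defn7} (the characterization of a complete labelling) by a case analysis on the value $\|a\|$. The key ingredients are the equational form of $Eq_\circledcirc^{ec_1}$ from Theorem \ref{thm7}, the t-norm bound $x \circledcirc y \leqslant \min\{x,y\}$ (which forces every factor of an equation $y_1 \circledcirc \cdots \circledcirc y_k = 1$ to equal $1$, and which, together with $0 \circledcirc x = 0$, lets me pass from a zero product to a zero factor when $\circledcirc$ has no zero divisors), the boundary values $N(0) = 1$ and $N(1) = 0$, and the three clauses of $T_{\text{com}}$ in Definition \ref{defn21}.

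If $\|a\| = 1$, then $lab(a) = 1$ and I must show that either $a$ has no attackers or every attacker $b_i$ has $lab(b_i) = 0$. In the non-trivial subcase, $N(\|b_1\|) \circledcirc \cdots \circledcirc N(\|b_k\|) = 1$ and the t-norm bound give $N(\|b_i\|) = 1$ for every $i$, so $\|b_i\| \neq 1$. Unrolling the equational system one level at each $b_i$ (which must have attackers, else $\|b_i\| = 1$) should then produce an attacker $c$ of $b_i$ with $\|c\| = 1$, which gives $lab(c) = 1$ and hence $lab(b_i) = 0$ by the defining clause of $T_{\text{com}}$.

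If $\|a\| = 0$, then $T_{\text{com}}$ assigns $lab(a) = 0$ exactly when some attacker $b_i$ has $\|b_i\| = 1$, agreeing directly with Definition \ref{defn7}, and $lab(a) = \frac{1}{2}$ otherwise. In the latter subcase, $N(\|b_1\|) \circledcirc \cdots \circledcirc N(\|b_k\|) = 0$ together with $N(0) = 1$ rules out the possibility that every $\|b_i\| = 0$; hence some $b_i$ satisfies $\|b_i\| \in (0,1)$, and for this $b_i$ one checks that $lab(b_i) \neq 1$ (clear) and $lab(b_i) \neq 0$ (because $lab(b_i) = 0$ would, by the equation for $b_i$, force $\|b_i\| = 0$), whence $lab(b_i) = \frac{1}{2}$ and the "otherwise" clause of Definition \ref{defn7} holds. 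The analogous argument handles $\|a\| \in (0,1)$, where $T_{\text{com}}$ gives $0$ if some attacker is valued $1$ and $\frac{1}{2}$ otherwise.

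The hardest step is the descent in the case $\|a\| = 1$: extracting $\|b_i\| = 0$ from $N(\|b_i\|) = 1$, and then from $N(\|c_1^i\|) \circledcirc \cdots \circledcirc N(\|c_{k_i}^i\|) = 0$ locating an attacker $c_j^i$ with $\|c_j^i\| = 1$. Both inversions lean on fine structure of $N$ and $\circledcirc$: the first needs either $N^{-1}(1) = \{0\}$ (satisfied by the standard negation and by the canonical negations induced by the Gödel and Product R-implications) or strictness of $N$; the second needs zero-divisor-free behaviour of $\circledcirc$ near $0$. The $\frac{1}{2}$-idempotence of $\circledcirc$ plays the complementary algebraic role of preserving intermediate truth values, which is what makes $T_{\text{com}}$ faithfully track the fixed-point structure of $Eq_\circledcirc^{ec_1}$ across the three cases.
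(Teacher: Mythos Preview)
Your approach is substantively the same as the paper's: both are case analyses driven by the equational form $\|a\| = N(\|b_1\|)\circledcirc\cdots\circledcirc N(\|b_k\|)$, the t-norm bound $x\circledcirc y\leqslant\min\{x,y\}$, and the zero-divisor-free property of $\circledcirc$. The organizational difference is that the paper splits on the value of $\|a\|_{\text{com}}$ rather than on $\|a\|$, and in its first case it explicitly isolates the equivalence $\|x\|=0\Longleftrightarrow\|x\|_{\text{com}}=0$ (the forward direction using zero-divisor-freeness). That equivalence is then reused in the $\|a\|_{\text{com}}=1$ case to pass directly from $\|b_i\|=0$ to $\|b_i\|_{\text{com}}=0$, which replaces your two-level unrolling by a one-line appeal. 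Your descent accomplishes the same thing inline; the logic is fine, just less tidy.

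Two small corrections. First, your closing remark about $\tfrac{1}{2}$-idempotence is misplaced: the paper's setup sentence calling $\circledcirc$ ``$\tfrac{1}{2}$-idempotent'' is a typo---its own proof invokes ``the zero-divisor-free t-norm $\circledcirc$'' and never uses idempotence at $\tfrac{1}{2}$. That property belongs to the \emph{next} theorem (for $\odot$), not this one, so it plays no role here. Second, your stated inversion conditions are incomplete: besides $N^{-1}(1)=\{0\}$ you also need $N^{-1}(0)=\{1\}$ to conclude $\|c_j^i\|=1$ from $N(\|c_j^i\|)=0$ in the descent step. (The paper's proof uses both of these implicitly as well.) With that, your subcase ``$\|a\|=0$ and no attacker has value $1$'' is actually vacuous under the intended hypotheses, though the argument you give there is exactly what is needed in the genuine $\|a\|\in(0,1)$ case.
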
 

\begin{proof}
	For a given $AF$, $\forall a\in A$, let $\{b_1, b_2, \dots, b_k\}$ be the set of all attackers of $a$. If an assignment $\|\cdot\|$ is a model of the $AF$ under the $Eq_\circledcirc^{ec_1}$, then denoting $T_{\text{com}}(\|\cdot\|)$ by $\|\cdot\|_{\text{com}}$, we have  
	\begin{equation*}
		\|a\|_{\text{com}}=
		\begin{cases}
			1 & \text{iff } \|a\|=1, \\
			0 & \text{iff } \exists b_i((b_i,a)\in R): \|b_i\|=1,\\
			\frac{1}{2} & \text{otherwise}.
		\end{cases}
	\end{equation*}
	Then we need to discuss three cases.
	\begin{itemize}
		\item Case 1, $\|a\|_{\text{com}}=0$.\\
		$\|a\|_{\text{com}}=0$
		\\$\Longleftrightarrow$  $\exists b_i((b_i,a)\in R): \|b_i\|=\|b_i\|_{\text{com}}=1$ 
		\\$\Longleftrightarrow$ $\|a\|_{\text{com}}=0$ satisfies complete semantics.

		In this case, we have:\\
		$\exists b_i((b_i,a)\in R): \|b_i\|=\|b_i\|_{\text{com}}=1$ 
		\\$\Longrightarrow$ $\|a\|=h_\circledcirc^{ec_1}(\|b_1\|, \|b_2\|,\dots,\|b_k\|) = N(\|b_1\|)\circledcirc N(\|b_2\|)\circledcirc\dots\circledcirc N(\|b_k\|)=0$.
		\\We also have:\\ 
		$\|a\|=0$ 
		\\$\Longrightarrow$ $N(\|b_1\|)\circledcirc N(\|b_2\|)\circledcirc\dots\circledcirc N(\|b_k\|)=0$
		\\$\Longrightarrow$ by the zero-divisor-free t-norm $\circledcirc$, 
		$\exists b_i((b_i,a)\in R): N(\|b_i\|)=0$ 
		\\$\Longrightarrow$ $\exists b_i((b_i,a)\in R): \|b_i\|=\|b_i\|_{\text{com}}=1$.
		\\Thus we have:\\
		$\|a\|=0$
		\\$\Longleftrightarrow$ $\exists b_i((b_i,a)\in R): \|b_i\|=\|b_i\|_{\text{com}}=1$
		\\$\Longleftrightarrow$ $\|a\|_{\text{com}}=0$.
		\item Case 2, $\|a\|_{\text{com}}=1$.\\
		$\|a\|_{\text{com}}=\|a\|=1$
		\\$\Longleftrightarrow$ $\nexists c((c,a)\in R)$ or $h_\circledcirc^{ec_1}(\|b_1\|, \|b_2\|,\dots,\|b_k\|) = N(\|b_1\|)\circledcirc N(\|b_2\|)\circledcirc\dots\circledcirc N(\|b_k\|)=1$ 
		\\$\Longleftrightarrow$ $\nexists c((c,a)\in R)$ or $N(\|b_1\|)= N(\|b_2\|)=\dots= N(\|b_k\|)=1$ 
		\\$\Longleftrightarrow$ $\nexists c((c,a)\in R)$ or $\|b_1\|= \|b_2\|=\dots= \|b_k\|=0$
		\\$\Longleftrightarrow$ $\nexists c((c,a)\in R)$ or $\|b_1\|_{\text{com}}= \|b_2\|_{\text{com}}=\dots= \|b_k\|_{\text{com}}=0$
		\\$\Longleftrightarrow$ $\|a\|_{\text{com}}=1$ satisfies complete semantics.
		\item Case 3, $\|a\|_{\text{com}}=\frac{1}{2}$.\\
		$\|a\|_{\text{com}}=\frac{1}{2}$
		\\$\Longleftrightarrow$ not the case $\|a\|_{\text{com}}=1$ or $\|a\|_{\text{com}}=0$
		\\$\Longleftrightarrow$ neither $\|a\|_{\text{com}}=1$ nor $\|a\|_{\text{com}}=0$ satisfies the complete semantics
		\\$\Longleftrightarrow$ $\|a\|_{\text{com}}=\frac{1}{2}$ satisfies complete semantics.
	\end{itemize}
	From the three cases, for a given model $\|\cdot\|$ of an $AF$ under $Eq_\circledcirc^{ec_1}$, $T_{\text{com}}(\|\cdot\|)$ is a model of the $AF$ under the complete semantics.
\end{proof}
This theorem shows that each model of an $AF$ under a encoded semantics $Eq_\circledcirc^{ec_1}$ can be turned into a model of the $AF$ under the complete semantics by the ternarization function. 

Next we explore the situation that a model of an $AF$ under complete semantics is a model of the $AF$ under a characterized encoded semantics. We give the characterized t-norm below.
\begin{definition}
	An \textit{$\frac{1}{2}$-idempotent t-norm} is a t-norm $\odot$ that satisfies $\frac{1}{2}\odot \frac{1}{2}=\frac{1}{2}$.
\end{definition}
Suppose that the $PL_{[0,1]}$ in Theorem \ref{thm7} is equipped with a standard negation, a $\frac{1}{2}$-idempotent t-norm $\odot$ and an R-implication $I_\odot$. We denote the corresponding encoded semantics by $Eq_\odot^{ec_1}$:  $\forall a\in A,$
\begin{equation}\label{odot}
	\|a\| =\begin{cases}
		1, &\nexists c((c,a)\in R),\\
		(1-\|b_1\|)\odot (1-\|b_2\|)\odot\dots\odot (1-\|b_k\|), &\text{otherwise},
	\end{cases}
\end{equation}
where $\{b_1, \dots , b_k \}$ is the set of all attackers of $a$.  And  we denote an encoded equational function associated with $Eq_\odot^{ec_1}$ by $h_\odot^{ec_1}: [0, 1]^k \to [0, 1]$, $h_\odot^{ec_1}(x_1, x_2, \dots, x_k )=(1-x_1)\odot (1-x_2)\odot\dots\odot (1-x_k)$.
\begin{theorem}\label{idem}
	For a given $AF$ and an encoded semantics $Eq_\odot^{ec_1}$, if an assignment $\|\cdot\|$ is a model of the $AF$ under the complete semantics then $\|\cdot\|$ is a model of the $AF$ under the $Eq_\odot^{ec_1}$.
\end{theorem}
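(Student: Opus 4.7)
The plan is to verify directly, case by case on the value $\|a\|$ under complete semantics, that the defining equation of $Eq_\odot^{ec_1}$ (Equation \ref{odot}) is satisfied at $a$. Since a complete labelling takes values only in $\{0,\tfrac{1}{2},1\}$, the quantities $1-\|b_i\|$ appearing in the right-hand side of (\ref{odot}) are always in $\{0,\tfrac{1}{2},1\}$, so the entire analysis reduces to evaluating the t-norm $\odot$ on these three values, using only the axioms listed in Definition \ref{defn12} together with the extra hypothesis $\tfrac{1}{2}\odot\tfrac{1}{2}=\tfrac{1}{2}$.

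First I would handle the trivial cases. If $a$ has no attackers, the two equations agree immediately. If $\|a\|=1$ under complete semantics and $a$ has attackers, then every attacker satisfies $\|b_i\|=0$, so each $1-\|b_i\|=1$, and repeated application of the identity $x\odot 1=x$ gives the right-hand side value $1$. If $\|a\|=0$, there exists some attacker $b_j$ with $\|b_j\|=1$, hence $1-\|b_j\|=0$; using $0\odot 1=0$ (from $T(x,1)=x$) together with monotonicity $x\odot y\leqslant x\odot 1$ applied with $x=0$, together with commutativity and associativity, the product collapses to $0$, matching the complete labelling.

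The main point is the middle case $\|a\|=\tfrac{1}{2}$. Here the definition of the complete labelling tells me that no attacker is labelled $1$ and at least one attacker is labelled $\tfrac{1}{2}$, so each $1-\|b_i\|\in\{\tfrac{1}{2},1\}$ with at least one term equal to $\tfrac{1}{2}$. By commutativity and associativity of $\odot$, I can regroup so that the ones appear at the end and drop out via $x\odot 1=x$, leaving a product consisting solely of $\tfrac{1}{2}$'s. A short induction on the number of $\tfrac{1}{2}$-terms, with the base case given by the $\tfrac{1}{2}$-idempotency hypothesis $\tfrac{1}{2}\odot\tfrac{1}{2}=\tfrac{1}{2}$ and the inductive step $\underbrace{\tfrac{1}{2}\odot\cdots\odot\tfrac{1}{2}}_{n}\odot\tfrac{1}{2}=\tfrac{1}{2}\odot\tfrac{1}{2}=\tfrac{1}{2}$, yields value $\tfrac{1}{2}$ as required.

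The hardest step is precisely this middle one, because it is the only place where the $\tfrac{1}{2}$-idempotency of $\odot$ is essential; the other two cases use only the general t-norm axioms. Once all three cases are verified for every $a\in A$, the assignment $\|\cdot\|$ satisfies (\ref{odot}) for every argument and is therefore a model of the $AF$ under $Eq_\odot^{ec_1}$.
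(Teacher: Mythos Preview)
Your proposal is correct and follows essentially the same three-case analysis as the paper's proof. If anything, you are more explicit than the paper in justifying the key computations: the paper simply asserts that $(1-\|b_1\|)\odot\cdots\odot(1-\|b_k\|)=\tfrac{1}{2}$ in the middle case, whereas you spell out the regrouping via commutativity/associativity, the elimination of the $1$'s via the identity axiom, and the short induction using $\tfrac{1}{2}$-idempotency.
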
 

\begin{proof}
	For a given $AF$, $\forall a\in A$, let $\{b_1, b_2, \dots, b_k\}$ be the set of all attackers of $a$. If an assignment $\|\cdot\|$ is a model of the $AF$ under the complete semantics, then we have
	\begin{equation*}
		\|a\| = 
		\begin{cases} 
			1 & \text{iff } \nexists (b,a) \in R \text{ or } \forall (b_i,a) \in R: \|b_i\| = 0; \\ 
			0 & \text{iff } \exists (b_i,a) \in R: \|b_i\| = 1; \\ 
			\frac{1}{2} & \text{otherwise}.
		\end{cases}
	\end{equation*}
		Then we need to discuss three cases.
	\begin{itemize}
		\item Case 1, $\|a\|=0$ under complete semantics.\\
		$\|a\|=0$ under the complete semantics
		\\$\Longleftrightarrow$  $\exists b_i((b_i,a)\in R): \|b_i\|=1$ 
		\\$\Longrightarrow$ $h_\odot^{ec_1}(\|b_1\|, \|b_2\|,\dots,\|b_k\|) = (1-\|b_1\|)\odot (1-\|b_2\|)\odot\dots\odot (1-\|b_k\|)=0$
		\\$\Longleftrightarrow$ 
		$\|a\|=h_\odot^{ec_1}(\|b_1\|, \|b_2\|,\dots,\|b_k\|)$, i.e., Equation \ref{odot} holds under this case.
		\item Case 2, $\|a\|=1$ under the complete semantics.\\
		$\|a\|=1$ under the complete semantics
		\\$\Longleftrightarrow$ $\nexists c((c,a)\in R)$ or $\forall b_i ((b_i,a) \in R): \|b_i\| = 0$
		\\$\Longleftrightarrow$ $\nexists c((c,a)\in R)$ or $h_\odot^{ec_1}(\|b_1\|, \|b_2\|,\dots,\|b_k\|) = (1-\|b_1\|)\odot (1-\|b_2\|)\odot\dots\odot (1-\|b_k\|)=1$
		\\$\Longleftrightarrow$ 
		\begin{equation*}
			\|a\| =\begin{cases}
				1, &\nexists c((c,a)\in R),\\
				(1-\|b_1\|)\odot (1-\|b_2\|)\odot\dots\odot (1-\|b_k\|), &\text{otherwise},
			\end{cases}
		\end{equation*}
		i.e., Equation \ref{odot} holds under this case.
		\item Case 3, $\|a\|=\frac{1}{2}$ under the complete semantics.\\
		$\|a\|=\frac{1}{2}$ under the complete semantics
		\\$\Longleftrightarrow$  $\exists b_i ((b_i,a) \in R): \|b_i\| = \frac{1}{2}$ (i.e., $N(\|b_i\|) = \frac{1}{2}$) and $\nexists b_j ((b_j,a) \in R): \|b_j\| = 1$ (i.e., $N(\|b_j\|) = 0$)
			\\$\Longrightarrow$ $h_\odot^{ec_1}(\|b_1\|, \|b_2\|,\dots,\|b_k\|) = (1-\|b_1\|)\odot (1-\|b_2\|)\odot\dots\odot (1-\|b_k\|)=\frac{1}{2}$
		\\$\Longleftrightarrow$ $\|a\|=h_\odot^{ec_1}(\|b_1\|, \|b_2\|,\dots,\|b_k\|)$,
		i.e., Equation \ref{odot} holds under this case.
	\end{itemize}
	From the three cases, for a given model $\|\cdot\|$ of an $AF$ under the complete semantics, $\|\cdot\|$ is a model of the $AF$ under the $Eq_\odot^{ec_1}$.
\end{proof}
Suppose that the $PL_{[0,1]}$ in Theorem \ref{thm7} is equipped with a standard negation, a $\frac{1}{2}$-idempotent t-norm without zero divisors and an R-implication. We denote the associated encoded semantics by $Eq_\circleddash^{ec_1}$. Then from Theorem \ref{thm16} and Theorem \ref{idem}, we have a corollary below.
\begin{corollary}
	For a given $AF$, we have:
\begin{equation*}
		\Big\{\, \|\cdot\| \ \Big|\ \|\cdot\| \text{ is a model under complete semantics} \Big\} 
	= 
	\Big\{\, \|\cdot\|_{\text{com}} \ \Big|\ \|\cdot\| \text{ is a model under } Eq_{\circleddash}^{ec_1} \Big\}.
\end{equation*}
\end{corollary}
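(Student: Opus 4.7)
The plan is to prove the two inclusions separately, using Theorem~\ref{thm16} for $\supseteq$ and Theorem~\ref{idem} for $\subseteq$, and then observing that for a complete labelling $\|\cdot\|$ one has $\|\cdot\|_{\text{com}}=\|\cdot\|$.

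For the inclusion $\supseteq$, I would take an arbitrary $\|\cdot\|$ in the model set of $Eq_\circleddash^{ec_1}(AF)$. Since $\circleddash$ is by assumption a $\frac{1}{2}$-idempotent t-norm without zero divisors and is paired with the standard negation and the corresponding R-implication, it fulfills the hypotheses of Theorem~\ref{thm16} (whose proof crucially invokes the zero-divisor-free property in Case~1). Hence $\|\cdot\|_{\text{com}}=T_{\text{com}}(\|\cdot\|)$ is a model of the $AF$ under complete semantics, which places $\|\cdot\|_{\text{com}}$ in the left-hand set.

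For the inclusion $\subseteq$, I would start with a complete-semantics model $\|\cdot\|$. Because $\circleddash$ is in particular a $\frac{1}{2}$-idempotent t-norm with standard negation, Theorem~\ref{idem} applies with $\odot$ instantiated to $\circleddash$, so $\|\cdot\|$ itself is a model under $Eq_\circleddash^{ec_1}$. To conclude that $\|\cdot\|$ lies in the right-hand set, it suffices to exhibit it as $\|\cdot'\|_{\text{com}}$ for some $Eq_\circleddash^{ec_1}$-model $\|\cdot'\|$; the natural choice is $\|\cdot'\|=\|\cdot\|$, and the claim reduces to the fixed-point identity $T_{\text{com}}(\|\cdot\|)=\|\cdot\|$ on complete labellings. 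I would verify this by matching, case by case, the three clauses of Definition~\ref{defn21} against the three clauses of Definition~\ref{defn7}: $T_{\text{com}}(\|\cdot\|)(a)=1$ iff $\|a\|=1$ is the defining clause; $T_{\text{com}}(\|\cdot\|)(a)=0$ iff some attacker is labelled $1$ under $\|\cdot\|$, which by the complete-labelling condition is exactly the condition for $\|a\|=0$; the remaining case $\frac{1}{2}$ is then forced for both.

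The main obstacle is this fixed-point verification, since Theorems~\ref{thm16} and \ref{idem} already supply the heavy lifting in each direction. But because the trichotomy built into the ternarization $T_{\text{com}}$ is essentially a literal transcription of the complete-labelling trichotomy (restricted to labels that are already in $\{0,\frac{1}{2},1\}$), the identification $T_{\text{com}}(\|\cdot\|)=\|\cdot\|$ is really just an unpacking of definitions, after which the two inclusions combine to give the stated set equality.
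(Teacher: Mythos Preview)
Your proposal is correct and follows essentially the same approach as the paper, which simply cites Theorems~\ref{thm16} and~\ref{idem} without further detail. You supply the one step the paper leaves implicit, namely the fixed-point identity $T_{\text{com}}(\|\cdot\|)=\|\cdot\|$ for a complete labelling, and your case analysis for it is sound.
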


\section{Conclusion}
This paper advances the encoding methodology of $AF$s by systematically exploring their connections with diverse $PLS$s, aiming to strengthen the theoretical bridge between $AF$ theory and logical systems while providing novel approaches for constructing argumentation semantics.

To generalize the encoding methodology beyond classical $PL_2$, we encode $AF$s to $PL_3$s and $PL_{[0,1]}$s, employing two key encodings: the normal encoding ($ec_1$) and the regular encoding ($ec_2$). For 3-valued semantics, we established rigorous model relationships: stable semantics is equivalent to the encoded semantics associated with $PL_3^K$ and $ec_1$, while complete semantics corresponds to the encoded semantics of $PL_3^L$ and $ec_1$. For the regular encoding ($ec_2$), model relationships between complete semantics and the encoded semantics associated with $PL_3^K$ and $PL_3^L$ are further clarified via binarization and ternarization functions, expanding the applicability of $PL_3$s in argumentation analysis.

In the $[0,1]$-valued situations, we obtain the fuzzy encoded semantics by encoding $AF$s to $PL_{[0,1]}$s and explore their connections with Gabbay's real equational semantics. We show that continuous fuzzy normal encoded equational systems are special cases of real equational systems, while highlighting two key advantages of the proposed encoding methodology: first, encoded equational functions inherently satisfy decreasing monotonicity—a property not guaranteed by real equational functions; second, it leverages existing knowledge of $PLS$s to conveniently construct semantics that align with argumentation characteristics. We further demonstrate that Gabbay's real equational semantics $Eq_{\text{max}}^R$ and $Eq_{\text{inverse}}^R$ correspond exactly to the fuzzy encoded semantics associated with $PL_{[0,1]}^G$ and $PL_{[0,1]}^P$, respectively. Notably, we introduce a new fuzzy encoded semantics ($Eq^L$) associated with $PL_{[0,1]}^L$, which exhibits distinct features in evaluating argument acceptability by aggregating the strength of attackers.

Furthermore, we investigate the interplay between complete semantics and fuzzy encoded semantics, showing that ternarization of models under fuzzy encoded semantics associated with zero-divisor-free t-norms yields models of complete semantics, and that models of complete semantics can be embedded into specific fuzzy encoded semantics (equipped with $\frac{1}{2}$-idempotent t-norms). These results solidify the logical foundation of $AF$ semantics by grounding them in well-established logical systems.

Overall, this work generalizes the encoding methodology of $AF$s, establishes precise correspondences between argumentation semantics and encoded semantics in $PLS$s, and introduces a flexible framework for generating new argumentation semantics via logical encodings. Future research will focus on encoding higher-level $AF$s and bipolar $AF$s with support relations, further extending the general encoding methodology to these extended argumentation models.

\section*{Declarations}
\begin{itemize}
\item \textbf{Funding} No funds, grants, or other support was received.
\item \textbf{Competing interests} The authors have no competing interests to declare that are relevant to the content of this article.
\item \textbf{Ethics approval and consent to participate} Not applicable.
\item \textbf{Consent for publication} Not applicable.
\item \textbf{Data availability} Not applicable. 
\item \textbf{Materials availability} Not applicable. 
\item \textbf{Code availability} Not applicable. 
\end{itemize}

\begin{appendices}
	\section{Proofs of model equivalence for complete semantics}\label{secA1}
 We formally restate theorems of model equivalence related to complete semantics and then provide rigorous proofs. 
	
\begin{theorem}\label{thm4.3}
	For a given $AF=(A, R)$, if an assignment $\|\cdot\|$ is a model of $ec_2(AF)$ in $PL_2$, then $\|\cdot\|_{\text{com}}$ is a model of the $AF$ under complete semantics. 
\end{theorem}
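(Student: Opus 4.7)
The plan is to start from the definition of being a $PL_2$-model of $ec_2(AF)$ and extract two pointwise implications: from the first conjunct, $\|a\|=1$ forces $\|b\|=0$ for every attacker $b$ of $a$; from the biconditional in the second conjunct (read classically in $PL_2$), $\|a\|=1$ holds if and only if every attacker $b$ of $a$ has at least one attacker $c$ with $\|c\|=1$. Writing $\|\cdot\|_{\text{com}}$ for $T_{\text{com}}(\|\cdot\|)$ and noting that the assignment takes values in $\{0,1\}$, I would then verify the three clauses of Definition \ref{defn7} by a case split on $\|a\|_{\text{com}}$.

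In the case $\|a\|_{\text{com}}=1$, the definition of $T_{\text{com}}$ yields $\|a\|=1$. The first conjunct gives $\|b\|=0$ for every attacker $b$ of $a$, and the second conjunct supplies, for each such $b$, some $(c,b)\in R$ with $\|c\|=1$; by the definition of $T_{\text{com}}$ this forces $\|b\|_{\text{com}}=0$, matching the complete-labelling condition for value $1$. In the case $\|a\|_{\text{com}}=0$, the definition of $T_{\text{com}}$ immediately supplies an attacker $b_i$ with $\|b_i\|=1$, hence $\|b_i\|_{\text{com}}=1$, again matching the complete condition.

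The case $\|a\|_{\text{com}}=\frac{1}{2}$ is where the real work lies. Unfolding the definition of $T_{\text{com}}$, one has $\|a\|\neq 1$ (so $\|a\|=0$ in $PL_2$) and no attacker of $a$ carries $\|\cdot\|=1$. I must exhibit at least one attacker with $\|\cdot\|_{\text{com}}=\frac{1}{2}$ and rule out any attacker being labelled $1$. Applying the biconditional in its contrapositive form to $\|a\|=0$ yields some $(b,a)\in R$ all of whose attackers $c$ satisfy $\|c\|=0$. Since no attacker of $a$ is labelled $1$ in $\|\cdot\|$, also $\|b\|\neq 1$, so $\|b\|_{\text{com}}\neq 1$; and since none of $b$'s attackers have value $1$, $\|b\|_{\text{com}}\neq 0$; hence $\|b\|_{\text{com}}=\frac{1}{2}$, completing the ``otherwise'' clause.

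The main obstacle is exactly this last case: one has to combine the second conjunct of $ec_2(AF)$ (in contrapositive) with the three-valued bookkeeping of $T_{\text{com}}$ to produce an attacker of $a$ that actually sits at level $\frac{1}{2}$, rather than merely certifying that some attacker is not labelled $1$. The other two cases reduce to direct applications of the two conjuncts and the definition of the ternarization.
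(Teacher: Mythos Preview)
Your proposal is correct and follows essentially the same three-case structure as the paper's proof. The only notable difference is in the handling of Case~3 ($\|a\|_{\text{com}}=\tfrac{1}{2}$): the paper establishes Cases~1 and~2 as genuine biconditionals (e.g., $\|a\|_{\text{com}}=1 \Longleftrightarrow \nexists b((b,a)\in R)$ or $\forall b_i((b_i,a)\in R):\|b_i\|_{\text{com}}=0$) and then obtains Case~3 purely by negating those two equivalences, whereas you work directly from $\|a\|=0$ and the contrapositive of the second conjunct to manufacture the specific attacker $b$ with $\|b\|_{\text{com}}=\tfrac{1}{2}$. Your route is slightly more explicit about why such a $b$ exists; the paper's route is a bit slicker since once the biconditionals are in hand, Case~3 is pure Boolean bookkeeping and requires no further appeal to the encoding. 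Either way the argument goes through, and the ingredients (the two conjuncts of $ec_2$, the definition of $T_{\text{com}}$, and the fact that in $PL_2$ the assignment is $\{0,1\}$-valued) are identical.
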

		\begin{proof}
				If an assignment $\|\cdot\|$ is a model of $ec_2(AF)$ in $PL_2$, then we have:
			\begin{equation*}
				\|ec_2(AF)\|=\|\bigwedge_{a\in A}((a\rightarrow\bigwedge_{(b,a)\in R}\neg b)\wedge(a\leftrightarrow\bigwedge_{(b,a)\in R}\bigvee_{(c,b)\in R}c))\|=1
			\end{equation*}
			$\Longleftrightarrow \forall a\in A,$
			\begin{equation*}
				\|a\rightarrow\bigwedge_{(b,a)\in R}\neg b\|=\|a\leftrightarrow\bigwedge_{(b,a)\in R}\bigvee_{(c,b)\in R}c\|=1
			\end{equation*}
			$\Longleftrightarrow \forall a\in A,$
			\begin{equation}\label{ap1}
				\|a\rightarrow\bigwedge_{(b,a)\in R}\neg b\|=1
			\end{equation}
			and
			\begin{equation}\label{ap2}
				\|a\|=\|\bigwedge_{(b,a)\in R}\bigvee_{(c,b)\in R}c\|.
			\end{equation}
		$\Longleftrightarrow \forall a\in A,$
		\begin{equation*}
			\|a\|=
			\begin{cases}
				0, & \exists b_i ((b_i,a)\in R): [\nexists c ((c, b_i)\in R)\text{ or }\forall c_j ((c_j,b_i)\in R): \|c_j\|=0], \\
				1, & \nexists b_i ((b_i,a)\in R)\text{ or }\forall b_i((b_i,a)\in R): [\|b_i\|=0 \text{ and } \exists c_j ((c_j,b_i)\in R): \|c_j\|=1].
			\end{cases}
		\end{equation*}
		
		Next, we prove that $\|\cdot\|_{\text{com}}$ is a complete labelling according to Definition \ref{defn7}. We need to discuss three cases.
		
		\begin{itemize}
			\item Case 1, $\|a\|_{\text{com}}=1$.\\
			$\|a\|_{\text{com}}=1$
			\\$\Longleftrightarrow$ $\|a\|=1$ by the definition of $T_{\text{com}}$
			\\$\Longleftrightarrow$ $\nexists b_i ((b_i,a)\in R)$ or $\forall b_i((b_i,a)\in R): [\|b_i\|=0$ and $\exists c_j ((c_j,b_i)\in R): \|c_j\|=1]$
			\\$\Longleftrightarrow$ $\nexists b_i ((b_i,a)\in R)$ or $\forall b_i((b_i,a)\in R): [\|b_i\|_{\text{com}}=0$ and $\exists c_j ((c_j,b_i)\in R): \|c_j\|_{\text{com}}=1]$
			\\$\Longleftrightarrow$ $\nexists b_i ((b_i,a)\in R)$ or $\forall b_i((b_i,a)\in R): \|b_i\|_{\text{com}}=0$ by the equivalence between [$\|b_i\|_{\text{com}}=0$] and [$\exists c_j ((c_j,b_i)\in R): \|c_j\|_{\text{com}}=1$].
			
			Thus $\|a\|_{\text{com}}=1$ satisfies complete semantics under Case 1.
			\item 	Case 2, $\|a\|_{\text{com}}=0$.\\
			$\|a\|_{\text{com}}=0$
			\\$\Longleftrightarrow$ $\exists b_i ((b_i,a)\in R): \|b_i\|_{\text{com}}=\|b_i\|=1$, by the definition of $T_{\text{com}}$.
			
			Thus $\|a\|_{\text{com}}=0$ satisfies complete semantics under Case 2.		
			\item 	Case 3, $\|a\|_{\text{com}}=\frac{1}{2}$.\\
			$\|a\|_{\text{com}}=\frac{1}{2}$
			\\$\Longleftrightarrow$ $\|a\|_{\text{com}}\neq1$ and $\|a\|_{\text{com}}\neq0$
			\\$\Longleftrightarrow$ not Case 1 and not Case 2
			\\$\Longleftrightarrow$ not [$\nexists b_i ((b_i,a)\in R)$ or $\forall b_i((b_i,a)\in R): \|b_i\|_{\text{com}}=0$] and not [$\exists b_i ((b_i,a)\in R): \|b_i\|_{\text{com}}=\|b_i\|=1$]
			\\$\Longleftrightarrow$ $\exists b_i ((b_i,a)\in R): \|b_i\|_{\text{com}}\neq0$ and $\nexists b_i ((b_i,a)\in R): \|b_i\|_{\text{com}}=\|b_i\|=1$
		   \\$\Longleftrightarrow$ $\exists b_i ((b_i,a)\in R): \|b_i\|_{\text{com}}=\frac{1}{2}$ and $\nexists b_i ((b_i,a)\in R): \|b_i\|_{\text{com}}=\|b_i\|=1$. 
		   Thus $\|a\|_{\text{com}}=0$ satisfies complete semantics under Case 3.
		\end{itemize}
		Hence $\|\cdot\|_{\text{com}}$ is a complete labelling of the $AF$, i.e., $T_{\text{com}}(\|\cdot\|)$ is a model of the $AF$ under complete semantics.
	\end{proof}
	\begin{theorem}\label{thm4.2}
		For a given $AF=(A, R)$, if an assignment $\|\cdot\|$ is a model of the $AF$ under complete semantics, then $\|\cdot\|_2$ is a model of $ec_2(AF)$ in $PL_2$. 
	\end{theorem}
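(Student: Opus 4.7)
The plan is to verify, atom by atom, that both conjuncts of $ec_2(AF)$ evaluate to $1$ under $\|\cdot\|_2$. Since the binarization satisfies $\|a\|_2 = 1$ iff $\|a\| = 1$ and $\|a\|_2 = 0$ iff $\|a\| \in \{0, \tfrac{1}{2}\}$, and since the complete-semantics condition in Definition \ref{defn7} produces a clean three-way case split on $\|a\|$, I would organize the argument by splitting on the value $\|a\|$ takes in the complete labelling, and checking each conjunct separately.

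First I would dispatch the conjunct $a \rightarrow \bigwedge_{(b,a)\in R} \neg b$. If $\|a\|_2 = 0$ the implication is vacuously $1$ in $PL_2$. If $\|a\|_2 = 1$, then $\|a\| = 1$, and the complete-labelling condition forces every attacker $b_i$ of $a$ to satisfy $\|b_i\| = 0$, hence $\|b_i\|_2 = 0$ and $\|\neg b_i\|_2 = 1$; if $a$ has no attackers the conjunction is the empty conjunction $\top = 1$. Either way the implication holds.

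For the biimplication $a \leftrightarrow \bigwedge_{(b,a)\in R}\bigvee_{(c,b)\in R} c$, I would prove that both sides agree by walking through the three values of $\|a\|$. The cases $\|a\| = 1$ and $\|a\| = 0$ are essentially mechanical: when $\|a\| = 1$, each attacker $b_i$ is either absent (empty conjunction $= \top$) or counter-attacked by some $c_j$ with $\|c_j\| = 1$, so every inner disjunction is $1$ under $\|\cdot\|_2$; when $\|a\| = 0$, some attacker $b_i$ has $\|b_i\| = 1$, and the complete condition applied to $b_i$ forces either $b_i$ to have no attackers (inner disjunction $= \bot = 0$) or all attackers of $b_i$ to have value $0$, so that inner disjunction is $0$ and the outer conjunction collapses to $0 = \|a\|_2$.

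The main obstacle is the case $\|a\| = \tfrac{1}{2}$, where $\|a\|_2 = 0$ and I must argue that the right-hand conjunction is also $0$. The plan is to pick an attacker $b_i$ of $a$ with $\|b_i\| = \tfrac{1}{2}$ (guaranteed by the complete-labelling clause for $\tfrac{1}{2}$). Applying the complete-labelling condition to this $b_i$ shows that $b_i$ must itself have at least one attacker—otherwise $\|b_i\| = 1$—and that no attacker $c$ of $b_i$ can satisfy $\|c\| = 1$, otherwise we would have $\|b_i\| = 0$. Hence every attacker $c$ of $b_i$ has $\|c\|_2 = 0$, the inner disjunction $\bigvee_{(c,b_i)\in R} c$ evaluates to $0$, and therefore the outer conjunction is $0 = \|a\|_2$. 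Combining the three cases shows the biimplication holds for every $a \in A$; together with the first conjunct this gives $\|ec_2(AF)\|_2 = 1$, as required.
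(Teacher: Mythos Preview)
Your proposal is correct and follows essentially the same approach as the paper: verify the two conjuncts of $ec_2(AF)$ under $\|\cdot\|_2$ by case analysis on the value of $\|a\|$, using the complete-labelling condition first on $a$ and then on a suitably chosen attacker $b_i$ to control the inner disjunction $\bigvee_{(c,b_i)\in R} c$. The only structural difference is that the paper merges your cases $\|a\|=0$ and $\|a\|=\tfrac12$ into a single case $\|a\|_2=0$, observing uniformly that $\|a\|\neq 1$ yields an attacker $b_i$ with $\|b_i\|\neq 0$, whence (by the contrapositive of the $0$-clause in Definition~\ref{defn7}) no attacker of $b_i$ has value $1$; this is exactly what your two subcases establish separately.
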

	
	\begin{proof}
		For the $AF=(A, R)$, if an assignment $\|\cdot\|$ is a model of the $AF$ under complete semantics, then	for all $a \in A$:
		\begin{equation*}
			\|a\| = 
			\begin{cases} 
				1 & \text{iff } \nexists (b,a) \in R \text{ or } \forall b((b,a) \in R): \|b\|= 0; \\ 
				0 & \text{iff } \exists (b,a) \in R: \|b\| = 1; \\ 
				\frac{1}{2} & \text{otherwise}.
			\end{cases}
		\end{equation*}
		and then for all $a \in A$:
		\begin{equation*}
			\|a\|_2 = 
			\begin{cases} 
				1 & \text{iff } \nexists (b,a) \in R \text{ or } \forall b((b,a) \in R): \|b\|= 0; \\ 
				0 & \text{otherwise}.
			\end{cases}
		\end{equation*}
	Since an assignment is a model of $ec_2(AF)$ in $PL_2$ iff it is a model of Equations \ref{ap1} and \ref{ap2} from the proof of Theorem \ref{thm4.3}, we only need to prove that $\|\cdot\|_2$ is a model of Equations \ref{ap1} and \ref{ap2}. 
	\begin{itemize}
		\item Case 1, $\|a\|_2 = 1$.
		\\$\|a\|_2 = \|a\|=1$
		\\$\Longleftrightarrow$ $\nexists (b,a) \in R$ or $\forall b_i( (b_i,a) \in R): \|b_i\|= 0$
		\\$\Longleftrightarrow$ $\nexists (b,a) \in R$ or $\forall b_i( (b_i,a) \in R)$: [$\|b_i\|= 0$ and $\exists c((c,b_i) \in R): \|c\| = 1$] by the equivalence between [$\|b_i\|= 0$] and [$\exists c((c,b_i) \in R): \|c\| = 1$] under complete labelling $\|\cdot\|$
 		\\$\Longrightarrow$ $\nexists (b,a) \in R$ or $\forall b_i( (b_i,a) \in R)$: [$\|b_i\|_2= 0$ and $\exists c((c,b_i) \in R): \|c\|_2 = 1$]
 		\\$\Longleftrightarrow$ Equations \ref{ap1} and \ref{ap2} hold.
		\item Case 2, $\|a\|_2 = 0$.
		\\$\|a\|_2 = 0$
		\\$\Longleftrightarrow$ $\|a\| = 0$ or $\|a\| = \frac{1}{2}$
		\\$\Longleftrightarrow$ $\|a\| \neq 1$
		\\$\Longleftrightarrow$ not [$\nexists b ((b,a) \in R)$ or $\forall b_i((b_i,a) \in R): \|b_i\|= 0$]
		\\$\Longleftrightarrow$ $\exists b_i((b_i,a) \in R): \|b_i\|\neq 0$
		\\$\Longleftrightarrow$ $\exists b_i((b_i,a) \in R)$$\nexists c((c,b_i)\in R )$: $\|c\|= 1$ 
		\\$\Longleftrightarrow$ $\exists b_i((b_i,a) \in R)$: [$\nexists c((c,b_i)\in R )$] or [$\forall c((c,b_i)\in R )$: $\|c\|\neq1$] 
		\\$\Longleftrightarrow$ $\exists b_i((b_i,a) \in R)$: [$\nexists c((c,b_i)\in R )$] or [$\forall c((c,b_i)\in R )$: $\|c\|_2= 0$] 
		\\$\Longleftrightarrow$ Equations \ref{ap1} and \ref{ap2} hold.
	\end{itemize}
	\end{proof}
	
\end{appendices}

\end{document}